\documentclass{article}

    \PassOptionsToPackage{numbers}{natbib}



\usepackage[final]{neurips_2025}


\usepackage[utf8]{inputenc} 
\usepackage[T1]{fontenc}    
\usepackage[colorlinks=true, allcolors=blue]{hyperref}
\usepackage{url}            
\usepackage{booktabs}       
\usepackage{amsfonts}       
\usepackage{nicefrac}       
\usepackage{microtype}      
\usepackage{xcolor}         
\usepackage{halloweenmath} 
\usepackage[capitalize]{cleveref}

\usepackage{amsmath}
\usepackage{amssymb}
\usepackage{amsthm}
\usepackage{bm}
\usepackage{dsfont}
\usepackage{graphicx}
\usepackage{subfig}
\usepackage{pdflscape}
\usepackage{multirow}
\usepackage{multicol} 
\usepackage{tabularx}
\usepackage{listings}
\usepackage{longtable}
\usepackage{pdflscape}
\usepackage{subcaption}
\usepackage{float}
\usepackage[flushleft, para]{threeparttable}
\usepackage{lscape}
\usepackage{rotating}
\usepackage{csquotes}
\usepackage{chngcntr}
\usepackage[singlelinecheck=off,font=small]{caption}
\usepackage{enumerate}
\usepackage{array, makecell}%
\usepackage{comment}
\usepackage{natbib}
\usepackage[page,header]{appendix}
\usepackage{titletoc}
\usepackage{cooltooltips}
\usepackage{graphicx}
\usepackage{url}
\usepackage{colortbl}
\usepackage{color}
\usepackage{tikz}
\usetikzlibrary{positioning,arrows.meta,calc,fit}
\usepackage{mathtools}
\usepackage{wrapfig}
\usepackage{algorithm}
\usepackage{algorithmic}


\theoremstyle{plain}
\newtheorem{theorem}{Theorem} 

\newtheorem{lemma}{Lemma}
\newtheorem{corollary}{Corollary}

\theoremstyle{definition}
\newtheorem{definition}{Definition}

\theoremstyle{remark}

\crefname{theorem}{Theorem}{Theorems}
\crefname{proposition}{Proposition}{Propositions}
\crefname{lemma}{Lemma}{Lemmas}
\crefname{corollary}{Corollary}{Corollaries}
\crefname{definition}{Definition}{Definitions}
\crefname{example}{Example}{Examples}
\crefname{assumption}{Assumption}{Assumptions}
\crefname{remark}{Remark}{Remarks}




\newcommand{\R}{\mathbb{R}}
\newcommand{\Rp}{\mathbb{R}^p}
\newcommand{\notp}{q}
\newcommand{\ob}{\bm{\omega}}
\newcommand{\gb}{\bm{\Gamma}}
\newcommand{\gammab}{\bm{\gamma}}
\newcommand{\gbodot}{\bm{\gamma}^{\odot}}
\newcommand{\wb}{\mathbf{w}}
\newcommand{\xb}{\mathbf{x}}
\newcommand{\wtil}{\widetilde{\wb}}
\newcommand{\wneg}{\mathbf{v}}
\newcommand{\omegj}{\bm{\omega}_j}
\newcommand{\omegjhat}{\hat{\bm{\omega}}_j}
\newcommand{\gjd}{\gamma_{j,d}}

\newcommand{\gpij}{\gamma_j^{\odot}}

\newcommand{\wj}{\mathbf{{w}}_j}
\newcommand{\w}{\mathrm{w}}
\newcommand{\what}{\hat{\wb}}
\newcommand{\obhat}{\hat{\ob}}
\newcommand{\gbodothat}{\hat{\bm{\gamma}}^{\odot}}
\newcommand{\oghat}{(\obhat, \hat{\gb})}
\newcommand{\Rog}{\mathcal{R}} 
\newcommand{\Rw}{\mathcal{R}_{\wb}}
\newcommand{\sumj}{\sum_{j=1}^{J}}

\newcommand{\Loss}{\mathcal{L}}
\newcommand{\Log}{\Loss(\ob,\gb)} 
\newcommand{\Lw}{\Loss_{\wb}(\wb)}
\newcommand{\Lnull}{\Loss_0}

\newcommand{\gating}{\,\blacktriangleright\,}

\newcommand{\obgb}{(\ob, \gb)}
\newcommand{\ogg}{\ob \gating \gbodot}

\bibliographystyle{plainnat}

\title{Differentiable Sparsity via $D$-Gating:\\ Simple and Versatile Structured Penalization}

%

\author{%
  Chris Kolb\textsuperscript{1,2}\thanks{Author correspondence to \texttt{chris.kolb@stat.uni-muenchen.de}}\;,
  Laetitia Frost\textsuperscript{1},
  Bernd Bischl\textsuperscript{1,2},
  and David R\"ugamer\textsuperscript{1,2} \\
  \textsuperscript{1}Department of Statistics, LMU Munich\\
  \textsuperscript{2}Munich Center for Machine Learning (MCML)\\
}

\begin{document}

\maketitle

\begin{abstract}
  Structured sparsity regularization offers a principled way to compact neural networks, but its non-differentiability breaks compatibility with conventional stochastic gradient descent and requires either specialized optimizers or additional post-hoc pruning without formal guarantees. In this work, we propose $D$-Gating, a fully differentiable structured overparameterization that splits each group of weights into a primary weight vector and multiple scalar gating factors. We prove that any local minimum under $D$-Gating is also a local minimum using non-smooth structured $L_{2,2/D}$ penalization, and further show that the $D$-Gating objective converges at least exponentially fast to the $L_{2,2/D}$–regularized loss in the gradient flow limit. Together, our results show that $D$-Gating is theoretically equivalent to solving the original group sparsity problem, yet induces distinct learning dynamics that evolve from a non-sparse regime into sparse optimization. We validate our theory across vision, language, and tabular tasks, where $D$-Gating consistently delivers strong performance–sparsity tradeoffs and outperforms both direct optimization of structured penalties and conventional pruning baselines.
\end{abstract}


\section{Introduction}

Sparsity in deep learning models has received considerable attention in recent years. On the one hand, \emph{unstructured} sparsity methods remove individual weights to reduce parameter counts and achieve high compression ratios, but they produce irregular connectivity patterns that are hard to accelerate on standard hardware. On the other hand, \emph{structured} sparsity targets entire groups of parameters, such as neurons \cite{scardapane2017group, wen2016structuredsparsity}, convolutional filters \cite{li2017pruningfiltersefficientconvnets, liu2017learning}, or attention heads \cite{frantar2023sparsegpt, ma2023llm, sun2023simple, voita2019analyzing, zheng2024learn}, yielding coarser sparsity patterns that translate directly into reductions in floating-point operations and memory requirements on existing hardware, which results in more efficient deployment of large models \cite{cheng2024survey,gale2019state}.  

Beyond computational advantages, introducing sparsity can also improve generalization performance \cite{frankle2018lottery} and increase model interpretability \cite{hoefler2021sparsity}. Nevertheless, most popular sparsification techniques in deep learning are not based on the non-smooth $L_{1}$ and $L_{2,1}$ penalties widely used in classical statistics and machine learning \cite{tian2022comprehensive, tibshirani1996regression}, but rather constitute iterative pruning and retraining pipelines \cite{blalock2020state, hoefler2021sparsity, lecun1989optimal}, whose main sparsification mechanism is defined by heuristic pruning criteria like parameter magnitudes. In these methods, the pruning step is decoupled from training, making it difficult to characterize precisely what overall objective is optimized and to provide principled guarantees. Further, the decision space is vast—pruning at initialization \cite{frankle2018lottery,frankle2020pruning,han2015learning,lee2019snip,tanaka2020pruning,wang2020picking}, after training \cite{li2017pruningfiltersefficientconvnets, liu2017learning, lu2022learning, zhang2024sparse}, or sparsification during training \cite{kusupati2020soft, peste2021ac, savarese2020winning}, each with their own subtleties and tradeoffs \citep{cheng2024survey, gale2019state,hoefler2021sparsity}—making it cumbersome for practitioners to select a method that balances efficiency, accuracy, and theoretical soundness.

\paragraph{Sparsity penalties} Structured sparsity penalties such as the $L_{2,1}$ norm are, in theory, capable of eliminating unimportant parameter groups, but in deep learning, they have mostly served as heuristics to steer post-hoc pruning rather than achieve exact sparsity \cite{han2015learning, liu2017learning, wen2016structuredsparsity}. Directly enforcing non-differentiable structured sparsity regularization requires solvers that can cope with its non-smooth nature; if this non-differentiability is ignored, optimization may oscillate or converge to dense, suboptimal solutions, as shown in \cref{fig:toy_zero_convergece}. Replacing standard stochastic gradient descent (SGD) with specialized procedures such as proximal-type algorithms (e.g.\ \cite{deleu2021structured, huang2018data, mardani2018neural}) introduces substantial complexity, demands non-standard hyperparameter configurations, and often the routines are adapted to specific use cases or model classes, thereby foregoing modularity. This renders such approaches cumbersome to implement and inhibits their adoption for large-scale deep learning.




With these obstacles in mind, we ask and positively answer the following main question:

\begin{tikzpicture}
\node [inner sep=-0.1cm, fill=white, draw=black, rounded corners=0.5em] {
\begin{tabular}{p{0.98\linewidth}}
\vspace{0.01cm}
\textbf{Research question}: Can we design a modular structured sparsity regularization method integrable into any architecture, amenable to SGD, with theoretical guarantees and little practical overhead?
\vspace{0.2cm}
\end{tabular}
};
\end{tikzpicture}

\begin{figure*}[t]
    \centering
    \includegraphics[width=0.75\linewidth]{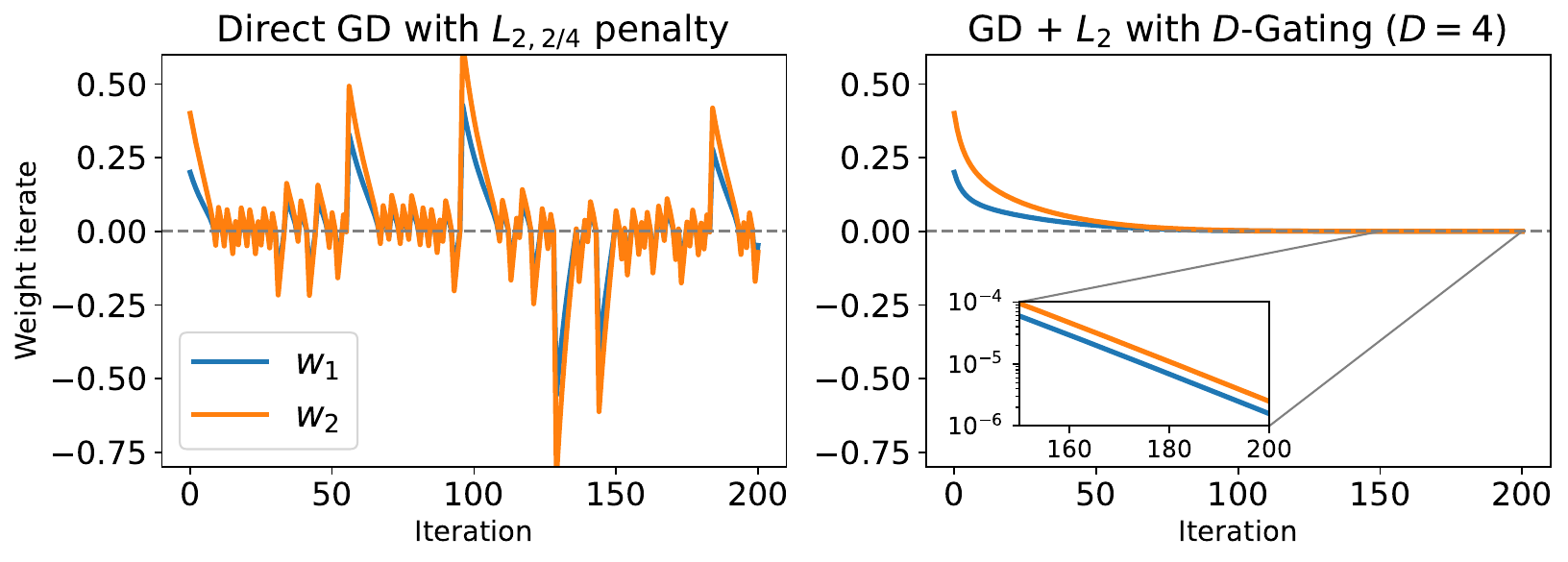}
    \caption[]{Parameter trajectories for a two-feature squared loss toy objective with non-convex $L_{2,2/D}$ regularization $\Loss(\wb)=(y-x_1\w_1-x_2\w_2)^2+\lambda \Vert \wb \Vert_2^{2/D}$ whose global minimizer is $(\mathrm{w}_1^{\ast},\mathrm{w}_2^{\ast})=(0,0)$. \textbf{Left}: 
    Failure of direct gradient descent (GD) optimization to converge to $\bm{0}$ because of the non-differentiability at the origin. \textbf{Right}: $D$-gated objective where $\wb=\ob \cdot \prod_{d=1}^{D-1}\gamma_{d}$, converging smoothly to $\bm{0}$.}
    \label{fig:toy_zero_convergece}
\end{figure*}


\subsection{Related literature}
\vspace{-0.1cm}
\paragraph{Structured sparsity}
A range of methods has been proposed to induce block-wise zeros in neural networks, yet they often rely on either post-hoc pruning or non-standard optimizers. Early work applies the convex $L_{2,1}$ penalty directly to weight groups but optimizes it with vanilla (sub-)gradient descent \cite{scardapane2017group, wen2016structuredsparsity}, which fails to find sparse solutions and must be followed by an explicit pruning step \cite{deleu2021structured}. To remedy the non-differentiability at zero, \cite{deleu2021structured} proposes the use of a proximal algorithm, which we aim to avoid in favor of compatibility with SGD. \cite{bui2021structured} further generalizes the $L_{2,1}$ regularizer to non-convex $L_{2,q},\,q<1,$ penalties using a custom optimizer. Rather than penalizing weight structures directly, \cite{bui2021improving, huang2018data, liu2017learning} introduce shared scaling factors for each group and impose sparsity on those factors instead of the whole parameter group, but still require careful tuning. Although these competitors can yield exact sparsity under certain settings, they either fall back on pruning or abandon standard SGD, motivating our search for a fully differentiable, SGD-compatible alternative.
\vspace{-0.3cm}

\paragraph{Differentiable sparsity}



A possible solution to incorporate sparsity-inducing penalties while retaining differentiability are approaches that split the parameters into multiple components and impose smooth $L_2$ regularization on the factors, which can be shown to induce the desired non-smooth sparsity penalty on the reconstructed parameters. This idea dates back to \cite{grandvalet1998least,hoff2017lasso} and has recently been adopted to incorporate differentiable $L_1$-type sparsity regularization in neural networks \cite{jacobs2025mask, kolb2023smoothing,kolb2025differentiable,kolb2025deep,tibs2021, ziyin2023symmetry, ziyin2023spred}. In the case of matrix (instead of parameter) factorization using two factors, a low-rank bias, given by the trace norm, is induced on the product \cite{kobayashi2024weight, srebro2004learning}. The implicit bias literature also investigates such parameter decompositions \emph{without} $L_2$ regularization, which can also induce sparsity under certain conditions--such as impractically small initialization scales \cite{gunasekar2018implicit, vaskevicius2019implicit, woodworth2020kernel, zhao2022high}. An extension to implicit group sparsity for linear models is presented by \cite{li2023implicit}. However, existing proposals either focus on unstructured sparsity, are constrained to only two factors, or do not constitute modular approaches applicable to arbitrary architectures. This leaves a gap in the current literature on whether extensions to arbitrary structures are possible, how such an approach can be implemented in practice, and to what extent there are theoretical guarantees to back this method.

\subsection{Our contributions}
Inspired by prior work on differentiable sparse regularization, we propose a new approach called $D$-Gating, which constitutes a structured sparsity-inducing penalty approach. It can be modularly incorporated in ``arbitrary'' architectures and neither incurs a notable overhead in additional parameters nor requires additional pruning steps, and is compatible with off-the-shelf SGD optimization. We further establish novel theoretical results that show the equivalence of our proposed differentiable regularization method and non-differentiable sparsity-inducing penalties (cf.~\cref{fig:overview}), akin to what has been shown for approaches with unstructured sparsity penalties. Apart from theoretically and practically studying the loss landscape and training dynamics of our approach, we also validate our theory on an array of experiments to showcase its versatility in diverse deep learning applications. 

\section{Problem statement}

\begin{figure}
    \centering
    \resizebox{1.0\textwidth}{!}{
        \input{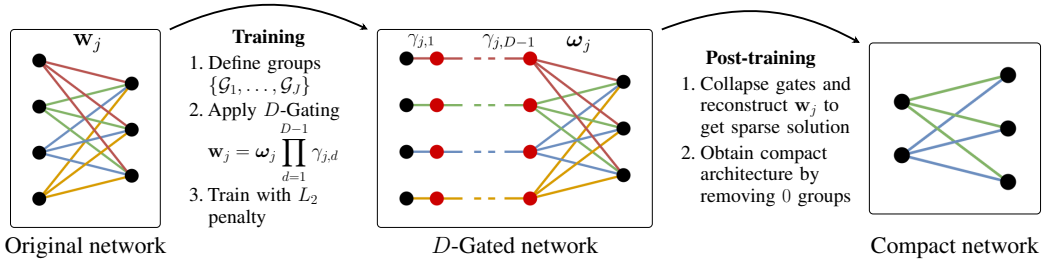}
    }
    \caption{Overview of differentiable $D$-Gating method for structured sparsity (cf.~\cref{alg:d-gating-train}). For simplicity, we show $D$-Gating visually for a single fully-connected layer with input-wise grouping (colors), but our approach extends to arbitrary network components such as convolutional filters or attention heads. We proceed by applying $D$-Gating (red nodes and their connections) to the neural network weight and running SGD on the gating parameters 
    with weight decay. After training, the weights are collapsed again and the zero structures removed, with the resulting sparse minimizers also being minimizers of the non-smooth $L_{2,2/D}$-regularized objective.}
    \label{fig:overview}
\end{figure}

In this paper, we propose a general structured sparsity approach for neural networks $f(\cdot,\wb)$ that allows penalizing excessive network components without placing restrictions on the type of architecture $f$ or the position of the unit within the weight vector $\wb \in \mathbb{R}^p$ that is targeted with the regularization. Structures such as filters naturally arise in neural networks, yielding a partition $\mathcal{G}=\{\mathcal{G}_1,\ldots,\mathcal{G}_J\}$ of a subset of the indices $[p] \coloneqq \{1,\ldots,p\}$ of $\wb$ into $J$ groups $\wb_{\mathcal{G}_j}$ with elements $\mathrm{w}_{j,g}, g\in\mathcal{G}_j$. For filter sparsity in convolutional neural networks, $\mathcal{G}$ would be all indices for weights in the convolutional layers, and each $\mathcal{G}_j$ the indices of weights of one of the $J$ filters.

Given this structure, we seek to optimize a general optimization problem
\begin{equation} \label{eq:orgprob}
    \underset{\wb \in \Rp}{\textnormal{minimize}}\,\,\qquad \Lw := \Lnull(\wb)+\lambda \Vert \wb \Vert_{2,2/D}^{2/D}
\end{equation}
for $D\geq 2$, where the unregularized objective $\Lnull = \sum_{i=1}^n \ell(y_i,  f\big(\mathbf{x}_i, \wb)\big)$ is the sum of $n$ observed loss contributions with loss $\ell: \mathcal{Y}\times\mathcal{Y} \to \mathbb{R}_0^+$ evaluated on independent data points $\left\{(\xb_i,y_i\right)\}_{i=1}^n \in (\mathcal{X}\times\mathcal{Y})^n$. The regularization term in \cref{eq:orgprob} constitutes a generalization of what is often referred to as the group lasso \cite{yuan2006model}, or  $L_{2,1}$ penalty, which is recovered for $D=2$. For $D>2$, we obtain the more general and non-convex group penalty: $\Vert\wb\Vert_{2,2/D}^{2/D} \coloneqq \sum_{j=1}^J (\sum_{g\in\mathcal{G}_j} |\w_{j,g}|^{2})^{1/D}$ \citep{hu2017group}. As this penalty is neither differentiable for $D=2$ nor $D>2$, SGD optimization of \eqref{eq:orgprob} yields unfavorable optimization dynamics and does not achieve exact sparsity (cf.~\cref{fig:toy_zero_convergece}). We therefore either require specialized optimization routines or a surrogate objective which induces the solution to \eqref{eq:orgprob}. We choose the latter to minimize the overhead (cf.~App.~\ref{app:overheads-subset}) and changes to established training procedures.

\section[Differentiable structured sparsity via D-Gating]{Differentiable structured sparsity via $D$-Gating}

To solve \cref{eq:orgprob} while enabling practitioners to use standard SGD optimizers, we derive a fully differentiable method that implicitly tackles \cref{eq:orgprob} by employing an overparameterized model and a smooth surrogate penalty compatible with SGD optimization.

\subsection{Model structure and gating variables}

As we are interested in a general method for structured sparsity where arbitrary subsets of network weights can be sparsified, we assume a parametric learning model 
\begin{equation}\label{eq:model}
f: \mathcal{X} \times \R^{\notp+p} \to \R^c,\quad (\mathbf{x},\wtil) \mapsto f(\mathbf{x}; \wtil),
\end{equation}
with inputs $ \mathbf{x} \in \mathcal{X}$ and parameters $\wtil=(\wneg,\wb)$, where $\wb \in \Rp$ contains the penalized parameters of interest and $\wneg \in \R^{\notp}$ the remaining parameters. Further, $\wb$ is a partitioned (structured) weight object comprising $J$ groups, $\wb = (\wj)_{j=1}^{J} \in \R^{p_1+\ldots+ p_J} = \R^p$. 

To convert the non-smooth optimization problem into a smooth optimization problem, we subdivide $\wb$ into two parts using the following gating operation:
\begin{definition}[$D$-Gating]\label{def:d-gating}
Let $\wb \in \Rp$ and $\mathcal{G}=\{\mathcal{G}_1,\ldots,\mathcal{G}_J\}$ be a partition of the indices $[p]$ of $\wb$ into $J$ groups. Further let $\ob \in \Rp$ be the primary weight of the same size as $\wb$, $\gammab_d \in \R^J$ one of $D-1$ vectors containing group-wise gating factors, and let $\gbodot := \gammab_1 \odot \ldots \odot \gammab_{D-1}$ denote the element-wise product of the gating factors with entries $\gpij=\prod_{d=1}^{D-1} \gjd$ for $j \in [J]$. For brevity, we collect the scaling factors in the matrix
$\gb = \bigl[\bm{\gamma}_{1},\ldots,\bm{\gamma}_{D-1}\bigr] \in \R^{J \times (D-1)}$. The $D$-Gating operation $\gating: \R^p \times \R^{J \times (D-1)} \to \R^p, \obgb \mapsto \ogg$, decomposes $\wb$ as
\begin{equation}\label{eq:gating}
\wb = \ob \gating \gbodot := \Bigl( \omegj\,\textstyle\prod_{d=1}^{D-1}\gamma_{j,d} \Bigr)_{j=1}^{J} = \Bigl( \omegj \cdot \gpij \Bigr)_{j=1}^{J}\,,
\end{equation}
and we call $\wb$ $D$-gated if it is parametrized as $\ogg$.
\end{definition}

Intuitively, the $D$-Gating operation splits each group weight $\wj$ into $D$ factors: the vector $\omegj$, corresponding to the original group weights, and the $D-1$ additional gating factors $\gamma_{j,d} \in \R$, which are applied multiplicatively to all entries of $\omegj$.

\subsection{Differentiable penalty}

Given the gated formulation, we can now impose surrogate $L_2$ regularization on $(\ob,\gb)$,  defined as
\begin{align}\label{eq:objective}
\Loss(\wneg,\ob,\gb) &= \Lnull(\wneg,\ogg) + \lambda\, \Rog(\ob,\gb)\\
&=\sum_{i=1}^{n} \ell\big(y_i, f\big(\mathbf{x}_i, (\wneg,\ogg ) \big)\big)
+\frac{\lambda}{D}\underbrace{\textstyle\sum_{j=1}^{J}\big( \|\omegj\|_2^2 + \textstyle\sum_{d=1}^{D-1}\gamma_{j,d}^2 \big)}_{\Vert \ob \Vert_2^2+\Vert \gb \Vert_F^2} \label{eq:d_gated_obj}
\end{align}
where $\Lnull(\wneg,\wb)$ denotes the unregularized, differentiable loss function with per-sample loss $\ell$. In the following, we will denote local minimizers of the $D$-Gating objective as
\begin{equation}\label{eq:localmin}
(\hat{\wneg},\hat{\ob},\hat{\gb}) \in \underset{(\wneg,\ob,\gb) \in \R^{q+p+J(D-1)}}{\arg \operatorname{loc} \min}\; \Loss(\wneg,\ob,\gb).
\end{equation}
While the function values of \cref{eq:objective} are not necessarily equal to those of \cref{eq:orgprob}, our next section provides a theoretical guarantee of the equivalence of both objectives with regards to their minima and shows that the difference between both objectives is vanishing at least exponentially fast.

\section{Theoretical results}

Because the presence of ungated parameters $\wneg$ is inconsequential for our analyses and all results directly carry over, they will be omitted from now on, and we assume for simplicity of exposition that all model parameters are $D$-gated.

\subsection{Stationarity condition and loss simplification}\label{subsec:balancedness-foc}

The following result establishes that all stationary points of $\Loss(\ob,\gb)$ correspond to balanced $D$-Gating parameters. Otherwise, one could continuously perturb the $D$-Gating parameters toward a more balanced configuration without altering the effective parameter $\wb = \ogg$ while strictly decreasing the $L_2$ penalty $\Rog(\ob,\gb)$.

\begin{lemma}[Balancedness at stationary points]\label{lemma:balancedness_simple}
Let $(\ob,\gb)$ be $D$-Gating parameters satisfying $\wb_j = \omegj\,\prod_{d=1}^{D-1}\gamma_{j,d} \quad \text{for } j \in [J]$. If $(\ob,\gb)$ is a stationary point of the $L_2$-regularized objective $\Loss(\ob,\gb)$ with $\lambda>0$, then the gating factors are group-wise balanced in the sense that
\begin{equation}\label{eq:final_balance_simple}
\|\omegj\|_2^2 = \gamma_{j,1}^2 = \cdots = \gamma_{j,D-1}^2 = \|\wb_j\|_2^{2/D} \quad \forall \, j \in [J].
\end{equation}
\end{lemma}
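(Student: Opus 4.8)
The plan is to establish \cref{eq:final_balance_simple} by analyzing the first-order stationarity conditions of $\Loss(\ob,\gb)$ group by group, exploiting that the loss $\Lnull$ depends on the $D$-gating parameters only through the effective weight $\wb=\ogg$. First I would fix a group $j$ and compute the partial gradients of $\Loss$ with respect to $\omegj$ and each scalar gate $\gamma_{j,d}$. Writing $g_j := \partial \Lnull / \partial \wj \in \R^{p_j}$ for the gradient of the unregularized loss with respect to the effective group weight, the chain rule through \cref{eq:gating} gives, for the primary weight, $\partial \Loss / \partial \omegj = \gpij\, g_j + \tfrac{2\lambda}{D}\,\omegj$, and for each gate $d$, $\partial \Loss / \partial \gamma_{j,d} = \bigl(\prod_{d'\neq d}\gamma_{j,d'}\bigr)\,\langle g_j, \omegj\rangle + \tfrac{2\lambda}{D}\,\gamma_{j,d}$, since differentiating the product $\prod_{d'=1}^{D-1}\gamma_{j,d'}$ with respect to $\gamma_{j,d}$ leaves the product of the remaining gates.

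The key algebraic step is to eliminate the shared data-dependent quantity $\langle g_j, \omegj \rangle$ between the stationarity equations. Setting $\partial \Loss / \partial \omegj = \bm{0}$ and taking the inner product with $\omegj$ yields $\gpij\,\langle g_j,\omegj\rangle = -\tfrac{2\lambda}{D}\|\omegj\|_2^2$. Setting $\partial \Loss / \partial \gamma_{j,d}=0$ and multiplying through by $\gamma_{j,d}$ yields $\bigl(\prod_{d'=1}^{D-1}\gamma_{j,d'}\bigr)\langle g_j,\omegj\rangle = \gpij\,\langle g_j,\omegj\rangle = -\tfrac{2\lambda}{D}\,\gamma_{j,d}^2$ for every $d$. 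Comparing the right-hand sides across all these identities forces $\|\omegj\|_2^2 = \gamma_{j,1}^2 = \cdots = \gamma_{j,D-1}^2$, which is the claimed balancedness. This trick of multiplying each gate equation by its own gate is what converts the raw gradient conditions into a clean common scaling.

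To obtain the final equality $\|\omegj\|_2^2 = \|\wj\|_2^{2/D}$, I would use the balancedness just derived together with the defining relation $\wj = \omegj\,\gpij$. Let $s := \|\omegj\|_2^2 = \gamma_{j,d}^2$ denote the common value. Then $\|\wj\|_2^2 = \|\omegj\|_2^2\,(\gpij)^2 = s\,\prod_{d=1}^{D-1}\gamma_{j,d}^2 = s\cdot s^{D-1} = s^{D}$, so $\|\wj\|_2^{2/D} = s = \|\omegj\|_2^2$, closing the chain of equalities.

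The main obstacle I anticipate is handling the degenerate case in which some gate vanishes, so that $\gpij = 0$ and $\wj = \bm{0}$: then the inner-product quantity $\langle g_j,\omegj\rangle$ no longer appears with a nonzero coefficient in the gate equations, and the elimination argument above breaks down. In that regime the stationarity conditions reduce to $\tfrac{2\lambda}{D}\,\omegj = -\gpij\,g_j = \bm{0}$ and $\tfrac{2\lambda}{D}\,\gamma_{j,d} = \bm{0}$, forcing $\omegj=\bm{0}$ and all $\gamma_{j,d}=0$; hence every quantity in \cref{eq:final_balance_simple} equals zero and the balancedness holds trivially. I would therefore split the argument into the case $\wj \neq \bm{0}$ (where all gates and $\omegj$ are nonzero, justifying the division and inner-product manipulations) and the boundary case $\wj = \bm{0}$, verifying that the full-zero configuration is the only stationary point there so that the stated identity degenerates consistently.
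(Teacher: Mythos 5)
Your proof is correct, but it takes a genuinely different route from the paper. The paper argues by contradiction with a variational perturbation: if two factors were imbalanced, rescaling one by $(1+\varepsilon)$ and the other by $(1-\varepsilon)$ leaves the effective weight $\wj$ unchanged to first order while strictly decreasing the $L_2$ penalty at first order, contradicting stationarity; balancedness then gives the identity with $\|\wj\|_2^{2/D}$ by the same algebra you use. Your proof instead works directly with the first-order conditions: contracting the $\omegj$-equation with $\omegj$ and multiplying each gate equation by its own gate $\gamma_{j,d}$ makes the shared term $\gpij\langle\nabla_{\wj}\Lnull,\omegj\rangle$ appear in every identity, and comparing right-hand sides forces $\|\omegj\|_2^2=\gamma_{j,1}^2=\cdots=\gamma_{j,D-1}^2$. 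This buys you two things over the paper's argument: it is exact (no first-order approximations or ``$\approx$'' steps), and it treats the vector--scalar and scalar--scalar comparisons uniformly, whereas the paper handles the $\omegj$ case by an ``analogous argument.'' It also dovetails with the gradient expressions in \cref{eq:grad_omega_and_gamma} and anticipates the same cancellation mechanism the paper later exploits in the gradient-flow imbalance lemma. One small remark: your case split on $\gpij=0$ is unnecessary, since your elimination never divides by a gate --- when $\gpij=0$ the same comparisons read $-\tfrac{2\lambda}{D}\|\omegj\|_2^2 = 0 = -\tfrac{2\lambda}{D}\gamma_{j,d}^2$ and directly force the all-zero balanced configuration --- but your separate treatment of that case is correct and harmless. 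Note also that both your argument and the paper's implicitly require $\lambda>0$; for $\lambda=0$ balancedness at stationary points fails in general, consistent with imbalance being a conserved quantity in that regime.
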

\vspace{-0.0cm}
Notably, the loss evaluated at balanced parameters simplifies to reveal its sparsity-inducing nature:

\begin{corollary}[Loss simplification at balanced gating parameters]\label{corr:loss_simplification}
Let $(\ob,\gb)$ be balanced $D$-Gating parameters satisfying $\wb_j = \omegj\,\prod_{d=1}^{D-1}\gamma_{j,d}$ and \cref{eq:final_balance_simple} for $j \in [J]$. The $D$-gated objective $\Log$ in \cref{eq:d_gated_obj} then simplifies to
{\small
\begin{equation}\label{eq:loss_simplification}
\Lnull(\ogg)+\frac{\lambda}{D}(\Vert \ob \Vert_2^2+\Vert \gb \Vert_F^2) = \Lnull(\wb)+\lambda \textstyle\sumj \Vert \wb_j \Vert_2^{2/D} = \Lnull(\wb)+\lambda \underbrace{\Vert \wb \Vert_{2,2/D}^{2/D}}_{:=\Rw(\wb)} =: \Lw
\end{equation}
}
\end{corollary}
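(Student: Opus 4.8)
The plan is to observe that the corollary follows from \cref{lemma:balancedness_simple} by direct substitution, with the only genuine content being the algebraic collapse of the balanced $L_2$ penalty. I would split the objective into its two additive pieces---the data-fit term $\Lnull$ and the weight-decay term $\frac{\lambda}{D}(\Vert\ob\Vert_2^2 + \Vert\gb\Vert_F^2)$---and treat each separately. For the data-fit term the argument is immediate: by \cref{def:d-gating} the effective weight satisfies $\wb = \ogg$, and since $\Lnull$ depends on the gated parameters only through this reconstruction, $\Lnull(\ogg) = \Lnull(\wb)$ with no change at all.

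The substantive step is the penalty term. First I would rewrite the Frobenius/Euclidean norms group-wise as
\[
\Vert \ob \Vert_2^2 + \Vert \gb \Vert_F^2 = \sumj \Bigl( \Vert \omegj \Vert_2^2 + \sum_{d=1}^{D-1} \gjd^2 \Bigr),
\]
which is exactly the bracketed decomposition already annotated in \cref{eq:d_gated_obj}. Then, within each group $j$, I would invoke the balancedness relation \cref{eq:final_balance_simple}, which states that all $D$ summands $\Vert \omegj \Vert_2^2, \gamma_{j,1}^2, \ldots, \gamma_{j,D-1}^2$ share the common value $\Vert \wb_j \Vert_2^{2/D}$. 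Summing these $D$ equal terms gives $\Vert \omegj \Vert_2^2 + \sum_{d=1}^{D-1}\gjd^2 = D\,\Vert \wb_j \Vert_2^{2/D}$, so the prefactor $\tfrac{\lambda}{D}$ cancels the $D$ and the per-group contribution collapses to $\lambda\,\Vert \wb_j \Vert_2^{2/D}$.

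Finally, summing over $j$ yields $\lambda \sumj \Vert \wb_j \Vert_2^{2/D}$, which is precisely the definition $\lambda \Vert \wb \Vert_{2,2/D}^{2/D}$ introduced after \cref{eq:orgprob}, establishing the full chain of equalities in \cref{eq:loss_simplification}.

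I do not expect a real obstacle here: the entire result is a consequence of the balance condition, and the only point worth checking for completeness is the internal consistency of \cref{eq:final_balance_simple}---that the common balanced value must indeed be $\Vert \wb_j \Vert_2^{2/D}$. This follows because $\Vert \wb_j \Vert_2^2 = \Vert \omegj \Vert_2^2 \prod_{d=1}^{D-1}\gjd^2$ forces $c^{D} = \Vert \wb_j \Vert_2^2$ whenever all $D$ factors equal a common value $c$, so that $c = \Vert \wb_j \Vert_2^{2/D}$. Since this is already baked into the statement of the lemma, the corollary reduces to the arithmetic above.
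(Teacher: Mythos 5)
Your proof is correct, and it takes a genuinely different (and more elementary) route than the paper's. Since the corollary \emph{assumes} the balance condition \cref{eq:final_balance_simple}, your direct substitution is all that is needed: each group's penalty consists of $D$ equal summands of value $\Vert\wb_j\Vert_2^{2/D}$, so the prefactor $\tfrac{\lambda}{D}$ cancels the $D$, and summing over $j$ gives \cref{eq:loss_simplification}; your consistency check that the common balanced value must indeed be $\Vert\wb_j\Vert_2^{2/D}$ (because $c^D=\Vert\wb_j\Vert_2^2$ when all $D$ squared factors equal $c$) is also sound. The paper instead applies the AM--GM inequality to the $D$ non-negative terms $\Vert\omegj\Vert_2^2,\gamma_{j,1}^2,\dots,\gamma_{j,D-1}^2$ and obtains the general bound
\begin{equation*}
\frac{1}{D}\bigl(\Vert\ob\Vert_2^2+\Vert\gb\Vert_F^2\bigr)\;\ge\;\Vert\wb\Vert_{2,2/D}^{2/D},
\end{equation*}
valid for \emph{arbitrary} gating parameters, with equality if and only if the balancedness condition holds; the corollary is then read off from the equality case. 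What the paper's detour buys is precisely the non-negativity of the misalignment $\mathcal{M}(\ob,\gb)=D^{-1}(\Vert\ob\Vert_2^2+\Vert\gb\Vert_F^2)-\Vert\ogg\Vert_{2,2/D}^{2/D}$ together with its equality characterization, facts that are reused in the proofs of \cref{thm:equivalence} and \cref{lemma:loss_conv}. Both arguments are valid: yours is the leaner proof of the statement as given, while the paper's simultaneously establishes machinery needed downstream.
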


\vspace{-0.3cm}
\subsection{Equivalence of optimization problems} \label{subse:equivalence-optim-problems}

The previous result is reassuring as it demonstrates the equivalence of objectives at balanced gating parameters. It does, however, not guarantee that optimizing one objective provides a meaningful solution for the other objective. The following result establishes equivalence at the solution level.

\begin{theorem}[Equivalence of $D$-Gating and $L_{2,2/D}$ regularization]\label{thm:equivalence}
The two optimization problems
{\normalsize
\begin{align}
\underset{\ob \in \Rp, \gb \in \R^{J \times (D-1)}}{\textnormal{minimize}} \,\, \Log &:=\Lnull(\ogg)+\frac{\lambda}{D}(\Vert \ob \Vert_2^2+\Vert \gb \Vert_F^2) \label{eq:min_smooth} \\
\underset{\wb \in \Rp}{\textnormal{minimize}}\,\,\qquad \Lw &:= \Lnull(\wb)+\lambda \Vert \wb \Vert_{2,2/D}^{2/D} \label{eq:min_nonsmooth}
\end{align}}
are equivalent in the sense that their local minima are identical. If $\oghat \in \arg \operatorname{loc} \min \Log$, then $\obhat \gating \gbodothat = \what \in \arg \operatorname{loc} \min \Lw$, and likewise, if $\what \in \arg \operatorname{loc} \min \Lw$, then any \textit{balanced} gating representation $\oghat$ such that $\what=\obhat \gating \gbodothat$ is a local minimizer of $\Log$.
\end{theorem}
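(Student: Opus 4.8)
The plan is to connect the two objectives through a single pointwise inequality and then use the earlier balancedness results to characterize when that inequality is tight. Writing $\wb=\ogg$ and applying the AM--GM inequality to the $D$ nonnegative numbers $\|\omegj\|_2^2,\gamma_{j,1}^2,\dots,\gamma_{j,D-1}^2$ in each group $j$, together with the identity $\|\wb_j\|_2^2=\|\omegj\|_2^2\prod_{d=1}^{D-1}\gamma_{j,d}^2$, gives $\tfrac{1}{D}\bigl(\|\omegj\|_2^2+\sum_{d=1}^{D-1}\gamma_{j,d}^2\bigr)\ge\|\wb_j\|_2^{2/D}$. Summing over $j$ and adding the common term $\Lnull(\ogg)=\Lnull(\wb)$ yields the key estimate $\Log\ge\Loss_{\wb}(\ogg)$, with equality exactly at balanced configurations (the AM--GM equality case), which is precisely \cref{corr:loss_simplification}. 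Thus the smooth $D$-gating objective dominates the pulled-back non-smooth objective everywhere and coincides with it on the balanced set.

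For the converse direction I would push forward through the continuous gating map. Let $\what$ be a local minimizer of $\Loss_{\wb}$ on a neighborhood $V$, and let $\oghat$ be any balanced representation with $\obhat\gating\gbodothat=\what$. Since $\gating$ is continuous (a coordinatewise product), there is a neighborhood $U$ of $\oghat$ mapped into $V$. For $(\ob,\gb)\in U$ the key estimate gives $\Log\ge\Loss_{\wb}(\ogg)\ge\Loss_{\wb}(\what)$, while \cref{corr:loss_simplification} applied to the balanced $\oghat$ gives $\Loss_{\wb}(\what)=\Loss(\obhat,\hat{\gb})$. Hence $\Loss(\ob,\gb)\ge\Loss(\obhat,\hat{\gb})$ on $U$, i.e.\ $\oghat$ is a local minimizer of $\Log$. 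This direction is easy precisely because we only transport through the continuous map $\gating$.

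The forward direction uses the same inequality in reverse and therefore requires a lift rather than a push-forward. Let $\oghat$ be a local minimizer of $\Log$. Smoothness of $\Log$ makes $\oghat$ stationary, so \cref{lemma:balancedness_simple} renders it balanced and \cref{corr:loss_simplification} gives $\Loss(\obhat,\hat{\gb})=\Loss_{\wb}(\what)$ with $\what=\obhat\gating\gbodothat$. To show $\what$ is locally minimal for $\Loss_{\wb}$, I would construct a continuous balanced section $\Phi$ sending each $\wb$ to a balanced pair $(\ob,\gb)$ with $\ogg=\wb$ and $\Phi(\what)=\oghat$: on each group set $|\gamma_{j,d}|=\|\wb_j\|_2^{1/D}$ with signs inherited from $\gbodothat$, and $\omegj=\wb_j/\prod_{d=1}^{D-1}\gamma_{j,d}$. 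Choosing a neighborhood $V$ of $\what$ with $\Phi(V)\subseteq U$ (the minimizing neighborhood of $\oghat$) then transfers minimality, since $\Loss_{\wb}(\wb)=\Loss(\Phi(\wb))\ge\Loss(\obhat,\hat{\gb})=\Loss_{\wb}(\what)$ for all $\wb\in V$.

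The step I expect to be the main obstacle is establishing continuity of the balanced section $\Phi$ at groups with $\what_j=\bm{0}$. There balancedness forces $\omegjhat=\bm{0}$ and $\hat\gamma_{j,d}=0$, while the defining formula $\omegj=\wb_j/\prod_{d=1}^{D-1}\gamma_{j,d}$ has no well-defined direction as $\wb_j\to\bm{0}$. The resolution is that only magnitudes govern continuity: since $|\gamma_{j,d}|=\|\wb_j\|_2^{1/D}$ and $\|\omegj\|_2=\|\wb_j\|_2^{1/D}$ both vanish as $\wb_j\to\bm{0}$, we have $\Phi(\wb)\to\bm{0}$ on those groups irrespective of the ill-defined limiting direction, matching $\Phi(\what)$ there; on groups with $\what_j\neq\bm{0}$ the section is manifestly smooth. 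Treating this degenerate case rigorously is the crux, and everything else reduces to the elementary inequality of the first paragraph plus continuity of $\gating$ and $\Phi$.
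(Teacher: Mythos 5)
Your proposal is correct and follows essentially the same route as the paper's proof: the same AM--GM/misalignment inequality $\Log \geq \Loss_{\wb}(\ogg)$ with equality on the balanced set, the same push-forward through the continuous map $\gating$ for the direction starting from a local minimizer of $\Lw$, and the same lift via a sign-inheriting balanced section (the paper's construction with $r_j(\wj)=\|\wj\|_2^{1/D}$ and the case split at $\hat\gamma_{j,d}=0$) combined with \cref{lemma:balancedness_simple} and \cref{corr:loss_simplification} for the reverse direction. Your final paragraph on continuity of the section at groups with $\what_j=\bm{0}$ actually spells out a point the paper only asserts, so no gap remains.
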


Specifically, there is a bijective mapping between the local minimizers of $\Lw$ and the equivalence class of local minimizers of $\Log$, resulting in the same effective parameter $\wb$.

\subsection{Optimization dynamics} \label{sec:optdym}

Under ubiquitous (S)GD-based optimization of the $D$-gated objective in Eq.~(\ref{eq:d_gated_obj}), as well as its theoretically simpler continuous-time gradient flow (GF) limit with infinitesimal learning rate $\eta$, we can additionally establish results characterizing the evolution of parameter balancedness, i.e., quantify how fast the $D$-gated objective converges to the original $L_{2,2/D}$ regularized loss.

\subsubsection[Evolution of imbalance and loss convergence for D-gated models under GF dynamics]{Evolution of imbalance  and loss convergence for $D$-gated models under GF dynamics}
The group-wise continuous-time gradient flow dynamics for $j \in [J]$ are given by
\begin{equation}\label{eq:grad_flow_dynamics}
\dot{\ob}_j = -\nabla_{\omegj}\Loss, \quad \dot{\gjd} = -\partial_{\gjd}\Loss.
\end{equation}
The gradients with respect to the $D$-Gating parameters $\omegj$ and the $\gjd$ are, using the chain rule,
{\small
\begin{equation}\label{eq:grad_omega_and_gamma}
\nabla_{\omegj}\Loss = \gpij\,\nabla_{\wj}\Lnull + \frac{2\lambda}{D}\,\omegj,\quad \partial_{\gjd}\Loss = \left(\frac{\gpij}{\gjd}\right)\,\omegj^\top\nabla_{\wj}\Lnull + \frac{2\lambda}{D}\,\gjd\,,\,d\in [D-1]\,.
\end{equation}
}

We define the pair-wise imbalance $\mathcal{I}$ between any two group-wise factors $d\neq d' \in [D]$ and show it vanishes exponentially in time:
\begin{equation}\label{eq:imbalance_revised}
\mathcal{I}_{j,d,d'}(t):=
\begin{cases}
\|\omegj(t)\|_2^2 - \gamma_{j,d'}(t)^2, & \text{if } d=1, \\[1ex]
\gamma_{j,d}(t)^2 - \gamma_{j,d'}(t)^2, & \text{if } d\neq 1.
\end{cases}
\end{equation}

\begin{lemma}[Exponential decay of imbalance under continuous-time GF]\label{lemma:gf_decay}
Under the gradient flow dynamics \cref{eq:grad_flow_dynamics,eq:grad_omega_and_gamma}, the pair-wise imbalance $\mathcal{I}_{j,d,d'}(t)$ \eqref{eq:imbalance_revised} between two gating parameters $d,d'$ of group $j$ satisfies
{\small
\begin{equation}\label{eq:gf_decay}
\frac{\mathrm{d}}{\mathrm{d} t}\mathcal{I}_{j,d,d'}(t) = -\frac{4\lambda}{D}\,\mathcal{I}_{j,d,d'}(t) \,\,\forall d \neq d', j \in [J].
\end{equation}
}
Solving the ODE shows $\mathcal{I}_{j,d,d'}(t)$ decays exponentially for $\lambda \geq 0$:
$\mathcal{I}_{j,d,d'}(t)=\mathcal{I}_{j,d,d'}(0)e^{-\frac{4\lambda}{D}t}$.
\end{lemma}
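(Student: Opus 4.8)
The plan is to differentiate, along the gradient flow \cref{eq:grad_flow_dynamics}, the two squared quantities that appear in each branch of the imbalance, namely $\|\omegj\|_2^2$ and $\gjd^2$, and to show that their data-dependent parts are identical and therefore cancel when the imbalance is formed, leaving only the contribution of the $L_2$ penalty. First I would compute, using the chain rule together with the gradient flow, $\frac{d}{dt}\|\omegj\|_2^2 = 2\,\omegj^\top\dot{\ob}_j = -2\,\omegj^\top\nabla_{\omegj}\Loss$. Inserting the expression for $\nabla_{\omegj}\Loss$ from \cref{eq:grad_omega_and_gamma} gives $\frac{d}{dt}\|\omegj\|_2^2 = -2\,\gpij\,\omegj^\top\nabla_{\wj}\Lnull - \tfrac{4\lambda}{D}\|\omegj\|_2^2$.

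Analogously, for any gate index $d$ I would compute $\frac{d}{dt}\gjd^2 = 2\,\gjd\,\dot{\gjd} = -2\,\gjd\,\partial_{\gjd}\Loss$. Substituting the expression for $\partial_{\gjd}\Loss$ from \cref{eq:grad_omega_and_gamma} and using the algebraic identity $\gjd\cdot(\gpij/\gjd) = \gpij$ (which holds because $\gpij=\prod_{d}\gjd$ already contains the factor $\gjd$, so the identity extends continuously even to $\gjd=0$) yields $\frac{d}{dt}\gjd^2 = -2\,\gpij\,\omegj^\top\nabla_{\wj}\Lnull - \tfrac{4\lambda}{D}\gjd^2$.

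The crux of the argument is the observation that the data-dependent term $-2\,\gpij\,\omegj^\top\nabla_{\wj}\Lnull$ is \emph{the same} in $\frac{d}{dt}\|\omegj\|_2^2$ and in $\frac{d}{dt}\gjd^2$ for every index $d$, i.e.\ it does not depend on which factor we differentiate. Consequently, in both branches of \cref{eq:imbalance_revised} (the $d=1$ branch built from $\|\omegj\|_2^2$ and the $d\neq1$ branch built from $\gjd^2$, in each case minus $\gamma_{j,d'}^2$), subtracting the two time derivatives cancels this loss-gradient term exactly, leaving $\frac{d}{dt}\mathcal{I}_{j,d,d'}(t) = -\tfrac{4\lambda}{D}\,\mathcal{I}_{j,d,d'}(t)$. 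This is a scalar, autonomous, linear ODE, so its solution follows immediately by separation of variables: $\mathcal{I}_{j,d,d'}(t)=\mathcal{I}_{j,d,d'}(0)\,e^{-4\lambda t/D}$, which decays to zero for $\lambda>0$ as claimed.

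I do not expect a genuine obstacle here: the whole result rests on the exact cancellation of the (generically nonzero, data-dependent) loss-gradient term, which is a structural consequence of the chain-rule form of the gradients in \cref{eq:grad_omega_and_gamma} rather than of any property of $\Lnull$, so no assumptions on the loss are needed. The only point that warrants a line of care is the factor $\gpij/\gjd$, which must be read as $\prod_{d'\neq d}\gamma_{j,d'}$ so that the identity $\gjd\cdot(\gpij/\gjd)=\gpij$ remains valid even at points where a gate vanishes.
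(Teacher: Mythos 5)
Your proposal is correct and follows essentially the same route as the paper's proof: differentiate the squared factor norms along the flow, use the identity $\gamma_{j,d}\cdot(\gamma_j^{\odot}/\gamma_{j,d})=\gamma_j^{\odot}$ to show the data-dependent terms $\bm{\omega}_j^\top\nabla_{\mathbf{w}_j}\mathcal{L}_0$ cancel in the difference, and solve the resulting linear ODE. Your unified treatment (computing both derivatives once and noting the shared data term, rather than the paper's two explicit cases for $d=1$ and $d\neq 1$) and your explicit remark that $\gamma_j^{\odot}/\gamma_{j,d}$ must be read as $\prod_{d''\neq d}\gamma_{j,d''}$ to remain valid at vanishing gates are minor presentational improvements over the paper, not a different argument.
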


\begin{figure*}[t]
    \centering
    \includegraphics[width=\linewidth]{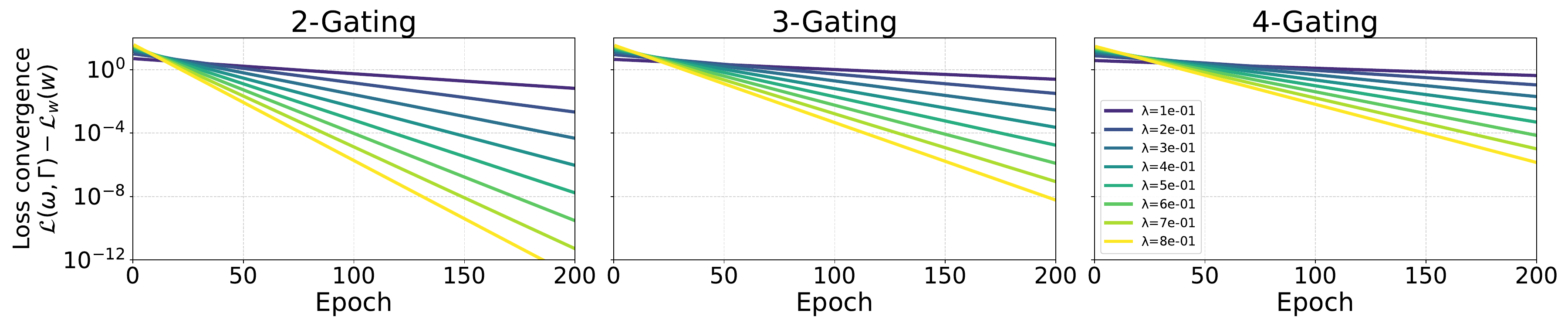}
    \caption[]{Evolution of imbalance 
    during SGD of a neuron-wise $D$-gated LeNet-300-100 for $D\in\{2,3,4\}$ (\textbf{left} to \textbf{right}). As predicted by our theory, the losses converge exponentially, with the rate increasing in $\lambda$ and decreasing in $D$. 
    }
    \label{fig:imbalance_lenet300100}
\end{figure*}

This result further shows that for $\lambda=0$, $\mathcal{I}_{j,d,d'}(t)$ is a \textit{conserved quantity} \cite{kunin2020neural, ziyin2024parameter}, i.e., imbalances decay with a $0$ rate. The difference of losses is determined by the difference of regularizers, termed \emph{misalignment} $\mathcal{M}(\ob,\gb)$, and thus depends on the overall degree of balancedness.
\begin{align}\label{eq:misalignment-main}
\Log  - \mathcal{L}_{\wb}(\ogg) &= \lambda \mathcal{M}(\ob,\gb) := \lambda \big( \mathcal{R}(\ob,\gb) - \Rw(\ogg) \big)  \\
&=\lambda \big(D^{-1}(\|\ob\|_2^2+\|\gb\|_F^2)-\|\ogg\|_{2,2/D}^{2/D}\big) \geq 0\,.
\end{align}
Using this, the previous result can be extended to show $|\Log-\Loss_{\wb}(\ogg)| \to 0$:

\begin{lemma}[Convergence of $D$‑gated loss to $L_{2,2/D}$ regularized loss under GF]\label{lemma:loss_conv}
Assume that the model parameters of \cref{eq:objective} with effective weight $\wb(t)=\ob(t) \gating \gbodot(t)$ as in \cref{eq:gating} evolve with time $t$ according to the gradient flow in \cref{eq:grad_flow_dynamics,eq:grad_omega_and_gamma}. Then, the $D$-gated loss $\Loss(\ob(t),\gb(t))$ in \cref{eq:min_smooth} converges to the non-smooth $L_{2,2/D}$ regularized loss $\Loss_{\wb}(\wb(t))$ in \cref{eq:min_nonsmooth} 
at least exponentially fast given an initialization-dependent constant $C \geq 0$:
\begin{align}
\Loss(\ob(t),\gb(t))-\Loss_{\wb}(\wb(t)) \leq C e^{-\frac{4\lambda}{D}t}, \label{eq:final_result_convergence}
\end{align}
\end{lemma}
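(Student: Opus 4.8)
The plan is to reduce the claim to a statement purely about the regularization terms and then leverage the imbalance decay of \cref{lemma:gf_decay}. First I would observe that both objectives evaluate the same unregularized loss $\Lnull$ at the common effective weight $\wb(t)=\ogg$, so these contributions cancel and, using the identity stated just before the lemma, $\Loss(\ob(t),\gb(t))-\Loss_{\wb}(\wb(t))=\lambda\,\mathcal{M}(\ob(t),\gb(t))\ge 0$. It therefore suffices to bound the misalignment $\mathcal{M}$. I would decompose it group-wise, $\mathcal{M}=\sum_{j=1}^{J}\mathcal{M}_j$, and identify each $\mathcal{M}_j$ as an arithmetic-minus-geometric-mean gap: writing $a_0:=\|\omegj\|_2^2$ and $a_d:=\gjd^2$ for $d\in[D-1]$, the per-group term is $\mathcal{M}_j=\tfrac1D\sum_{d=0}^{D-1}a_d-(\prod_{d=0}^{D-1}a_d)^{1/D}$, i.e.\ the AM--GM gap of the $D$ nonnegative quantities $a_0,\dots,a_{D-1}$, which is nonnegative and vanishes exactly at the balanced configurations of \cref{lemma:balancedness_simple}.

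Next, the tempting route is to differentiate $\mathcal{M}_j$ directly along the flow. Carrying this out, the penalty part of the dynamics contracts $\mathcal{M}_j$ at exactly the rate $-\tfrac{4\lambda}{D}$, but a loss-gradient forcing term proportional to $\gpij\,\omegj^\top\nabla_{\wj}\Lnull$ survives with indeterminate sign, so a clean Gr\"onwall argument on $\mathcal{M}_j$ itself is not available. Instead I would route through the imbalances $\mathcal{I}_{j,d,d'}=a_d-a_{d'}$, whose ODE in \cref{lemma:gf_decay} is free of this forcing term (it cancels under differencing) and gives $\mathcal{I}_{j,d,d'}(t)=\mathcal{I}_{j,d,d'}(0)\,e^{-4\lambda t/D}$. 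The crux is then to bound the AM--GM gap $\mathcal{M}_j$ linearly in these imbalances.

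For the bounding step I would use $\mathrm{GM}\ge\min_d a_d$, which yields $\mathcal{M}_j=\mathrm{AM}-\mathrm{GM}\le\mathrm{AM}-\min_d a_d=\tfrac1D\sum_{d}(a_d-\min_{d'}a_{d'})\le\tfrac1D\sum_{d\ne d'}|a_d-a_{d'}|=\tfrac1D\sum_{d\ne d'}|\mathcal{I}_{j,d,d'}|$. Substituting the exponential decay and summing over groups gives $\lambda\,\mathcal{M}(t)\le C\,e^{-4\lambda t/D}$ with the initialization-dependent constant $C:=\tfrac{\lambda}{D}\sum_{j}\sum_{d\ne d'}|\mathcal{I}_{j,d,d'}(0)|$, which is exactly \eqref{eq:final_result_convergence}.

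I expect the main obstacle to be this bounding step, namely producing an upper bound on the AM--GM gap that is linear in the pairwise imbalances, holds globally rather than only near balance, and matches the exact rate $e^{-4\lambda t/D}$. The estimate $\mathrm{GM}\ge\min$ sidesteps delicate second-order expansions (near balance the gap is in fact quadratic in the imbalances, so the bound is conservative but of the claimed order), and upper-bounding each $a_d-\min_{d'}a_{d'}$ by the full sum of pairwise imbalances neatly avoids the complication that the index attaining the minimum may change along the trajectory. The remaining ingredients, well-posedness of the gradient flow and differentiability of $\Lnull$, are already assumed and raise no difficulty.
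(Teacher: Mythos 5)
Your proposal is correct and follows essentially the same route as the paper's proof: reduce the loss gap to $\lambda\mathcal{M}(t)$, decompose group-wise into AM--GM gaps, lower-bound the geometric mean by $\min_d a_d$, and invoke \cref{lemma:gf_decay} to get the rate $e^{-4\lambda t/D}$. The only cosmetic difference is the final constant—you bound $\mathcal{M}_j$ by the sum of all pairwise initial imbalances, whereas the paper uses the maximal pairwise imbalance $M_j(0)-m_j(0)$ and the bound $C=\lambda J\,\mathcal{I}_{\max}(0)$—but both yield a valid initialization-dependent $C$.
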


Intuitively, this is because balancedness in $D$-Gating is precisely the condition required for $\Log$ to simplify to $\Lw$ (cf.~\cref{corr:loss_simplification}). Hence, as the pair-wise imbalances vanish, the balancedness condition becomes increasingly true, and the two losses converge. 

\subsubsection[Evolution of imbalance for D-gated models under (S)GD dynamics]{Evolution of imbalance for $D$-gated models under (S)GD dynamics}
For an analysis of the discrete-time evolution of imbalances, the dynamics becomes more convoluted, but we can establish geometric decay up to first-order in $\eta$, and find symmetry-induced absorbing SGD states \cite{ziyin2023symmetry} for balanced gating configurations.
\begin{lemma}[Imbalance evolution under discrete-time GD]
\label{lem:imbalance_evolution_sgd}
Consider the $D$-gated objective Eq.~(\ref{eq:d_gated_obj}). Then, under (S)GD, for any $j \in [J]$, (i) the pair-wise imbalances in Eq.~(\ref{eq:imbalance_revised}) evolve as
\begin{align}
\mathcal{I}_{j,d,d'}^{(t+1)} = \big(1 - 4 \lambda\eta/D\big) \mathcal{I}_{j,d,d'}^{(t)} + \eta^2 \Delta_{j,d,d'}^{(t)}\,,\quad \eta>0,\quad d,d' \in [D],\, d \neq d',
\end{align}
with separate second-order terms $\Delta_{j,d,d'}$ for $d=1$ and $d,d'>1$. For sufficiently small $\eta$ or near stationarity, the imbalance $\mathcal{I}_{j,d,d'}^{(t+1)} \approx (1-4 \lambda \eta/D) \cdot  \mathcal{I}_{j,d,d'}^{(t)}$ exhibits discrete exponential decay. (ii) Balancedness is conserved between any two scalar factors $d,d'>1$, i.e., $\mathcal{I}_{j,d,d'}^{(t)}=0 \Rightarrow \mathcal{I}_{j,d,d'}^{(t')}=0 \, \forall \, t'>t$, and (iii),  for balanced zero representations $(\bm{\omega}_j^{(t)},\{\gamma_{j,d}^{(t)}\}_{d=1}^{D-1})=\bm{0}$, it holds $ (\bm{\omega}_j^{(t')},\{\gamma_{j,d}^{(t')}\}_{d=1}^{D-1})=\bm{0} \, \forall \,  t'>t$.
\end{lemma}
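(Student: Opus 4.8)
The plan is to derive the discrete-time update for each gating factor directly from the (S)GD recursion and then track how squared factors evolve, since all three claims concern the pair-wise imbalances $\mathcal{I}_{j,d,d'}$ defined through squared magnitudes. From \cref{eq:grad_omega_and_gamma} the gradient of each scalar factor $\gamma_{j,d}$ decomposes into a loss-coupled term proportional to $(\gpij/\gjd)\,\omegj^\top\nabla_{\wj}\Lnull$ and a weight-decay term $\tfrac{2\lambda}{D}\gjd$, and similarly for $\omegj$. The key structural observation is that the \emph{loss-coupled} part of every factor's gradient shares the common scalar $c_j := \omegj^\top\nabla_{\wj}\Lnull$ (up to the factor $\gpij/\gjd$ or $\gpij$), because all factors multiply the same effective weight $\wj$. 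This shared coupling is exactly what makes the differences of squares cancel.

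First I would write the one-step update $\gamma_{j,d}^{(t+1)} = (1-\tfrac{2\lambda\eta}{D})\gjd^{(t)} - \eta\,\tfrac{\gpij}{\gjd}c_j^{(t)}$ (and the analogous vector update for $\omegj$), then compute $(\gamma_{j,d}^{(t+1)})^2$ by squaring. The cross term is $-2\eta(1-\tfrac{2\lambda\eta}{D})\gpij c_j^{(t)}$, which is \emph{independent of $d$} since $\gjd\cdot(\gpij/\gjd)=\gpij$; this is the crucial cancellation. Subtracting the squared updates for two factors $d,d'$, the order-$\eta$ cross terms vanish identically, the decay terms contribute the factor $(1-\tfrac{2\lambda\eta}{D})^2 = 1 - \tfrac{4\lambda\eta}{D} + O(\eta^2)$ multiplying $\mathcal{I}_{j,d,d'}^{(t)}$, and the leftover $O(\eta^2)$ pieces collect into the stochastic remainder $\Delta_{j,d,d'}^{(t)}$. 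I would handle the $d=1$ case (involving $\|\omegj\|_2^2$) separately from the $d,d'>1$ case, since the vector factor squares its norm and its coupling term carries $\gpij$ rather than $\gpij/\gjd$; verifying that the cross terms still match across these two cases is the one place demanding care. This establishes part (i).

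For part (ii), I would specialize to $d,d'>1$ and observe that when $\mathcal{I}_{j,d,d'}^{(t)}=0$, the two factors satisfy $\gamma_{j,d}^{(t)2}=\gamma_{j,d'}^{(t)2}$; since their updates differ only through the decay-scaled value and a cross term proportional to $\gpij c_j^{(t)}/\gjd$ versus $\gpij c_j^{(t)}/\gamma_{j,d'}$, and the second-order remainder $\Delta_{j,d,d'}^{(t)}$ must be shown to vanish under equal squares. The cleanest route is to exhibit that the full squared update is a function of $\gjd^{(t)2}$ alone (through a symmetric expression in which $\gjd$ enters only as $\gjd\cdot(\gpij/\gjd)=\gpij$ and as $\gjd^2$ in the decay term), so equal squares propagate exactly to equal squares, giving $\Delta_{j,d,d'}^{(t)}=0$ and hence $\mathcal{I}_{j,d,d'}^{(t+1)}=0$; induction then closes the claim for all $t'>t$.

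For part (iii), I would note that if $(\omegj^{(t)},\{\gjd^{(t)}\})=\bm{0}$, then $\wj^{(t)}=\ogg=\bm{0}$, so the loss-coupled scalar $c_j^{(t)}=\omegj^\top\nabla_{\wj}\Lnull$ vanishes (indeed $\omegj=\bm{0}$), and every factor's gradient reduces to its weight-decay term $\tfrac{2\lambda}{D}\cdot 0 = 0$; the update then maps $\bm{0}\mapsto\bm{0}$. The subtlety is the factor $\gpij/\gjd$ in the $\gamma$-gradient, which is formally $0/0$; I would argue that since $\gpij=\prod_d\gjd$ contains $\gjd$, the product $(\gpij/\gjd)\omegj$ actually equals $(\prod_{d''\neq d}\gamma_{j,d''})\,\omegj$, which is a well-defined polynomial vanishing whenever all factors are zero (it contains at least the factor $\omegj$, or for $D\ge 3$ remaining $\gamma$'s). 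Thus the gradient is identically zero at the zero configuration, the point is a fixed point of the (S)GD map, and it is therefore absorbing, matching the symmetry-induced absorbing-state phenomenon of \cite{ziyin2023symmetry}; induction finishes the claim. The main obstacle throughout is the careful bookkeeping of the $\gpij/\gjd$ coupling—ensuring it is interpreted as the genuine product of the \emph{other} factors so that the order-$\eta$ cross terms cancel in (i)–(ii) and the gradient is well-defined at zero in (iii).
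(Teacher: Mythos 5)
Your proposal is correct and follows essentially the same route as the paper's proof: expand the squared (S)GD updates, observe that the first-order cross terms cancel because $\gamma_{j,d}\cdot(\gamma_j^{\odot}/\gamma_{j,d})=\gamma_j^{\odot}$ is common to all factors, collect the $\eta^2$ remainders into $\Delta_{j,d,d'}$, and handle (iii) by noting the zero configuration is a fixed point of the update map (with the $0/0$ factor correctly read as the product of the \emph{other} factors). The only cosmetic difference is in (ii): the paper explicitly factors $\Delta_{j,d,d'}^{(t)} = \mathcal{I}_{j,d,d'}^{(t)}\bigl[-(\bm{\omega}_j^{(t)\top}\bm{g}_j^{(t)})^2((\gamma_j^{\odot\setminus d,d'})^{(t)})^2 + \tfrac{4\lambda^2}{D^2}\bigr]$ via a difference-of-squares expansion, whereas you argue by the swap symmetry that equal squares make the two second-order terms coincide — the same cancellation, reached slightly more directly.
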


\vspace{-0.3cm}

\section{Numerical experiments}

In the following, we empirically investigate the learning dynamics of our approach in \cref{sec:learndym} and then showcase various applications in \cref{sec:modular} to demonstrate our method's modularity.
\vspace{-0.2cm}
\subsection{Learning dynamics and misalignment} \label{sec:learndym}
\vspace{-0.2cm}
\subsubsection{Exponential decay of imbalance}

We first validate our theoretical results on learning dynamics and loss convergence of $D$-Gating from \cref{sec:optdym}. For this, we apply $D$-Gating with $D\in\{2,3,4\}$ to a LeNet-300-100 at the neuron level and train the model on the MNIST dataset using SGD. We use a grid of $\lambda$ values and measure the loss convergence as defined in \cref{eq:final_result_convergence}. \cref{fig:imbalance_lenet300100} visualizes the results and confirms our theoretical findings on the exponential decay of the loss difference. \cref{app:subsec-imbalance-decay-sgd-adam} contains further results, e.g., for Adam.
\vspace{-0.5cm}

\subsubsection[D-Gating and misalignment for group lasso]{$D$-Gating and misalignment for group lasso}

To further validate the equivalence of optimization problems as established in the previous section, we run a sparse linear regression where direct $L_{2,1}$-regularized optimization is more accessible due to the availability of specialized optimization routines. For this, we simulate data as described in \cref{app:group_lasso} with 40 feature groups of 5 features each, of which 7 are informative (with truly non-zero effects). We use accelerated proximal gradient descent \cite{simon2013sparse} to directly optimize the original $L_{2,1}$-penalized linear model and apply our approach for $D\in\{2,3,4\}$ over the same grid of $\lambda$ values. In addition, we also perform direct GD optimization of the $L_{2,1}$-regularized linear model and compare all methods against an oracle (a linear model using only the signal variables). 
Results in \cref{fig:linmod-comb} confirm the established equivalence between the original and $D$-gated objective for $D=2$, but also demonstrate the improvement in the performance-sparsity tradeoff for $D>2$.
\begin{figure*}[t]
\centering
\includegraphics[width=1.0\textwidth]{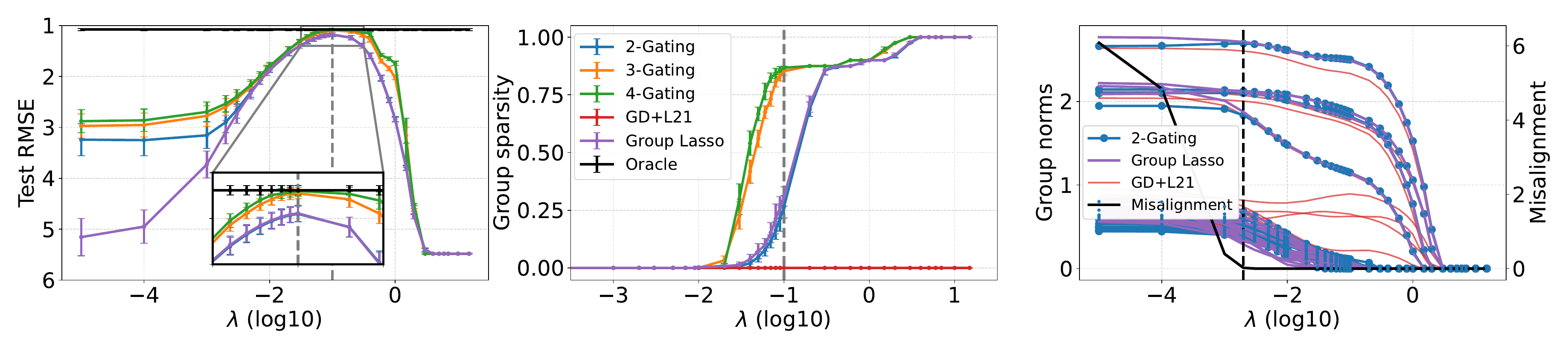}
\caption[]{Regularization paths for sparse linear regression task using $D$-Gating. \textbf{Left}: Test RMSE vs $\lambda$. The curves for $2$-Gating and group lasso coincide beyond a certain $\lambda$, but are outperformed by $D$-Gating with $D>2$. \textbf{Middle}: Group sparsity of $2$-Gating coincides with group lasso solution. Dashed grey line indicates optimal $\lambda$ for all models. Deeper gating yields sparser solutions. \textbf{Right}: Transition of $2$-Gating to group lasso solution beyond a certain $\lambda$ coincides with zero imbalance attained after training. Direct optimization with GD yields notably different regularization paths far from the global minima. The dashed black line indicates $0$ misalignment at the end of traininig. Means and $95\%$ confidence intervals over ten simulations are shown for the left two plots.}
\label{fig:linmod-comb}
\end{figure*}

\subsection{Modularity} \label{sec:modular}
\begin{figure*}[b]
\centering
\includegraphics[width=1.0\textwidth]{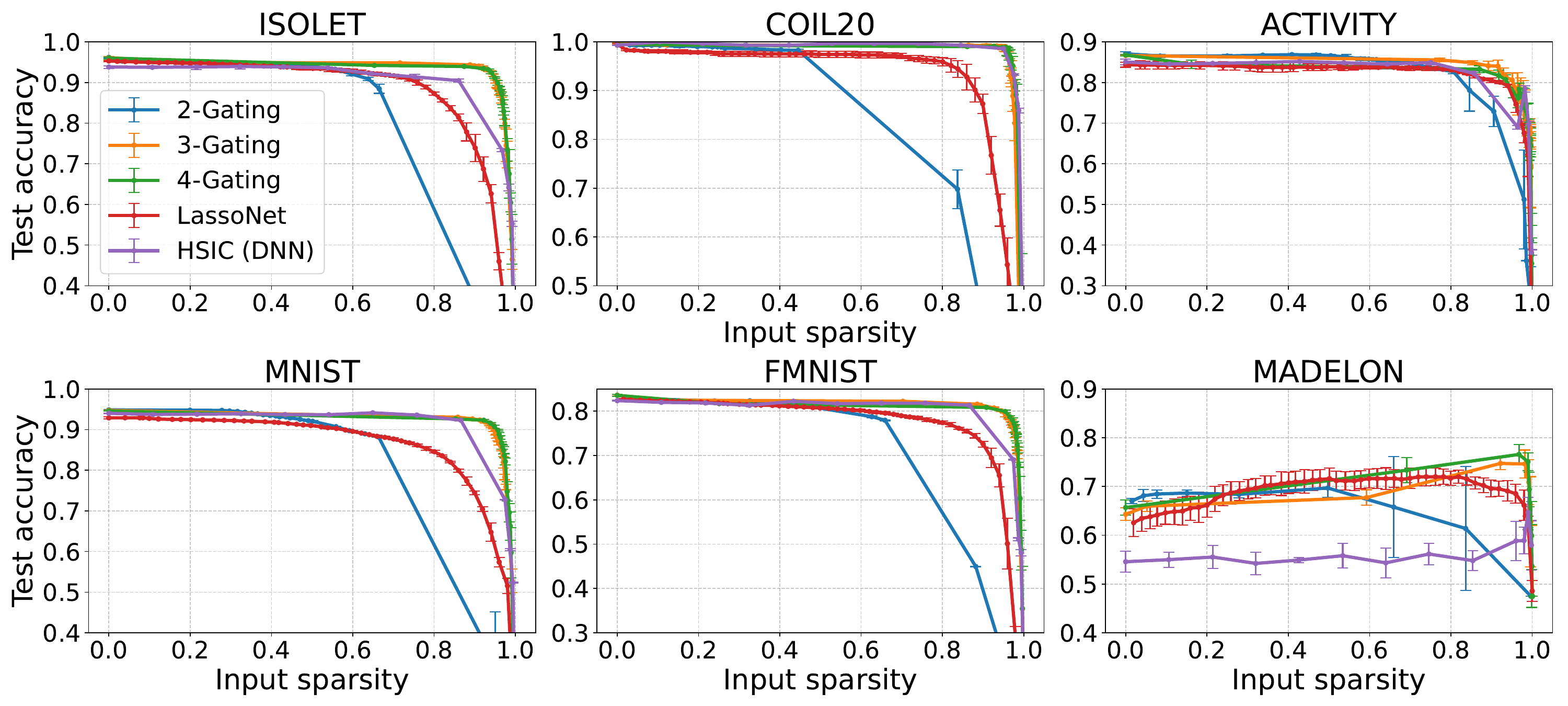}
\caption[]{Comparison of feature selection methods. Means and std. over $5$ random initializations are reported.}
\label{fig:input-sparsity-comb}
\end{figure*}
Next, we demonstrate the flexibility of our method. To this end, we study various types of structured sparsity problems that arise in neural networks. In these experiments, we focus on demonstrating the broad applicability of our method rather than an exhaustive benchmark comparison. Our method supports any form of structured sparsity in neural networks, enabling diverse applications. Selected use cases are shown below. Further results 
are presented in \cref{app:further_exp}.

\subsubsection{Feature selection in non-linear models}

We start by investigating input feature selection, i.e., by individually gating the first-layer weights outgoing from each input feature. We follow the setup of \cite{lemhadri2021lassonet} by running the proposed method, \emph{LassoNet}, as well as \emph{HSIC} \cite{yamada2014high} on six diverse datasets as done in \cite{lemhadri2021lassonet}. We use the same LeNet-300-100 architecture as a backbone for LassoNet, HSIC and our $D$-Gating approach (cf.~\cref{app:feature-selection}).

\cref{fig:input-sparsity-comb} depicts the comparison results, showing that 2-Gating is often inferior to LassoNet and HSIC. 3- and 4-Gating, however, dominate all other methods for almost all possible input sparsity configurations and, hence, are the favorable options among these methods for feature selection.

\subsubsection{Filter sparsity in convolutional neural networks}\label{subsec:filter-sparsity}

In the next experiment, we investigate filter sparsification --- one of the most prominent applications of structured sparsity in neural networks. As comparison methods, we select three commonly used methods in the filter sparsity literature \cite{hoefler2021sparsity}: Global magnitude pruning, $L_{2,1}$-penalization with na\"ive optimization followed by magnitude pruning (MP) \cite{wen2016structuredsparsity}, and network slimming \cite{liu2017learning}. A more detailed description can be found in \cref{app:filter-sparsity}. We run experiments on CIFAR-10, CIFAR-100, and SVHN, using a VGG-16 \citep{simonyan2015very} and ResNet-18 model \citep{he2016deep}. $D$-Gating is implemented by adding gating parameters on the filter level, which, given the size of these models, has a negligible parameter overhead (see~\cref{tab:param-overhead}). As filter sparsity can be used to construct a smaller model, potentially deployable on edge devices or similar, we also measure the theoretical speed-up of the sparsified model using floating-point operations (FLOPs).
%
%
Similar to previous results, \cref{fig:img-classif-tradeoff} unveils a superior performance of $D>2$-Gating compared to gating with $D=2$. However, all gating approaches outperform the filter sparsity baselines despite our approach not requiring post-hoc pruning as the main sparsification mechanism. This is the case both in terms of the accuracy-sparsity tradeoff provided by our method as well as the theoretical speed-up implied (cf.~\cref{tab:speedup_drops} and \cref{fig:img-classif-speedup}).

\begin{figure}[h]
    \centering
    \includegraphics[width=\linewidth]{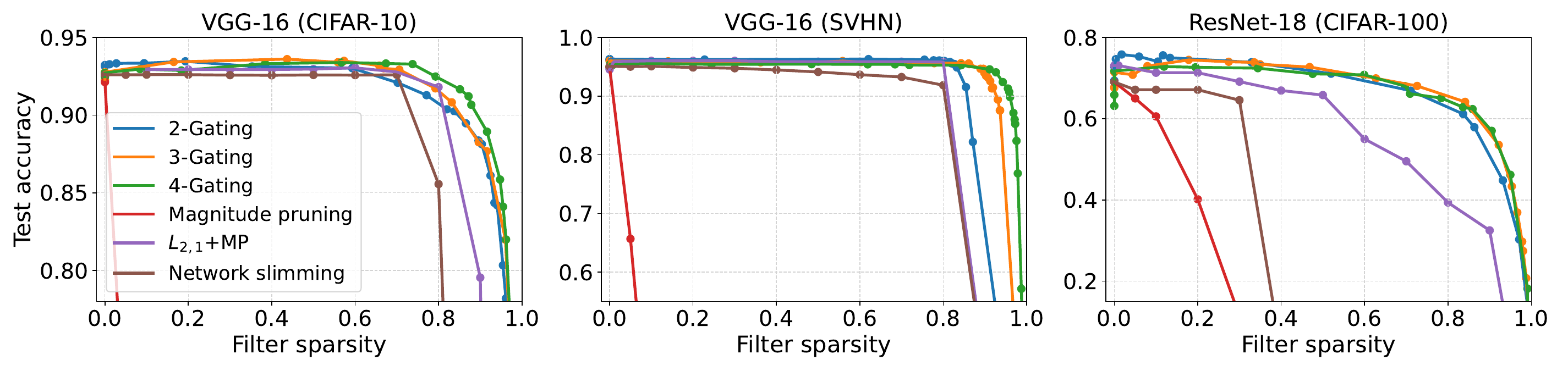}
    \caption[]{Structured accuracy-sparsity tradeoffs of $D$-gated neural networks and comparison methods.}
    \label{fig:img-classif-tradeoff}
\end{figure}

\begin{table}[ht]
\centering
\caption{Largest theoretical speed-up ($\frac{\text{base}}{\text{sparse}}$ FLOPs) within $5\%$ / $10\%$ of the max. test accuracy.}
\label{tab:speedup_drops}
\resizebox{0.7\textwidth}{!}{
\begin{tabular}{lccc}
\toprule 
Method
  & \makecell[c]{VGG-16 \\ (CIFAR-10)}
  & \makecell[c]{VGG-16 \\ (SVHN)}
  & \makecell[c]{ResNet-18 \\ (CIFAR-100)} \\
\midrule
D-Gating ($D=2$) & 16.13 / 52.70  & 175.49 / 175.49 & 5.13 / 14.91  \\
D-Gating ($D=3$) & 11.17 / 25.51  & 266.66 / 390.03 & 8.27 / 15.14  \\
D-Gating ($D=4$) & 18.88 / 48.97  & 262.05 / 541.65 & 12.76 / 41.04 \\
$L_{2,1}$ + Mag. pruning           & 6.97 / 6.97    & 18.88 / 18.88   & 3.35 / 14.01  \\
Network Slimming            & 2.45 / 4.91    & 7.53 / 7.53     & 3.56 / 3.56   \\
\bottomrule
\end{tabular}
}
\end{table}

\subsubsection{Structured sparsity in language modeling}\label{subsec:sparse_attention}

Our next application considers the effect of $D$-Gating in an attention-based language model \cite{vaswani2017attention}. For this, we use NanoGPT and apply $D$-Gating to the attention heads of all attention layers. A natural comparison is again the direct optimization of the $L_{2,1}$-penalty, i.e., without first transforming the objective into a differentiable one through $D$-Gating. To highlight the shortcomings of this na\"ive approach, we also follow the direct optimization with an explicit pruning step. We train NanoGPT on TinyShakespeare (details in \cref{app:nanogpt}) and evaluate the model's validation accuracy as well as validation perplexity for different regularization strengths and hence levels of attention head sparsity. 

\cref{fig:nanogpt-tinyshakes} confirms our hypothesis that direct optimization does not result in structured sparsity. Notably, the min. and max. norms of the attention heads converge for large $\lambda$ under direct $L_{2,1}$ penalization. In contrast, $D$-Gating shows the desired effect of increased sparsity for higher regularization and provides a smooth tradeoff between accuracy and sparsity. Even when combining direct optimization of the $L_{2,1}$ regularized objective with additional post-hoc pruning, and taking, at each pruning ratio, the best performance over a grid of $\lambda$ values, we see that $D$-Gating achieves much higher head sparsity values before performance degrades significantly. 

\begin{figure}[ht]
    \centering
    \includegraphics[width=\linewidth]{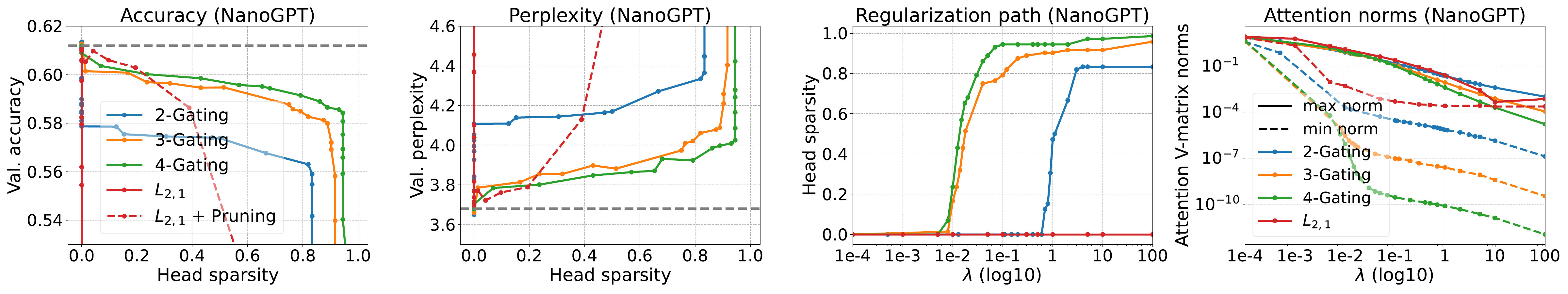}
    \caption[]{Structured sparsity-performance tradeoffs for NanoGPT trained on TinyShakespeare with $D$-gated Value matrices in their attention heads and direct $L_{2,1}$ regularization. \textbf{Left}: Val.\ accuracy vs.\ head sparsity. \textbf{Middle}: Val.\ perplexity vs.\ $\lambda$. \textbf{Right two}: Effect of regularization $\lambda$ on head sparsity and evolution of $2$-norms for $D \in \{2,3,4\}$. Dashed horizontal lines represent the vanilla performance.}
    \label{fig:nanogpt-tinyshakes}
\end{figure}


\subsubsection{Tree sparsity in neural trees} \label{subsec:tree_sparsity}

\begin{wrapfigure}{r}{0.4\textwidth}  
  \centering
\includegraphics[width=0.4\textwidth]{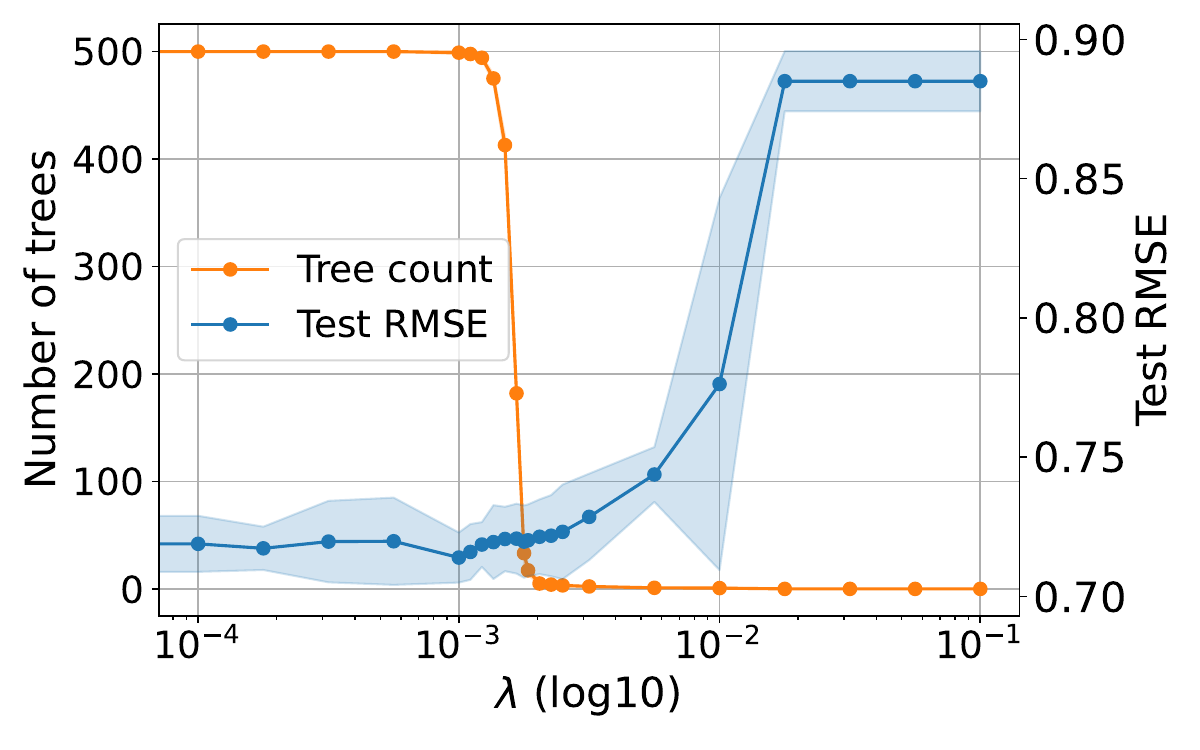}
    \caption[]{Regularization paths for NODE on the Wine dataset with $2$-Gating on tree level. Means and standard deviations over four data splits are shown. 
    }
    \label{fig:node}
\end{wrapfigure}
As a final application, we investigate the sparsification of neural trees. More specifically, we propose a novel modification of Neural Oblivious Decision Ensembles (NODE) \cite{popov2019neuralobliviousdecisionensembles}, a neural network-based decision tree ensemble model. While there are multiple options to apply our approach within this architecture, we demonstrate the efficacy of $D$-Gating by inducing sparsity on the tree-level, i.e., using the different trees as groups. We test our approach on the Wine data set \cite{wine_quality_186} using a range of $\lambda$ values for a single tree-layer as suggested by \cite{popov2019neuralobliviousdecisionensembles}. The tree layer consists of 500 trees, and the weights corresponding to each tree are gated with $D=2$ to induce differentiable $L_{2,1}$ group sparsity. This is already lower than the default hyperparameter for the tree count of $2048$ reported in \cite{popov2019neuralobliviousdecisionensembles}, but raises the question whether 500 trees are in fact necessary to obtain reported performances. 
\cref{fig:node} shows the test root mean squared error (RMSE) and the number of active trees as functions of $\lambda$. While the full non-sparse model reaches baseline RMSE values, we observe that it is possible to achieve a very similar performance by using less than 5 trees, e.g., for $\lambda=3\times10^{-3}$. This suggests that, at least for the Wine data set, a much simpler and notably less expensive configuration is sufficient. 

\subsubsection{Further experiments and ablation studies}
Additional experimental results are provided in Appendix \ref{app:further_exp}. Appendices \ref{app:subset-subnetwork} and \ref{app:subsec-sparse_nams} demonstrate the effectiveness of $D$-Gating beyond the model classes studied above. First, $D$-Gating is implemented for multi-modal subnetwork selection in late-fusion architectures, where it consistently succeeds in removing irrelevant data modalities and retaining only informative information. \\
Next, \cref{app:subsec-sparse_nams} studies a variant of Neural Additive Models (NAMs) \citep[][]{agarwal2021neural} with differentiable shape function sparsity, termed $D$-SNAMs. NAMs combine the inherent interpretability of additive models with the expressivity of neural networks by processing each input independently through its own shape function subnetwork before summing the outputs. Here, $D$-Gating is applied to the first-layer weights of each feature-specific subnetwork to enforce shape function sparsity, effectively removing uninformative inputs while maintaining the flexibility to model non-linear effects of informative features. Our differentiable approach outperforms competing methods in terms of predictive performance, such as sparse NAMs (SNAMs) \cite{xu2023sparse}, which are based on non-differentiable (group lasso) penalties, and does not require post-hoc pruning. In particular, we observe for $D>2$, i.e., non-convex induced regularization, that $D$-Gating produces increasingly sparse solutions while maintaining low generalization error, explainable by the the more aggressive sparsification capabilities of non-convex over convex sparsity penalties \cite{hu2017group}. These experiments substantiate $D$-Gating as a promising approach for subnetwork sparsification, which is amenable to differentiable optimization, whether in multi-modal settings or for attaining shape function sparsity in neural additive modeling.\\
Finally, Appendix \ref{app:overheads-subset} includes further information and experiments on the negligible parameter, runtime, and memory overheads incurred by overparameterization using $D$-Gating, while Appendix \ref{app:depth-and-instability} contains ablation studies on performance and numerical stability with respect to the gating depth $D$, supporting the recommendation that $D \in \{3,4\}$ typically yields the best tradeoff.

\section{Discussion}

In this paper, we introduce $D$-Gating, a differentiable structured sparsity method compatible with SGD and applicable to arbitrary differentiable architectures, addressing limitations of non-differentiable penalties in deep learning. We thereby positively answer our initial research question on whether it is possible to design a modular structured sparsity routine, integrable into any architecture, amenable to SGD, with theoretical sparsity guarantees and little practical overhead.

\paragraph{Limitations and future work}
Due to the flexibility of $D$-Gating, our approach can provide structured sparsity penalties for arbitrary grouping structures. We systematically demonstrate this flexibility through a diverse set of applications in \cref{sec:modular} and \cref{app:further_exp}. While our theoretical results guarantee equivalence to the original sparse but non-smooth optimization problem, future work could further explore the benefits of this formulation or assess its performance when combined with sophisticated pruning and retraining schedules. Secondly, although the gradient flow limit admits clear analysis, it remains an open question how discrete-time SGD with large learning rates or scheduling  impacts the learning dynamics. Finally, integrating $D$-Gating into the complex training pipelines of modern large-scale foundation models, where sparse training from scratch is often impractical, presents another promising direction.


\begin{ack}
We thank the four anonymous reviewers and the area chair for providing valuable feedback.

DR's and CK's research is funded by the Deutsche Forschungsgemeinschaft (DFG, German Research Foundation) – 548823575.
\end{ack}


\bibliography{references}

\clearpage

\appendix

\appendixpage

\startcontents[sections]
\printcontents[sections]{l}{1}{\setcounter{tocdepth}{2}}

\clearpage
 
\section{Algorithm} \label{app:algorithm}

In \cref{alg:d-gating-train}, we provide the algorithm for sparse training using the proposed $D$-Gating method. Not that it is assumed here that all model parameters $\wb$ are gated, which is inconsequential for the procedure. If $D$-Gating is applied only to model substructures, the remaining weights are initialized and updated as usual, and then copied over to the sparse architecture without modification.

\begin{algorithm}
\small
\caption{\small $D$-Gating Network Training}\label{alg:d-gating-train}
\begin{algorithmic}[1]
\STATE \textbf{Input:}
\STATE \quad Data $\mathcal{D}=\{(\xb_i,y_i)\}_{i=1}^n$, model architecture $f$ with weights $\wb \in \Rp$ partitioned into $J$ groups $\{\mathcal G_j\}_{j=1}^J$
\STATE \quad Gating depth $D \geq 2$
\STATE \quad Training hyperparameters $\{T,|B|,\{\eta^{(t)}\}_{t=0}^T,\lambda\}$
\STATE \quad Threshold $\varepsilon_{\mathrm{tiny}}$
\STATE \textbf{$D$-Gating parametrization:}
\STATE \quad Replace weights $\wb$ in $f$ by primary weights $\ob=(\omegj)_{j=1}^J$ and gating factors $\gb=(\gamma_{j,d})_{j\in[J],\,d\in[D-1]}$
\STATE \quad Effective weights $\wb = \ob \gating \gbodot$, so $\wj = \omegj\prod_{d=1}^{D-1}\gamma_{j,d}$
\medskip
\STATE \textbf{Initialize} $\ob^{(0)}\leftarrow\texttt{Standard-Init}(\wb |f)$, $\gb^{(0)} \leftarrow \mathbf{1}_{J}\mathbf{1}_{D-1}^{\top} \in \mathbb{R}^{J \times (D-1)}$, and $\eta \leftarrow \eta^{(0)}$
\medskip
\FOR{$t=0$ to $T-1$}
  \STATE Sample mini-batch $B^{(t)}\subseteq \mathcal D$
  \STATE Compute $\wb^{(t)}=\ob^{(t)}\gating(\gbodot)^{(t)}$ and mini-batch loss
    $$\Loss(\ob^{(t)},\gb^{(t)}) 
      = \sum_{(\xb,y)\in B^{(t)}} \ell\bigl(y,f(\xb;\wb^{(t)})\bigr)
        + \frac{\lambda}{D}\Bigl(\|\ob^{(t)}\|_2^2+\|\gb^{(t)}\|_F^2\Bigr)$$
  \STATE Compute gradients \(\nabla_{\ob}\Loss(\ob^{(t)},\gb^{(t)}),\;\nabla_{\gb}\Loss(\ob^{(t)},\gb^{(t)})\)
  \STATE Update
    \[
      \ob^{(t+1)} \leftarrow \ob^{(t)} 
        - \frac{\eta}{|B|}\nabla_{\ob}\Loss(\ob^{(t)},\gb^{(t)}),
      \quad
      \gb^{(t+1)} \leftarrow \gb^{(t)} 
        - \frac{\eta}{|B|}\nabla_{\gb}\Loss(\ob^{(t)},\gb^{(t)})
    \]
  \STATE Set \(\eta\leftarrow\eta^{(t+1)}\)
\ENDFOR
\medskip
\STATE \textbf{Collapse gates and reduce architecture:}
\STATE \quad \(\hat{\wb}=\ob^{(T)}\gating(\gbodot)^{(T)}\)
\STATE \quad Zero out removed weight groups:
  $\hat{\wj}\leftarrow \bm{0}$ if $\|\hat \wj\|_2<\varepsilon_{\mathrm{tiny}}$
\STATE \quad Load sparse weights \(\hat{\wb}\) into compact architecture $ \tilde{f}$
\STATE \textbf{Output:} Sparse network parameters $\hat{\wb}$ and reduced architecture $\tilde{f}$
\end{algorithmic}
\end{algorithm}

\section{Missing proofs}\label{app:proofs}

\subsection{Proof of \cref{lemma:balancedness_simple}}

\begin{proof}
We argue by contradiction. Suppose that for some group $j$ the gating factors are not balanced; that is, there exist indices $d,d' \in [D-1]$ such that, without loss of generality,
\begin{equation}\label{eq:imbalance}
\gamma_{j,d}^2 < \gamma_{j,d'}^2.
\end{equation}
An analogous argument applies if one considers an imbalance between $\|\omegj\|_2^2$ and one of the scalar factors. We now apply a first-order perturbation argument. Consider an infinitesimal perturbation parameter $\varepsilon$ and define the perturbed factors as
\begin{equation}\label{eq:perturb}
\tilde{\gamma}_{j,d} = \gamma_{j,d}(1+\varepsilon) \quad \text{and} \quad \tilde{\gamma}_{j,d'} = \gamma_{j,d'}(1-\varepsilon).
\end{equation}
Noting that
\begin{equation}\label{eq:expand_plus}
(1+\varepsilon)^2 \approx 1+2\varepsilon,\quad (1-\varepsilon)^2 \approx 1-2\varepsilon,
\end{equation}
the product of the perturbed factors satisfies
\begin{align}\label{eq:prod_perturbed}
\tilde{\gamma}_{j,d}\,\tilde{\gamma}_{j,d'} &= \gamma_{j,d}(1+\varepsilon)\,\gamma_{j,d'}(1-\varepsilon) \nonumber\\
&= \gamma_{j,d}\gamma_{j,d'}\,(1+\varepsilon)(1-\varepsilon) \nonumber\\
&= \gamma_{j,d}\gamma_{j,d'}(1-\varepsilon^2) \approx \gamma_{j,d}\gamma_{j,d'},
\end{align}
so that the effective parameter $\mathbf{w}_j = \omegj \prod_{d=1}^{D-1}\gamma_{j,d}$ remains unchanged to first order. On the other hand, the original $L_2$ penalty for the two factors is given by
\begin{equation}\label{eq:orig_penalty}
\mathcal{R}(\gamma_{j,d},\gamma_{j,d'}) = \gamma_{j,d}^2 + \gamma_{j,d'}^2.
\end{equation}
After perturbation, we have
\begin{equation}\label{eq:perturbed_sq}
(\tilde{\gamma}_{j,d})^2 \approx \gamma_{j,d}^2(1+2\varepsilon),\quad (\tilde{\gamma}_{j,d'})^2 \approx \gamma_{j,d'}^2(1-2\varepsilon),
\end{equation}
so that the new penalty becomes
\begin{align}\label{eq:new_penalty_calc}
\mathcal{R}(\tilde{\gamma}_{j,d},\tilde{\gamma}_{j,d'}) &\approx \gamma_{j,d}^2(1+2\varepsilon) + \gamma_{j,d'}^2(1-2\varepsilon) \nonumber\\
&= \bigl(\gamma_{j,d}^2+\gamma_{j,d'}^2\bigr) + 2\varepsilon\Bigl(\gamma_{j,d}^2-\gamma_{j,d'}^2\Bigr).
\end{align}
Since $\gamma_{j,d}^2 < \gamma_{j,d'}^2$ by \cref{eq:imbalance}, it follows that
\begin{equation}\label{eq:penalty_negative_calc}
2\varepsilon\Bigl(\gamma_{j,d}^2-\gamma_{j,d'}^2\Bigr) < 0.
\end{equation}
This strict decrease in the $L_2$ penalty contradicts the stationarity of $(\ob,\gb)$ with respect to $\Loss(\ob,\gb)$. Therefore, the gating parameters must be balanced at any stationary point, which, together with the invariance of the effective parameter $\wb$, implies the result \cref{eq:final_balance_simple}.
\end{proof}

\subsection{Proof of \cref{corr:loss_simplification}}

\begin{proof}
We begin by group-wise separating the smooth surrogate penalty in \eqref{eq:d_gated_obj}:
\begin{equation}
    \frac{1}{D}\bigl(\|\ob\|_2^2 + \|\gb\|_F^2\bigr)
  = \sum_{j=1}^J \frac{1}{D}\Bigl(\|\omegj\|_2^2 + \sum_{d=1}^{D-1}\gamma_{j,d}^2\Bigr).
\end{equation} 
Next, for each group $j \in [J]$, we apply the inequality of arithmetic and geometric means (AM-GM) to the $D$ non-negative terms $\|\omegj\|_2^2,\gamma_{j,1}^2,\dots,\gamma_{j,D-1}^2$: 
\begin{equation}
    \frac{1}{D}\Bigl(\|\omegj\|_2^2 + \sum_{d=1}^{D-1}\gamma_{j,d}^2\Bigr)
\;\ge\;
\bigl(\|\omegj\|_2^2 \cdot \textstyle\prod_{d=1}^{D-1}\gamma_{j,d}^2\bigr)^{1/D}.
\end{equation}

Since \(\wb_j = \omegj\prod_{d=1}^{D-1}\gamma_{j,d}\), the right‐hand side becomes
\begin{equation}
    \bigl(\|\omegj\|_2^2 \cdot \textstyle\prod_{d=1}^{D-1}\gamma_{j,d}^2\bigr)^{1/D}
  = | \textstyle\prod_{d=1}^{D-1} \gamma_{j,d} \cdot \| \omegj \|_2 |^{2/D}
  = \|\omegj\cdot\prod_{d=1}^{D-1}\gamma_{j,d}\|_2^{2/D}
  = \|\wb_j\|_2^{2/D}.
\end{equation}
  
Summing over all $J$ groups yields
\begin{equation}
    \frac{1}{D}\bigl(\|\ob\|_2^2 + \|\gb\|_F^2\bigr)
  \;\ge\;
  \sum_{j=1}^J \|\wb_j\|_2^{2/D} = \| \wb \|_{2,2/D}^{2/D} \,,
\end{equation}
with equality holding if and only if
$\|\omegj\|_2^2 = \gamma_{j,1}^2 = \cdots = \gamma_{j,D-1}^2$ for each $j \in [J]$, i.e.\ exactly the balancedness condition \eqref{eq:final_balance_simple}. Under that condition, each group‐wise penalty attains its minimal value $\|\wb_j\|_2^{2/D}$ subject to the constraint that the effective parameter $\wj$ remains unchanged.  Finally, substituting back into
$\Log = \Lnull(\ogg) + \tfrac\lambda D(\|\ob\|_2^2+\|\gb\|_F^2)$
gives
\begin{equation}
    \Log
  = \Lnull(\wb)
    + \lambda \sum_{j=1}^J \|\wb_j\|_2^{2/D}
  = \Lnull(\wb) + \lambda \|\wb\|_{2,2/D}^{2/D}
  = \Lw,
\end{equation}
completing the proof.  
\end{proof}

\subsection{Proof of \cref{thm:equivalence}}

\begin{proof}
The two objectives are
\begin{align}
  \Log
  &= \Lnull\bigl(\ob\gating\gbodot\bigr)
     + \frac{\lambda}{D}\bigl(\|\ob\|_2^2 + \|\gb\|_F^2\bigr),
  \label{eq:def_Log}\\
  \Lw
  &= \Lnull(\wb)
     + \lambda\sum_{j=1}^J \|\wj\|_2^{2/D}.
  \label{eq:def_Lw}
\end{align}

For the first direction, suppose that $\what$ is a local minimizer of $\Lw$. Then there is $\varepsilon>0$ such that
\begin{equation}\label{eq:Lw_local}
  \Loss_{\wb}(\what)\le \Lw
  \,\, \forall \,\, \wb \in \mathcal{B}(\what,\varepsilon) \,,
\end{equation}
where $\mathcal{B}(\what,\varepsilon)$ denotes an $\varepsilon$-ball around $\what \in \Rp$. By the multiplicative and surjective nature of the $D$-Gating overparameterization $\gating$, we can always choose a \textit{balanced} representation $(\obhat,\hat{\gb})$ in the preimage of $\what$, i.e.,
\begin{align}
  \obhat\gating\hat{\bm{\gamma}}^{\odot} &= \what, 
  \quad
  \|\omegjhat\|_2^2=\hat\gamma_{j,1}^2=\cdots=\|\what_j\|_2^{2/D}
  \quad \forall \,\, j=1,\dots,J.
\end{align}
Using \cref{corr:loss_simplification}, balancedness implies that the $D$-gated objective simplifies to
\begin{equation}\label{eq:balance1}
  \Loss(\obhat,\hat{\gb}) = \Loss_{\wb}(\what).
\end{equation}
Further, by continuity of $\gating$, there is $\delta>0$ such that
\begin{equation}\label{eq:cont1}
  \|(\ob,\gb)-(\obhat,\hat{\gb})\|_2<\delta
  \;\Longrightarrow\;
  \|\ob\gating\gbodot - \what\|_2<\varepsilon.
\end{equation}
Hence all $(\ob,\gb) \in \mathcal{B}((\obhat,\hat{\gb}), \delta)$ map to some effective weight $\wb=\ob\gating\gbodot \in \mathcal{B}(\what, \varepsilon)$.
Moreover, the gated objective $\Log$ can be related to its non-differentiable counterpart $\Lw$ as $\Log = \Loss_{\wb}\bigl(\ob\gating\gbodot\bigr) + \lambda \mathcal{M}(\ob,\gb)$, where $\mathcal{M}(\ob,\gb):= D^{-1}(\| \ob \|_2^2+\|\gb\|_F^2)-\| \ob \gating \gbodot \|_2^{2/D}$ measures the (non-negative) misalignment of the $D$-Gating variables, achieving $0$ if and only if $(\ob,\gb)$ is a balanced representation of the effective weight $\wb = \ob \gating \gbodot$. Then, using \cref{eq:Lw_local} and \cref{eq:balance1}, we obtain the following chain of inequalities
\begin{align}
    \forall (\ob,\gb) \in \mathcal{B}((\obhat,\hat{\gb}), \delta): \Log = \Loss_{\wb}\bigl(\ob\gating\gbodot\bigr) + \lambda \underbrace{\mathcal{M}(\ob,\gb)}_{\geq 0} &\geq \Loss_{\wb} \bigl(\ob\gating\gbodot\bigr)\\ 
    &\geq \Loss_{\wb}(\what) \nonumber \\
    &= \Loss(\obhat,\hat{\gb})\,, \nonumber
\end{align}
showing $(\obhat,\hat{\gb})$ is a local minimizer of $\Log$.

\medskip

To show the reverse direction, assume $(\obhat,\hat{\gb})$ is a local minimizer of $\Log$. We can apply \cref{lemma:balancedness_simple} to establish the balancedness of the gating parameters $(\obhat,\hat{\gb})$ and further define the effective weight as $\what = \obhat\gating\hat{\bm{\gamma}}^{\odot}$. Since $(\obhat,\hat{\gb})$ is balanced, $\mathcal{M}(\obhat,\hat{\gb})=0$ and thus $\Loss(\obhat,\hat{\gb})=\Loss_{\wb}(\what)$. By local minimality of $(\obhat,\hat{\gb})$,  there exists $\delta>0$ such that
\begin{equation}\label{eq:Log_local}
  \Loss(\obhat,\hat{\gb})\le \Log\quad \forall \, (\ob,\gb) \in \mathcal{B}((\obhat, \hat{\gb}),\delta).
\end{equation}
Define for each group $j \in [J]$ the continuous function 
$r_j(\wj) = \|\wj\|_2^{1/D}$ on $\mathbb{R}^{p_j}$.
Given any $\wb'\in\Rp$ in a neighborhood of $\what$, we can construct corresponding balanced $D$-Gating parameters as follows:
\begin{align}
\gamma_{j,d}' = \begin{cases}
    \text{sign}(\hat\gamma_{j,d})\,r_j(\w'_j),
      &\hat\gamma_{j,d}\neq0,\\
    r_j(\w'_j), & \hat\gamma_{j,d}=0.
  \end{cases}
  \quad \omegj' = 
  \begin{cases}
    \wj'/\prod_{d=1}^{D-1} \gamma'_{j,d}, & r_j(\wj')\neq0,\\ 
    \mathbf0, & r_j(\wj')=0.
  \end{cases}
\end{align}
Then $\omegj'\prod_{d=1}^{D-1}\gamma_{j,d}'=\wj'$, and by construction, 
$\|\omegj'\|_2^2=\gamma_{j,d}'^2=r_j(\wj')^2=\|\wj'\|_2^{2/D}$ for all $j \in [J]$ and $d \in [D-1]$. Let $(\ob', \gb')$ denote the collection of group-wise gating parameters, i.e., $(\ob',\gb')=(\{\omegj'\}_{j=1}^J,\{\gamma'_{j,d}\}_{j=1,d=1}^{J,D-1})$. By \cref{corr:loss_simplification}, we obtain $\Loss(\ob',\gb')=\Loss_{\wb}(\wb')$.  Continuity of $r_j$ and of the map $\wj'\mapsto\omegj'$ guarantees the existence of $\varepsilon>0$ such that
\begin{equation}\label{eq:cont2}
  \|\wb'-\what\|_2<\varepsilon
  \;\Longrightarrow\;
  \|(\ob',\gb')-(\obhat,\hat{\gb})\|_2<\delta.
\end{equation}
Therefore, for every $\wb'\in \mathcal{B}(\what,\varepsilon)$, the constructed $D$-Gating parameters $(\ob',\gb')$ satisfy \cref{eq:Log_local} and thus $$ \forall \wb' \in \mathcal{B}(\what, \varepsilon): \, \Loss_{\wb}(\wb')
  = \Loss(\ob',\gb')
  \;\ge\;
  \Loss(\obhat,\hat{\gb})
  = \Loss_{\wb}(\what).$$ 
  It follows that $\what$ is a local minimizer of $\Lw$, completing the proof.
\end{proof}

\subsection{Proof of \cref{lemma:gf_decay}}


\begin{proof}
We differentiate the pair-wise imbalance defined in \cref{eq:imbalance_revised} with respect to time and treat the two cases of $d=1$ and $d \neq 1$ separately. In the following we use $\gpij \in \R$ to abbreviate $\prod_{d=1}^{D-1} \gamma_{j,d}$.

\textbf{Case 1:} For $d=1$, we have
\[
\mathcal{I}_{j,1,d'}(t)=\|\omegj(t)\|_2^2 - \gamma_{j,d'}(t)^2.
\]
Differentiating with respect to time yields
\begin{equation}
\frac{\mathrm{d}}{\mathrm{d} t}\mathcal{I}_{j,1,d'}(t)=2\,\omegj(t)^\top\dot{\ob}_j(t)-2\,\gamma_{j,d'}(t)\,\dot{\gamma}_{j,d'}(t).
\end{equation}
Substituting the dynamics from \cref{eq:grad_omega_and_gamma} (with the appropriate negative signs) gives
\begin{equation}
\begin{aligned}
\frac{\mathrm{d}}{\mathrm{d} t}\mathcal{I}_{j,1,d'}(t)
={}& -2\Biggl[\gpij\,\omegj(t)^\top\nabla_{\wj}\Lnull+\frac{2\lambda}{D}\|\omegj(t)\|_2^2\Biggr] \\
&\quad +2\,\gamma_{j,d'}(t)\Biggl[\left(\frac{\gpij}{\gamma_{j,d'}(t)}\right)\,\omegj(t)^\top\nabla_{\wj}\Lnull+\frac{2\lambda}{D}\,\gamma_{j,d'}(t)\Biggr].
\end{aligned}
\end{equation}
Since
\[
\gpij = \gamma_{j,d'}(t)\left(\frac{\gpij}{\gamma_{j,d'}(t)}\right),
\]
the terms involving $\omegj(t)^\top\nabla_{\wj}\Lnull$ cancel, leaving
\begin{equation}
\frac{\mathrm{d}}{\mathrm{d} t}\mathcal{I}_{j,1,d'}(t) = -\frac{4\lambda}{D}\Bigl(\|\omegj(t)\|_2^2-\gamma_{j,d'}(t)^2\Bigr)
= -\frac{4\lambda}{D}\,\mathcal{I}_{j,1,d'}(t).
\end{equation}

\textbf{Case 2:} For $d\neq 1$, we have
\[
\mathcal{I}_{j,d,d'}(t)=\gamma_{j,d}(t)^2-\gamma_{j,d'}(t)^2.
\]
Differentiating with respect to $t$ yields
\begin{equation}
\frac{\mathrm{d}}{\mathrm{d} t}\mathcal{I}_{j,d,d'}(t)=2\,\gamma_{j,d}(t)\,\dot{\gamma}_{j,d}(t)-2\,\gamma_{j,d'}(t)\,\dot{\gamma}_{j,d'}(t).
\end{equation}
Substituting the expressions from \cref{eq:grad_omega_and_gamma} for both $\dot{\gamma}_{j,d}(t)$ and $\dot{\gamma}_{j,d'}(t)$ and using the identity
\[
\gpij = \gamma_{j,d}(t)\left(\frac{\gpij}{\gamma_{j,d}(t)}\right),
\]
the cancellation of the terms involving $\omegj(t)^\top\nabla_{\wj}\Lnull$ follows analogously to the first case, yielding
\begin{equation}
\frac{\mathrm{d}}{\mathrm{d} t}\mathcal{I}_{j,d,d'}(t)=-\frac{4\lambda}{D}\Bigl(\gamma_{j,d}(t)^2-\gamma_{j,d'}(t)^2\Bigr)
=-\frac{4\lambda}{D}\,\mathcal{I}_{j,d,d'}(t).
\end{equation}
Thus, in both cases, we have
\[
\frac{\mathrm{d}}{\mathrm{d} t}\mathcal{I}_{j,d,d'}(t)=-\frac{4\lambda}{D}\,\mathcal{I}_{j,d,d'}(t),
\]
so that the solution to the differential equation is
\[
\mathcal{I}_{j,d,d'}(t)=\mathcal{I}_{j,d,d'}(0)e^{-\frac{4\lambda}{D}t}.
\]
\end{proof}

\subsection{Proof of \cref{lemma:loss_conv}}\label{app:proof_convergence_lossses}


\begin{proof}
For a fixed group $j$, the squared gating parameters are denoted as 
\begin{align}
a_1(t) &= \|\omegj(t)\|_2^2, \quad a_{d+1}(t)=\gjd(t)^2,\; d \in [D-1], \label{eq:factors_ad}
\end{align}
the group-level regularizer is $\Rog_j(t):=\frac{1}{D}\sum_{d=1}^{D}a_d(t)$, and the effective group weight is $\wj(t)=\omegj(t) \prod_{d=1}^{D-1}\gamma_{j,d}(t) = \omegj(t) \cdot \gpij(t)$, so that by construction $\|\wj(t)\|_2^{2/D}=\Bigl(\prod_{d=1}^{D}a_d(t)\Bigr)^{1/D}$. Note that by \cref{lemma:gf_decay}, for each group $j\in [J]$, the pair-wise imbalances (as in \cref{eq:imbalance_revised}) satisfy 
\begin{align}
\mathcal{I}_{j,d,d'}(t)=\mathcal{I}_{j,d,d'}(0)e^{-\frac{4\lambda}{D}t}. \label{eq:exp_decay_short}
\end{align}

The overall misalignment in group $j$, measuring the balancedness of the gating representation, is given by 
\begin{equation}
    \mathcal{M}_j(t):=\Rog_j(t)-\|\wj(t)\|_2^{2/D}\geq 0.
\end{equation}
Let 
$m_j(t)=\min_{1\le d\le D}a_d(t) \quad \text{and} \quad M_j(t)=\max_{1\le d\le D}a_d(t)$. Then, by definition, the maximal pair-wise imbalance for group $j \in [J]$ is $\mathcal{I}_{\max,j}(t):=M_j(t)-m_j(t)$. Since $\Rog_j(t)$ is an arithmetic mean, it satisfies $m_j(t)\le\Rog_j(t)\le M_j(t)$ and hence is upper bounded by $M_j(t)$. Because the geometric mean is not smaller than its smallest component, it is lower bounded by $m_j(t)$, i.e., $\|\wj(t)\|_2^{2/D}=\Bigl(\prod_{d=1}^{D}a_d(t)\Bigr)^{1/D}\ge m_j(t)$, so that we have 
\begin{equation}
    \mathcal{M}_j(t)=\Rog_j(t)-\|\wj(t)\|_2^{2/D}\le M_j(t)-m_j(t)=\mathcal{I}_{\max,j}(t).
\end{equation}
By \cref{lemma:gf_decay}, for any pair $(d,d')$ in group $j$ we have 
\begin{equation}
    |a_d(t)-a_{d'}(t)|\le |a_d(0)-a_{d'}(0)|e^{-\frac{4\lambda}{D}t},
\end{equation}
so that $\mathcal{I}_{\max,j}(t)\le \mathcal{I}_{\max,j}(0)e^{-\frac{4\lambda}{D}t}$. Hence, $\mathcal{M}_j(t)\le \mathcal{I}_{\max,j}(0)e^{-\frac{4\lambda}{D}t}$ for every group $j$. Defining $\mathcal{I}_{\max}(0)=\max_{j\in\{1,\dots,J\}}\mathcal{I}_{\max,j}(0)$, we obtain $\mathcal{M}_j(t)\le \mathcal{I}_{\max}(0)e^{-\frac{4\lambda}{D}t}$ for all $j$. Summing over $j$, the overall misalignment is 
\begin{equation}
   \mathcal{M}(t)=\sum_{j=1}^{J}\mathcal{M}_j(t)\le J\,\mathcal{I}_{\max}(0)e^{-\frac{4\lambda}{D}t}. 
\end{equation}
By adding and subtracting $\Rw(\wb):=\|\wb\|_{2,2/D}^{2/D}$ to the $D$-gated objective $\Log$, we can express it as $\Lw$ plus the misalignment $\mathcal{M}\obgb \geq0$, i.e.,
\begin{equation}
    \Log=\Lnull(\ogg) + \lambda \Rog\obgb + \lambda \big(\Rw(\wb)-\Rw(\wb)\big) = \Loss_{\wb}(\ogg)+\lambda \mathcal{M}\obgb\,.
\end{equation}
As the difference between both losses is simply $\Loss(\ob(t),\gb(t))-\Loss_{\wb}(\wb(t))=\lambda \mathcal{M}(t)$, it follows that 
\begin{equation}
    \Loss(\ob(t),\gb(t))-\Loss_{\wb}(\wb(t))\le \underbrace{\lambda J\,\mathcal{I}_{\max}(0)}_{:= C}e^{-\frac{4\lambda}{D}t}=Ce^{-\frac{4\lambda}{D}t}\,,
\end{equation}
where $C$ is an initialization-dependent constant. We conclude that as $t\to\infty$, $\Loss(\ob(t),\gb(t))$ converges at least exponentially fast to $\Loss_{\wb}(\wb(t))$, with their difference vanishing with rate increasing in $\lambda$ and decreasing in $D$.
\end{proof}

\subsection{Proof of \cref{lem:imbalance_evolution_sgd}}

\begin{proof}
We first abbreviate the gradient of $\mathcal{L}_0$ w.r.t. $\mathbf{w}_j$ as $\bm{g}_{j}^{(t)} := \nabla_{\mathbf{w}_j} \mathcal{L}_0(\mathbf{w}^{(t)})$. For clarity of exposition, w.l.o.g., consider full-batch gradient descent updates, given by
\begin{small}
\begin{align} \label{eq:grad-updates-omeg-gamma}
\bm{\omega}_j^{(t+1)} = \bm{\omega}_j^{(t)} - \eta \Big( (\gamma_j^{\odot})^{(t)} \bm{g}_{j}^{(t)} + \frac{2\lambda}{D} \bm{\omega}_j^{(t)} \Big), \, \,
\gamma_{j,d}^{(t+1)} = \gamma_{j,d}^{(t)} - \eta \Big(  \prod_{d' \neq d} \gamma_{j,d'}^{(t)} \cdot   \bm{\omega}_j^{(t)\top} \bm{g}_{j}^{(t)} + \frac{2\lambda}{D} \gamma_{j,d}^{(t)} \Big),
\end{align}
\end{small}
and pair-wise imbalances $\mathcal{I}_{j,1,d'}^{(t)} = \|\bm{\omega}_j^{(t)}\|_2^2 - \big( \gamma_{j,d'}^{(t)} \big)^2$ and $\mathcal{I}_{j,d,d'}^{(t)} = \big( \gamma_{j,d}^{(t)} \big)^2 - \big( \gamma_{j,d'}^{(t)} \big)^2$ for $d,d'>1$. We first inspect the vector–scalar case $\mathcal{I}_{j,1,d'}^{(t+1)}$ and start by expanding the squared norm of $\bm{\omega}_j^{(t+1)}$:
\begin{align} \label{eq:omeg-squared}
\|\bm{\omega}_j^{(t+1)}\|_2^2 &= \Big\| \bm{\omega}_j^{(t)} - \eta \Big( (\gamma_j^{\odot})^{(t)} \bm{g}_{j}^{(t)} + \frac{2\lambda}{D} \bm{\omega}_j^{(t)} \Big) \Big\|_2^2 \nonumber \\
&= \|\bm{\omega}_j^{(t)}\|_2^2 - 2\eta (\gamma_j^{\odot})^{(t)} \bm{\omega}_j^{(t)\top} \bm{g}_{j}^{(t)} - \frac{4\lambda}{D} \eta \|\bm{\omega}_j^{(t)}\|_2^2 + \eta^2 \Big\| (\gamma_j^{\odot})^{(t)} \bm{g}_{j}^{(t)} + \frac{2\lambda}{D} \bm{\omega}_j^{(t)} \Big\|_2^2.
\end{align}
Similarly, we expand the squared updated scalar $\gamma_{j,d'}^{(t+1)}$ and separate the terms by their order in $\eta$,
\begin{align}\label{eq:gamma-squared} 
\big( \gamma_{j,d'}^{(t+1)} \big)^2 &= \big( \gamma_{j,d'}^{(t)} \big)^2 - 2\eta \gamma_{j,d'}^{(t)} \cdot \prod_{d'' \neq d'} \gamma_{j,d''}^{(t)} \cdot   \bm{\omega}_j^{(t)\top} \bm{g}_{j}^{(t)} - \frac{4\lambda}{D} \eta \big( \gamma_{j,d'}^{(t)} \big)^2 \nonumber \\
&+ \eta^2 \Big( \prod_{d'' \neq d'} \gamma_{j,d''}^{(t)} \cdot   \bm{\omega}_j^{(t)\top} \bm{g}_{j}^{(t)} + \frac{2\lambda}{D} \gamma_{j,d'}^{(t)} \Big)^2.
\end{align}
Subtracting (\ref{eq:gamma-squared}) from (\ref{eq:omeg-squared}) to get $\mathcal{I}_{j,1,d'}^{(t+1)}$, the first-order terms $- 2\eta (\gamma_j^{\odot})^{(t)} \bm{\omega}_j^{(t)\top} \bm{g}_{j}^{(t)}$ from both expansions cancel, since $(\gamma_j^{\odot})^{(t)} = \gamma_{j,d'}^{(t)} \cdot \prod_{d'' \neq d'} \gamma_{j,d''}^{(t)}$, and we obtain after re-grouping of terms:
\begin{equation}
\mathcal{I}_{j,1,d'}^{(t+1)} = \Big(1 - \frac{4\lambda}{D} \eta\Big) \mathcal{I}_{j,1,d'}^{(t)} + \eta^2 \Delta_{j,1,d'}^{(t)}\,,
\end{equation}
where 
\begin{align}
\Delta_{j,1,d'}^{(t)} :&= \big\| (\gamma_j^{\odot})^{(t)} \bm{g}_{j}^{(t)} + \frac{2\lambda}{D} \bm{\omega}_j^{(t)} \big\|_2^2 - \Big(  \prod_{d'' \neq d'} \gamma_{j,d''}^{(t)} \cdot  \bm{\omega}_j^{(t)\top} \bm{g}_{j}^{(t)} + \frac{2\lambda}{D} \gamma_{j,d'}^{(t)} \Big)^2 \nonumber \\
&= \| \nabla_{\ob_j} \Loss(\ob^{(t)}, \gb^{(t)}) \|_2^2 - \Big(\frac{\partial \Loss(\ob^{(t)}, \gb^{(t)})}{\partial \gamma_{j,d'}} \Big)^2
\end{align}
is the second-order residual that reduces to the imbalance of \textit{gradients}.\footnote{Note that this can be further simplified to $\Delta_{j,1,d'}^{(t)} = \big( (\gamma_j^{\odot \setminus d'})^{(t)} \big)^2 \cdot \Big( ( \gamma_{j,d'}^{(t)} )^2 \|\bm{g}_j^{(t)}\|_2^2 - \big( \bm{\omega}_j^{(t)\top} \bm{g}_j^{(t)} \big)^2 \Big)
+  \frac{4\lambda^2}{D^2} \mathcal{I}_{j,1,d'}^{(t)}$, showing that we can not fully factor $\mathcal{I}_{j,1,d'}^{(t)}$ out of $\Delta_{j,1,d'}^{(t)}$. Hence $\mathcal{I}_{j,1,d'}^{(t)}=0$ does not generally imply $\mathcal{I}_{j,1,d'}^{(t+1)}=0$ for vector-scalar imbalances where $d>1$.} This further implies that close to stationarity of the $D$-gated objective, $\Delta_{j,1,d'}^{(t)}$ also vanishes and the discrete-time dynamics reduce to simple geometric decay .\\
For the scalar-scalar imbalances $\mathcal{I}_{j,d,d'}^{(t+1)}$, we already derived $(\gamma_{j,d'}^{(t+1)})^2$ in Eq.~(\ref{eq:gamma-squared}), which also symmetrically defines $(\gamma_{j,d}^{(t+1)})^2$.
Subtracting both, the first-order terms cancel again, and we obtain
\begin{equation}
\mathcal{I}_{j,d,d'}^{(t+1)} = \big(1 - \frac{4\lambda}{D} \eta \big) \mathcal{I}_{j,d,d'}^{(t)} + \eta^2 \big( (A_{j,d}^{(t)})^2 - (A_{j,d'}^{(t)})^2 \big) := \big(1 - \frac{4\lambda}{D} \eta \big) \mathcal{I}_{j,d,d'}^{(t)} + \eta^2 \Delta_{j,d,d'}^{(t)} ,
\end{equation}
where $(A_{j,d}^{(t)})^2 := \big( \prod_{d'' \neq d} \gamma_{j,d''}^{(t)} \cdot  \bm{\omega}_j^{(t)\top} \bm{g}_{j}^{(t)} + \frac{2\lambda}{D} \gamma_{j,d}^{(t)} \big)^2 = \Big(\frac{\partial \Loss(\ob^{(t)}, \gb^{(t)})}{\partial \gamma_{j,d}} \Big)^2$. This shows point i).\\
Similar to the vector-scalar case, $\Delta_{j,d,d'}^{(t)}$ being a difference of squared partial derivatives implies that the term vanishes at stationarity, resulting in geometric decay.

\noindent For point ii), we assume balancedness, i.e., $\mathcal{I}_{j,d,d'}^{(t)}=0$, for any two scalar factors $d,d'>1$. The imbalance under (S)GD evolves as  $\mathcal{I}_{j,d,d'}^{(t+1)} = \big(1 - \frac{4\lambda}{D} \eta \big) \mathcal{I}_{j,d,d'}^{(t)} + \eta^2 \big( (A_{j,d}^{(t)})^2 - (A_{j,d'}^{(t)})^2 \big)$. Therefore, we must show that pair-wise balancedness $\mathcal{I}_{j,d,d'}^{(t)}=0$ (with potentially non-zero factors $\gamma_{j,d},\gamma_{j,d'}$) implies $\Delta_{j,d,d'}^{(t+1)}=0$ for $d,d'>1$. Abbreviating $\big(\gamma_{j}^{\odot \setminus d}\big)^{(t)} := \prod_{d'' \neq d} \gamma_{j,d''}^{(t)}\,$, we factor the second-order term as
\begin{align}
    \Delta_{j,d,d'}^{(t)}=(A_{j,d}^{(t)})^2 - (A_{j,d'}^{(t)})^2 = (A_{j,d}^{(t)} + A_{j,d'}^{(t)})(A_{j,d}^{(t)}-A_{j,d'}^{(t)}).
\end{align}
The sum term is $\big((\gamma_{j}^{\odot \setminus d})^{(t)} + (\gamma_{j}^{\odot \setminus d'})^{(t)}\big) \cdot \bm{\omega}_j^{(t)\top} \bm{g}_{j}^{(t)} + \frac{2\lambda}{D}(\gamma_{j,d}^{(t)} + \gamma_{j,d'}^{(t)})$, and the difference term $\big((\gamma_{j}^{\odot \setminus d})^{(t)} - (\gamma_{j}^{\odot \setminus d'})^{(t)}\big) \cdot \bm{\omega}_j^{(t)\top} \bm{g}_{j}^{(t)} + \frac{2\lambda}{D}(\gamma_{j,d}^{(t)} - \gamma_{j,d'}^{(t)})$. Expanding their product, we obtain:
{\footnotesize
\begin{align}
\Delta_{j,d,d'}^{(t)} 
&= \left((\gamma_{j}^{\odot \setminus d})^{(t)} + (\gamma_{j}^{\odot \setminus d'})^{(t)}\right)
\left((\gamma_{j}^{\odot \setminus d})^{(t)} - (\gamma_{j}^{\odot \setminus d'})^{(t)}\right) 
\big(\bm{\omega}_j^{(t)\top} \bm{g}_j^{(t)}\big)^2 + \Big(\frac{4\lambda^2}{D^2}\Big) \left((\gamma_{j,d}^{(t)})^2 - (\gamma_{j,d'}^{(t)})^2\right) \nonumber \\
&\quad + \Big(\frac{2\lambda}{D}\Big) \left( \gamma_{j,d}^{(t)} + \gamma_{j,d'}^{(t)} \right)
\left((\gamma_{j}^{\odot \setminus d})^{(t)} - (\gamma_{j}^{\odot \setminus d'})^{(t)}\right)
\bm{\omega}_j^{(t)\top} \bm{g}_j^{(t)} \nonumber \\
&\quad + \Big(\frac{2\lambda}{D}\Big) \left( \gamma_{j,d}^{(t)} - \gamma_{j,d'}^{(t)} \right)
\left((\gamma_{j}^{\odot \setminus d})^{(t)} + (\gamma_{j}^{\odot \setminus d'})^{(t)}\right)
\bm{\omega}_j^{(t)\top} \bm{g}_j^{(t)}.
\end{align}}
The last two mixed terms cancel, which can be seen by summing them and factoring:
\begin{align} 
\left(\frac{2\lambda}{D}\right) \bm{\omega}_j^{(t)\top} \bm{g}_j^{(t)} \cdot &\Big[ \left( \gamma_{j,d}^{(t)} + \gamma_{j,d'}^{(t)} \right)
\left((\gamma_{j}^{\odot \setminus d})^{(t)} - (\gamma_{j}^{\odot \setminus d'})^{(t)}\right) \nonumber \\
&+  \left( \gamma_{j,d}^{(t)} - \gamma_{j,d'}^{(t)} \right)
\left((\gamma_{j}^{\odot \setminus d})^{(t)} + (\gamma_{j}^{\odot \setminus d'})^{(t)}\right) \Big]
\end{align}
Now, expanding the bracket, all terms but $2 \gamma_{j,d}^{(t)} \big(\gamma_{j}^{\odot \setminus d}\big)^{(t)}-2 \gamma_{j,d'}^{(t)} \big(\gamma_{j}^{\odot \setminus d'}\big)^{(t)}=2 (\gamma_j^{\odot})^{(t)}-2(\gamma_j^{\odot})^{(t)}=0$ cancel, so that the remaining expression for the second-order residual is
\begin{align}
\Delta_{j,d,d'}^{(t)} 
&= \Big( \big( (\gamma_{j}^{\odot \setminus d})^{(t)}\big)^2 - \big((\gamma_{j}^{\odot \setminus d'})^{(t)}\big)^2\Big)
     \Big(\bm{\omega}_j^{(t)\top} \bm{g}_j^{(t)}\Big)^2
+ \Big(\frac{4\lambda^2}{D^2}\Big) \mathcal{I}_{j,d,d'}^{(t)}.
\end{align}
Finally, using $\big(\gamma_{j}^{\odot \setminus d}\big)^{(t)} = \gamma_{j,d'}^{(t)} \cdot (\gamma_j^{\odot \setminus d,d'})^{(t)} :=\gamma_{j,d'}^{(t)} \cdot \prod_{d'' \in [D-1] \setminus \{d,d'\}} \gamma_{j,d''}^{(t)}$, we can simplify
\begin{align}
\Delta_{j,d,d'}^{(t)} &=  \Big( \Big( \big(\gamma_{j}^{\odot \setminus d}\big)^{(t)} \Big)^2 - \Big( \big(\gamma_{j}^{\odot \setminus d'}\big)^{(t)}\Big)^2 \Big) \big( \bm{\omega}_j^{(t)\top} \bm{g}_j^{(t)} \big)^2 + \Big( \frac{4\lambda^2}{D^2} \Big) \mathcal{I}_{j,d,d'}^{(t)} \nonumber \\
&=\big((\gamma_{j,d'}^{(t)})^2-(\gamma_{j,d}^{(t)})^2 \big) ((\gamma_j^{\odot \setminus d,d'})^{(t)})^2 \big(\bm{\omega}_j^{(t) \top}  \bm{g}_j^{(t)} \big)^2 + \Big( \frac{4\lambda^2}{D^2} \Big) \mathcal{I}_{j,d,d'}^{(t)}\, \nonumber\\
&= \mathcal{I}_{j,d,d'}^{(t)} \Big[- \big(\bm{\omega}_j^{(t) \top} \bm{g}_j^{(t)}\big)^2 \cdot ((\gamma_j^{\odot \setminus d,d'})^{(t)})^2 + \frac{4 \lambda^2}{D^2} \Big] 
\end{align}
This shows for scalar gates $\gamma_{j,d},\gamma_{j,d'}$ with $d,d' \in [D] \setminus \{1\}$, all terms in $\mathcal{I}_{j,d,d'}^{(t+1)}$ depend multiplicatively on $\mathcal{I}_{j,d,d'}^{(t)}$, and hence $\mathcal{I}_{j,d,d'}^{(t)}=0$ implies $\mathcal{I}_{j,d,d'}^{(t+1)}=(1-\frac{4 \lambda}{D}) \cdot 0 + \eta^2 \cdot 0 = 0$, proving point ii).

\noindent For point iii), note that a balanced zero representation $(\bm{\omega}_j^{(t)},\{\gamma_{j,d}^{(t)}\}_{d=1}^{D-1})=\bm{0}$ of all vector-scalar and scalar-scalar factor pairs satisfies $\mathcal{I}_{j,d,d'}^{(t)}=0$ for all $d,d' \in [D]$. Since all terms in $\Delta_{j,d,d'}^{(t+1)}$ are multiplicative in either $\bm{\omega}_j^{(t)}$ or $\gamma_{j,d}^{(t)}$, it vanishes. Therefore, $\mathcal{I}_{j,d,d'}^{(t+1)}=(1-\frac{4 \lambda}{D} \eta) \cdot 0 + \eta^2 \cdot 0 = 0$ for $d,d' \neq 1$. Analogously, $\Delta_{j,1,d'}^{(t+1)}=0$ using the same argument. Hence $\mathcal{I}_{j,1,d'}^{(t+1)}=0$. Beyond simply enforcing balancedness between both vector and scalar gating factors, inspecting the updates for $\bm{\omega}_j^{(t)}=\bm{0}$ and $\gamma_{j,d}^{(t)}=0$ in Eq.~(\ref{eq:grad-updates-omeg-gamma}) directly shows that $\bm{\omega}_j^{(t+1)}=\bm{0}-\eta \cdot \bm{0}=\bm{0}$ and $\gamma_{j,d}^{(t+1)}=0- \eta \cdot 0=0$. By induction, the iterates of a balanced zero representation in group $j \in [J]$ are thus confined to $\mathbf{w}_j^{(t')}=\bm{0}\, \forall \, t'>t$ under (S)GD dynamics. This completes the proof.

\end{proof}


\section{Experimental details} \label{app:exp-details}

In all our experiments, we evaluate $D$-Gating only for $D \in \{2,3,4\}$, resulting in induced $L_{2,1},L_{2,2/3}$, and $L_{2,0.5}$ regularization, following broadly established ranges for $L_{p,q}$ group regularization \cite{hu2017group}. Although an ever deeper $D$-Gating parametrization better approximates a group $L_0$ penalty, the increase in non-convexity potentially causes numerical instabilities without necessarily improving performance beyond $4$-Gating (cf.~\cref{app:depth-and-instability}). Nevertheless, in the majority of our experiments, there is a marked distinction between convex group penalization ($D=2$) and the non-convex extension ($D>2$). Throughout all experiments, $\ob$ is initialized using a standard initialization scheme as for $\w$, while the gating parameters $\gamma_{j,d}$ are initialized with unity, i.e., $\gb^{(0)}=\bm{1}_{J} \bm{1}^{\top}_{D-1}$, see \cref{alg:d-gating-train}.

\subsection{Details on group lasso simulation} \label{app:group_lasso}

\paragraph{Simulation details} For our group sparse linear regression set-up, we simulate $n=200$ training (and $2000$ test) samples with $p=200$ features grouped into $40$ groups of $5$ features each. $7$ feature groups contain informative (non-zero) weights and $33$ groups contain noise features only. We draw both the feature values and the (informative) weights from $\mathcal{N}(0,1)$ and likewise add standard Gaussian additive noise to the simulated predictor to obtain the targets. 

\paragraph{Training details} We train the $D$-gated models and direct $L_{2,1}$ penalization using full-batch gradient descent for $1500$ iterations using a cosine learning rate schedule with initial learning rate $5 \times 10^{-2}$ and $0.9$ momentum. We consider parameter groups with norm smaller than $10^{-6}$ to be zero. For the specialized optimization routine to solve the Group Lasso efficiently, we use the accelerated proximal gradient method proposed by \cite{simon2013sparse}. The \textit{oracle} method shown in \cref{fig:linmod-comb} is the efficient ordinary least squares estimator computed only on the truly non-zero coefficients. Within each simulation, all regularized methods are run on a grid of $30$ $\lambda$ values spaced logarithmically between $10^{-5}$ and $15$. The procedure is repeated over $10$ simulations and the average results reported in \cref{fig:linmod-comb}.

\subsection{Details on feature selection experiments} \label{app:feature-selection}

\paragraph{Architecture} As a backbone for all methods ($D$-Gating, LassoNet, HSIC), we use a fully-connected LeNet-300-100 architecture \cite{lecun1998gradient} with two hidden layers (300 and 100 neurons) and ReLU activation functions and Kaiming Normal initialization. To obtain feature selection for non-linear models using our approach, we group the first-layer weights of the network by input feature and apply $D$-Gating. We train the network on all classification tasks and methods using SGD for 100 epochs with a batch size of $256$ and cosine schedule with an initial learning rate of $0.1$. All models are initialized using a Kaiming Normal scheme, with the scaling factors for $D$-Gating being initialized with unity (cf.~\cref{alg:d-gating-train}). Moreover, we consider parameter groups with $2$-norm smaller than $\texttt{float32.mach.eps} \approx 1.19 \times 10^{-7}$ to be $0$. \cref{tab:featsel-datasets} describes the basic properties of the used datasets. For datasets which do not explicitly provide a test set, we randomly assign $20\%$ of the samples to the test set. Note that, to ensure a fair comparison to $D$-Gating, LassoNet \cite{lemhadri2021lassonet} is implemented without an additional debiasing step that retrains the model only on the selected features.

\begin{table}[t]
\centering
\caption{Summary of datasets used in feature selection experiments.}
\label{tab:featsel-datasets}
\resizebox{0.8\textwidth}{!}{%
\begin{tabular}{lrrrr}
\toprule
Dataset & Training Samples & Test Samples & Classes & Input Features \\
\midrule
ISOLET \cite{fanty1990spoken} & 6,328 & 1,559 & 26 & 617\\
COIL20 \cite{nene1996columbia} & 1,152 & 288 & 20 & 400\\
ACTIVITY \cite{anguita2013public} & 4,252 & 1,492 & 6 & 561\\
MNIST \cite{lecun1989optimal} & 60,000 & 10,000 & 10 & 784\\
F-MNIST \cite{xiao2017fashion} & 60,000 & 10,000 & 10 & 784\\
MADELON \cite{guyon2003design} & 2,080 & 520 & 2 & 500\\

\bottomrule
\end{tabular}
}
\end{table}

\subsection{Details on filter sparsity experiments} \label{app:filter-sparsity}

\paragraph{Architectures} VGG-16 for CIFAR-10 and SVHN consists of 13 convolutional layers and 3 fully-connected layers \cite{simonyan2015very}. The convolutional layers are organized into five groups: two layers with 64 filters, two with 128 filters, 3 layers with 256 filters, 3 layers with 512 filters, and another 3 layers with 512 filters. After each group, we apply a $2 \times 2$ max-pooling operation. All filters are $3 \times 3$. We add batch normalization immediately before every ReLU activation. To reduce parameters compared to the ImageNet version, we follow \cite{zagoruyko2015cifar} and replace the two dense layers before the output with a single fully-connected layer of $512$ neurons.\\
ResNet-18 is an 18-layer deep residual network introduced by \cite{he2016deep}. For small-image datasets, we follow, e.g., \cite{tanaka2020pruning}, by replacing the initial $7 \times 7$ convolution and the following max-pooling layer with a single $3 \times 3$. The model begins with that convolution, then proceeds through four stages of basic residual blocks, each stage consisting of two blocks, with filter counts of $[64,128,256,512]$. Finally, global average pooling reduces the feature maps before the fully connected output layer. In all methods we only regularize the convolutional but not the fully-connected layers, either through $D$-Gating or by directly regularizing the filter weights using an $L_{2,1}$ penalty \cite{wen2016structuredsparsity, scardapane2017group, li2017pruningfiltersefficientconvnets} or the batch normalization scaling variables as in network slimming \cite{liu2017learning}.

\paragraph{Data} We conduct experiments on three standard image classification benchmark tasks described in \cref{tab:imgclassif-datasets}. We use the train/test split provided by the datasets. Additionally, we use standard pre-processing and data augmentation techniques, namely normalizing the input images and employing horizontal flips, width or height shifts up to $12.5\%$, and up to $15^{\circ}$ rotations.

\begin{table}[pt]
\centering
\caption{Summary of datasets used in image classification experiments.}
\label{tab:imgclassif-datasets}
\resizebox{0.8\textwidth}{!}{%
\begin{tabular}{lrrrr}
\toprule
Dataset & Training Samples & Test Samples & Classes & Input Features \\
\midrule
CIFAR-10 \cite{krizhevsky2009learning} & 50,000 & 10,000 & 10 & 3,072 (32$\times$32$\times$3) \\
SVHN \cite{netzer2011reading} & 73,257 & 26,032 & 10 & 3,072 (32$\times$32$\times$3) \\
CIFAR-100 \cite{krizhevsky2009learning} & 50,000 & 10,000 & 100 & 3,072 (32$\times$32$\times$3) \\

\bottomrule
\end{tabular}
}
\end{table}

\paragraph{Training} Our training hyperparameter settings generally follow broadly established configurations for the respective tasks \cite{simonyan2015very, he2016deep, zagoruyko2015cifar}. Because established learning rate settings were found to perform suboptimally, we scanned the interval $[10^{-3},1]$ using a small regularization strength $\lambda$ for each method and dataset separately. \cref{tab:imgclassif_hyperparams} lists the hyperparameters for the filter sparsity experiments. 

\begin{table}[htbp]
\centering
\caption{Training hyperparameters for different architectures and image classification datasets. The learning rates correspond to $D$-Gating with $D=\{2,3,4\}$ (set) and the comparison methods (second value). The comparison methods use standard Kaiming initialization.}
\label{tab:imgclassif_hyperparams}
\small
\resizebox{0.99\textwidth}{!}{
\begin{tabular}{lcccccccc}
\hline
Architecture & Dataset & Epochs & Batch size & Optim. & Mom. & Init. & LR & Schedule \\
\cline{2-9}
\multirow{2}{*}{\textbf{VGG-16}} 
 & \tiny{CIFAR-10} & 200 & 128 & SGD & 0.9 & Kaiming Normal & \{0.3,0.4,0.4\}, 0.1 & Cosine \\
 & \tiny{SVHN} & 200 & 128 & SGD & 0.9 & Kaiming Normal & \{2e-3, 3e-3, 3e-3\}, 2e-3 & Cosine \\
\cline{2-9}
\textbf{ResNet-18} & \tiny{CIFAR-100} & 200 & 128 & SGD & 0.9 & Kaiming Normal & \{0.2,0.3,0.4\}, 0.2 & Cosine \\
\hline
\end{tabular}
}

\end{table}

In $D$-Gating, the sparsity is controlled by the regularization strength $\lambda$. To obtain the accuracy-sparsity tradeoffs, we therefore train models for each $D \in \{2,3,4\}$ along a grid of $\lambda$ values that spans the whole sparsity range. For the comparison methods, direct optimization of the sparsity-inducing $L_{2,1}$ and $L_1$ penalties does not result in sparse filters whose norm is below $\texttt{float32.mach.eps} \approx 1.19 \times 10^{-7}$. To achieve sparsity and construct the tradeoff curves, we also train these models along a grid of $\lambda$ values but subsequently post-hoc prune each of these models along a sequence of pre-defined pruning ratios $\{0.1, 0.2, \ldots, 0.9, 0.95, 0.98, 0.99\}$. The tradeoff curves are then constructed by taking the best performance over all $\lambda$ values separately at each pruning ratio. This optimally reflects different regularization requirements at different sparsity ranges. In contrast to the original proposals of \cite{wen2016structuredsparsity, li2017pruningfiltersefficientconvnets, liu2017learning}, we implement both direct $L_{2,1}$ regularization with post-hoc filter pruning and network slimming ($L_1$ regularization of the batch normalization scaling parameters) as a one-shot procedure without retraining after sparsification to ensure a fair comparison to $D$-Gating, which can be seen as merely another optimization vehicle to optimize non-smooth structured sparsity penalties, making them comparable outside of a sophisticated pruning pipeline. Moreover, like direct optimization of these penalties, $D$-Gating can be arbitrarily combined with other pruning and retraining schemes.

\subsection{Details on language modeling experiments}\label{app:nanogpt}

We choose a variant of NanoGPT trained on the character-level TinyShakespeare dataset comprising $5,458,199$ tokens and a vocabulary size of $91$ different symbols. Deviating from the usual performance evaluation metrics for language models, the character-level granularity also allows for reporting the validation accuracy instead of only the perplexity. We set aside $10\%$ of the training data for validation purposes. \\

\begin{figure}[ht]
    \centering
    \includegraphics[width=\linewidth]{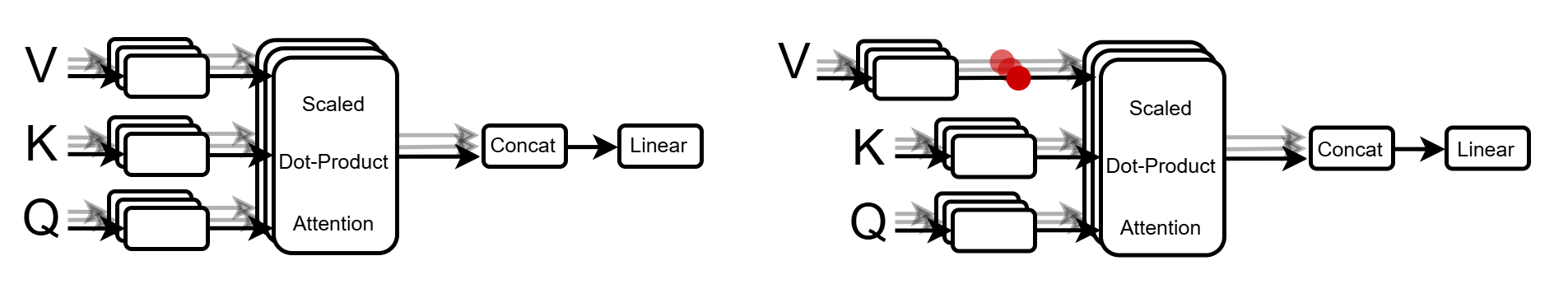}
    \caption[]{Schematic illustration of $D$-Gating overparameterization to achieve structured head sparsity in a multi-head attention layer. \textbf{Left}: original attention layer. \textbf{Right}: $D$-gated value matrices sparsify the whole attention head. Red nodes indicate the location of the additional gating parameters.}
    \label{fig:attention-head-schematic}
\end{figure}

Our NanoGPT implementation contains $10.8$ mio. parameters and has $6$ layers, $12$ attention heads per multi-head attention layer, block size of $256$, an embedding dimension of $384$ and dropout with parameter $0.2$ between the layers. We train all models on a grid of logarithmically spaced $\lambda$ values between $10^{-4}$ and $10^2$ for $3000$ iterations using a batch size of $32$ and Adam \cite{kingma2014adam} with a cosine schedule and initial learning rate $10^{-3}$. Since direct optimization of the $L_{2,1}$ penalty does not yield sparse solutions, defined as having a $2$-norm smaller than $10^{-5}$, we further prune these models using structured global magnitude pruning to sparsity ratios $\{0.05, 0.1, 0.2, \ldots, 0.9, 0.95\}$, and report the best performance over all $\lambda$ values for a given sparsity ratio. Note that the $D$-gated models incorporate no additional pruning.\\
\cref{fig:attention-head-schematic} illustrates how we implement structured attention head penalization using $D$-Gating. Note that we only gate the value matrices of each head, which suffices for the whole attention head to be removed when sparsified.

\subsection{Details on Neural Oblivious Decision Ensembles}\label{app:node}

We optimize the squared loss on the Wine quality prediction task ($n=4,898; p=11$) \cite{wine_quality_186} using SGD with Nesterov momentum $0.9$ and a cosine decay learning rate schedule with initial learning rate $0.4$. The models are trained for $400$ epochs without early stopping using a batch size of $1024$. \cref{fig:NODE-gated-schematic} describes the application of $D$-Gating in NODEs for $D=2$.

\begin{figure}[ht]
    \centering
    \includegraphics[width=\linewidth]{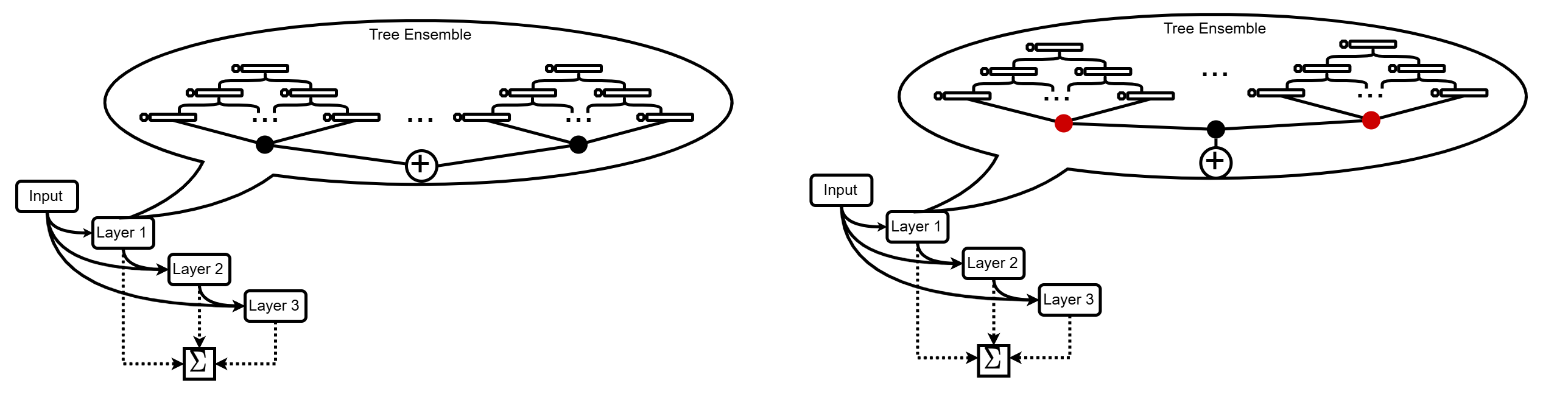}
    \caption[]{Schematic illustration of $D$-Gating overparameterization to achieve structured tree sparsity in a NODE model. \textbf{Left}: original NODE structure. \textbf{Right}: NODE with $2$-gated tree structures (multiplied by a shared factor).}
    \label{fig:NODE-gated-schematic}
\end{figure}

\section{Computational Environment}\label{app:computational_environment}

All experiments were conducted either on a single NVIDIA RTX A6000 or A4000 GPU with 48GB and 16GB of memory, respectively. Smaller experiments, e.g., for toy models, were carried out on standard CPU workstations. The total single-GPU runtime for all experiments is estimated to be around $500$ hours.

\section{Further experiments and ablation studies} \label{app:further_exp}

\subsection{Additional plots for filter sparsity in image classification}\label{app:lambdapath-imgclassif}

Besides the accuracy-sparsity tradeoff, the reduction of FLOPs through is of special interest for structured sparsification methods. Besides \cref{tab:speedup_drops}, we show the full tradeoff curves for accuracy and theoretical speed-up, measured as the ratio of FLOPs of the original and sparse models, in \cref{fig:img-classif-speedup}. 

\begin{figure}[ht]
    \centering
    \includegraphics[width=\linewidth]{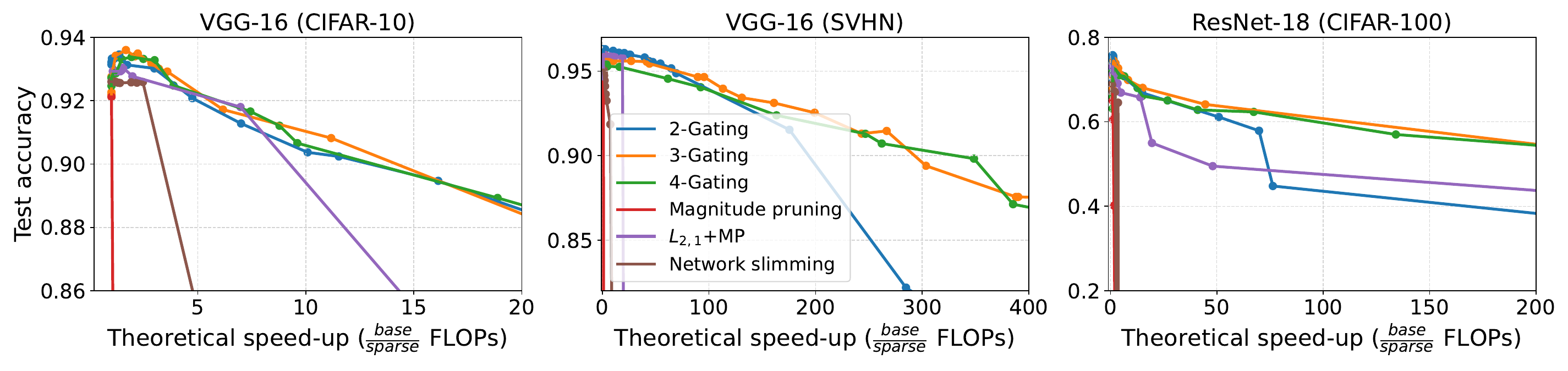}
    \caption[]{Test accuracy vs. theoretical speed-up (measured as the ratio of FLOPs) of $D$-gated convolutional neural networks and comparison methods.}
    \label{fig:img-classif-speedup}
\end{figure}

Additionally, \cref{fig:img-classif-lambdapath} shows the regularization paths for $D$-Gating with $D \in \{2,3,4\}$ for all three image classification datasets we experimented on. Complementing the conclusions from the tradeoff curves in the main text, both plots corroborate our findings of superior tradeoffs for $D=\{3,4\}$ over $D=2$, as well as increased sparsity at the same regularization parameter $\lambda$.

 \begin{figure}[ht]
    \centering
    \includegraphics[width=\linewidth]{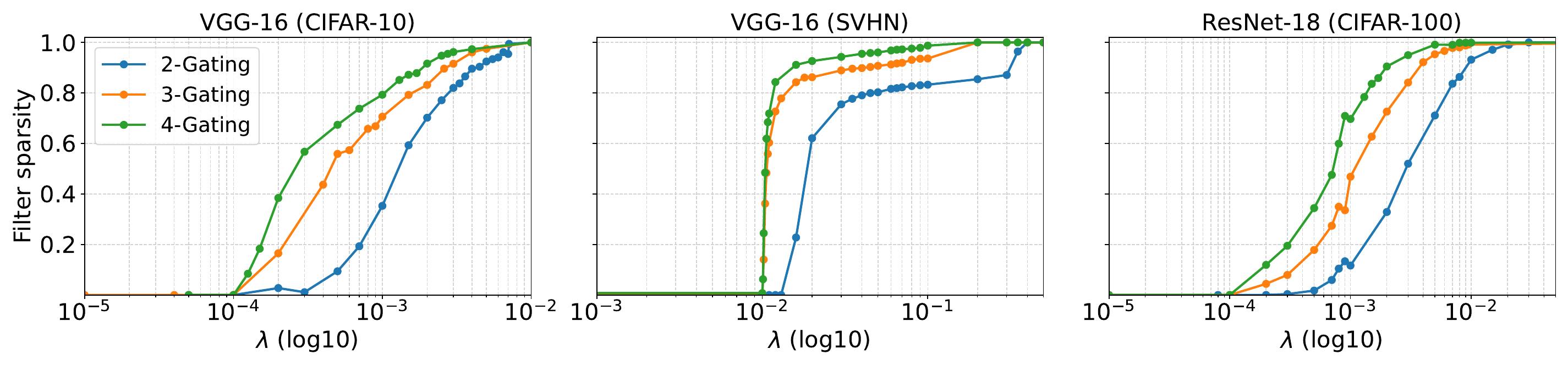}
    \caption[]{Effect of regularization $\lambda$ on filter sparsity for $D \in \{2,3,4\}$.}
    \label{fig:img-classif-lambdapath}
\end{figure}

\subsection{Overheads} \label{app:overheads-subset}

\paragraph{Parameter overhead} To quantify the parameter overhead incurred by applying $D$-Gating in our experiments, we contrast the number of additional parameters with the models' parameters. The results are shown in \cref{tab:param-overhead}, highlighting that larger architectures typically have fewer regularized groups (attention heads, convolutional filters) with large group sizes of thousands of parameters, suggesting that our approach scales better for larger than for small architectures by introducing relatively fewer additional parameters.
\begin{table}[h!]
\centering
\caption{Parameter count overhead for $D$-Gating. The linear model is the one from \cref{fig:linmod-comb}, the input-sparse LeNet-300-100, the one from \cref{fig:input-sparsity-comb}, VGG-16 from \cref{fig:img-classif-tradeoff}, and NanoGPT from \cref{fig:nanogpt-tinyshakes}. The overhead rate in the worst case ($D=4$ and the largest input dimension), is given by $\frac{\text{\#new params}}{\text{\#old params}}-1$.}
\label{tab:param-overhead}
\resizebox{0.98\textwidth}{!}{%
\begin{tabular}{@{}l l l l p{5.5cm}@{}}
\toprule
\textbf{Model} & \textbf{Vanilla Params} & \textbf{Additional Params} & \textbf{Overhead rate ($D=4$)} & \textbf{Details} \\
\midrule
Lin. Mod. & 200 & $40 \cdot (D{-}1)$ & $6.0 \times 10^{-1}$ & 40 groups of 5 features \\
LeNet-300-100 & $\approx 266{,}000$ & $\texttt{input\_dim} \cdot (D{-}1)$ & $8.8 \times 10^{-3}$ & $\texttt{input\_dim} \in [400, 784]$ \\
VGG-16 & $\approx 15$ mio. & $4,224 \cdot (D{-}1)$ & $8.4 \times 10^{-4}$ & Sum of conv. filters in 13 layers \\
NanoGPT & $\approx 10.8$ mio. & $72 \cdot (D{-}1)$ & $2.0 \times 10^{-5}$ & 6 layers, 12 att. heads 
\\\bottomrule
\end{tabular}
}
\end{table}
\paragraph{Runtime and memory overhead} Additionally, we measure the wall-clock runtime overhead of $D$-Gating for $D\in \{2,3,4\}$ against the vanilla models. Similarly, we record the peak GPU memory utilization during training. 
\begin{figure}[ht]
    \centering
    \includegraphics[width=0.9\linewidth]{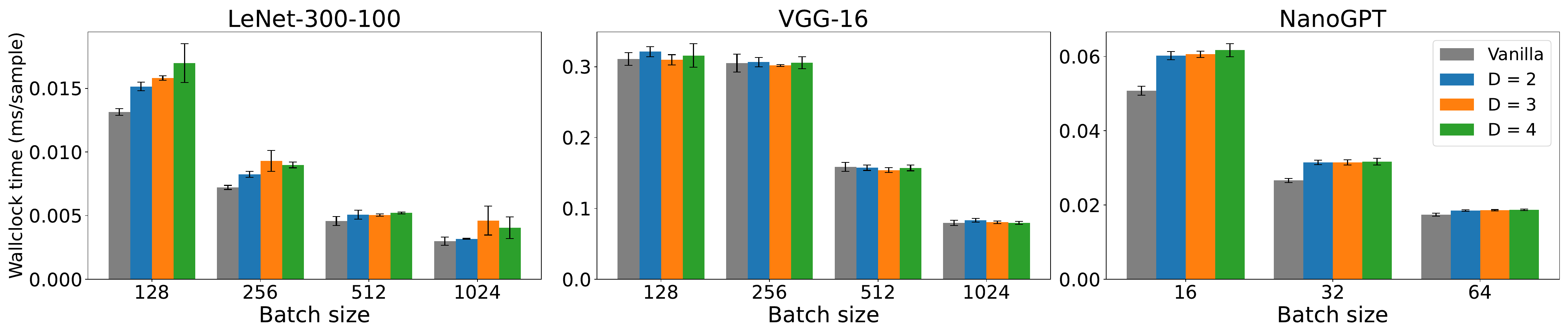}
    \caption{Wall-clock runtime for $D \in \{2,3,4\}$ and vanilla model. Means and std. over ten runs are shown.}
    \label{fig:img-wallclock-comb3}
\end{figure}

\begin{figure}[h!]
    \centering
    \includegraphics[width=0.9\linewidth]{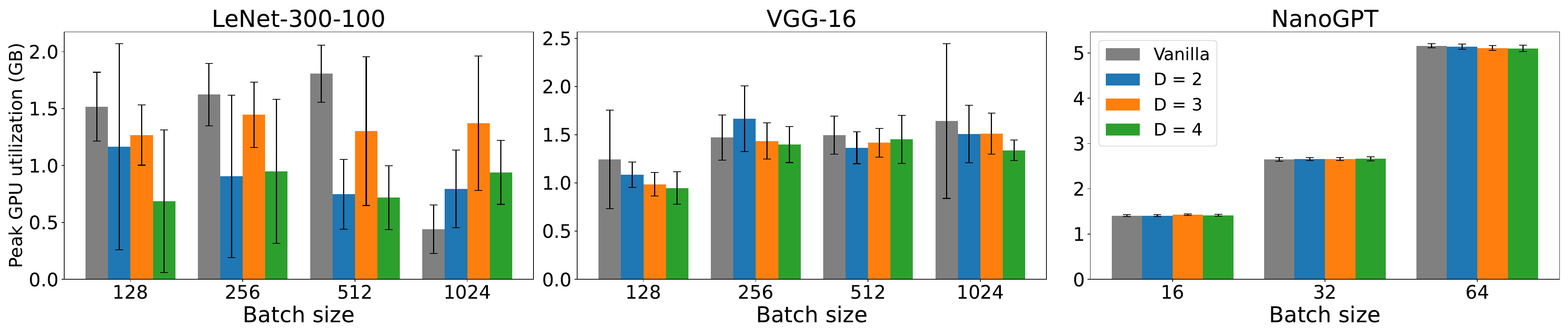}
    \caption{Peak memory use for $D \in \{2,3,4\}$ and vanilla model. Means and std. over ten runs are shown.}
    \label{fig:img-memory-comb3}
\end{figure}

The runtime results are shown in \cref{fig:img-wallclock-comb3}, showing slight to moderate increases between $0\%$ and $30\%$ for smaller batch sizes, with the difference becoming increasingly irrelevant for commonly used larger batch sizes as well as larger architectures.\\ 
Similarly, \cref{fig:img-memory-comb3} shows the memory overhead over vanilla models, indicating indiscernible utilization for the vast majority of settings. The behavior of the fully-connected LeNet-300-100 exhibits high variance and less clear patterns compared to the larger models, potentially due to its small size. Generally, part of the difference in runtime and memory overhead is likely due to implementation efficiency reasons and not necessarily an inherent difference.

\subsection{Depth and instability ablation studies}\label{app:depth-and-instability}

To guide the selection of the gating depth $D$ and demonstrate that the full benefits of non-convex $L_{2,q},\,q<1,$ regularization are usually attained at $q=2/3$ or $q=0.5$, corresponding to $D=3,4$ for $D$-Gating \cite{hu2017group}, we further conduct depth ablation studies for the input-sparse LeNet-300-100 from the experiment in \cref{fig:input-sparsity-comb} on ISOLET, extend the results for the filter-sparse VGG-16 on CIFAR-10 (\cref{fig:img-classif-tradeoff}), and analyze performance and instability for the group sparse regression task in \cref{fig:linmod-comb}.
 \begin{figure}[h!]
    \centering
    \includegraphics[width=0.9\linewidth]{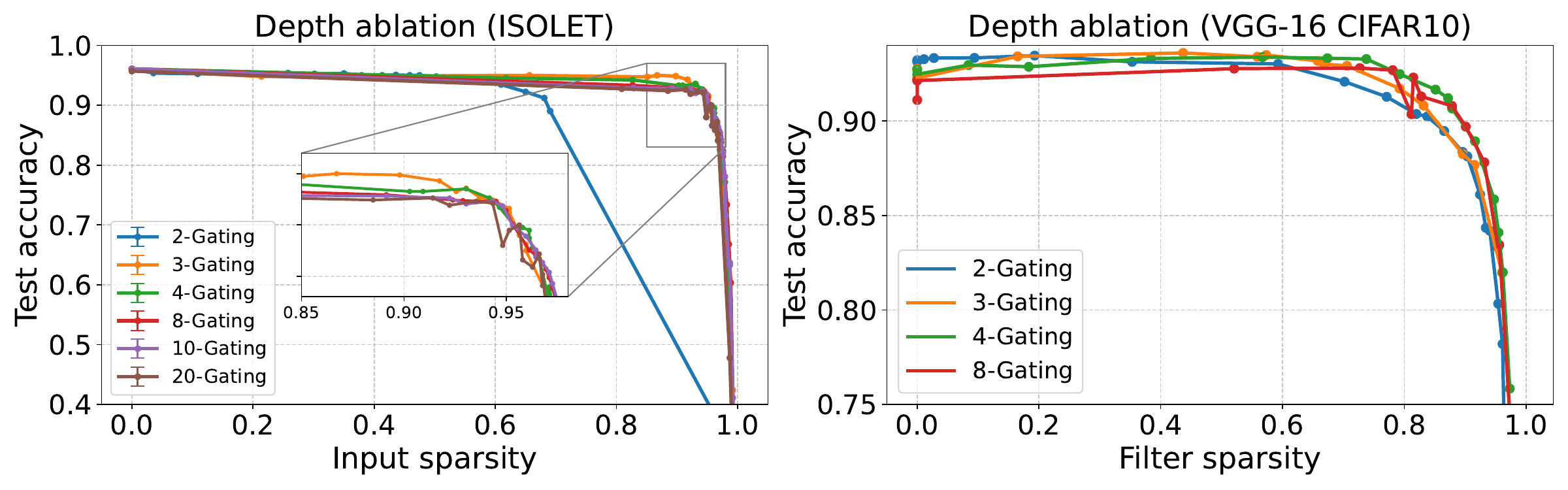}
    \caption[]{Effect of increasing the gating depth $D$ beyond $D=4$ for the input-sparse LeNet-300-100 on ISOLET as well as the filter-sparse VGG-16 trained on CIFAR-10.}
    \label{fig:depth-ablation-1}
\end{figure}

\cref{fig:depth-ablation-1} shows the results for LeNet-300-100 and VGG-16 for increasing depth up to $D=20$ and $D=8$, respectively. Importantly, we observe no measurable improvement in the performance-sparsity tradeoff beyond $D=4$, while the instability increases slightly.
 \begin{figure}[h!]
    \centering
    \includegraphics[width=0.9\linewidth]{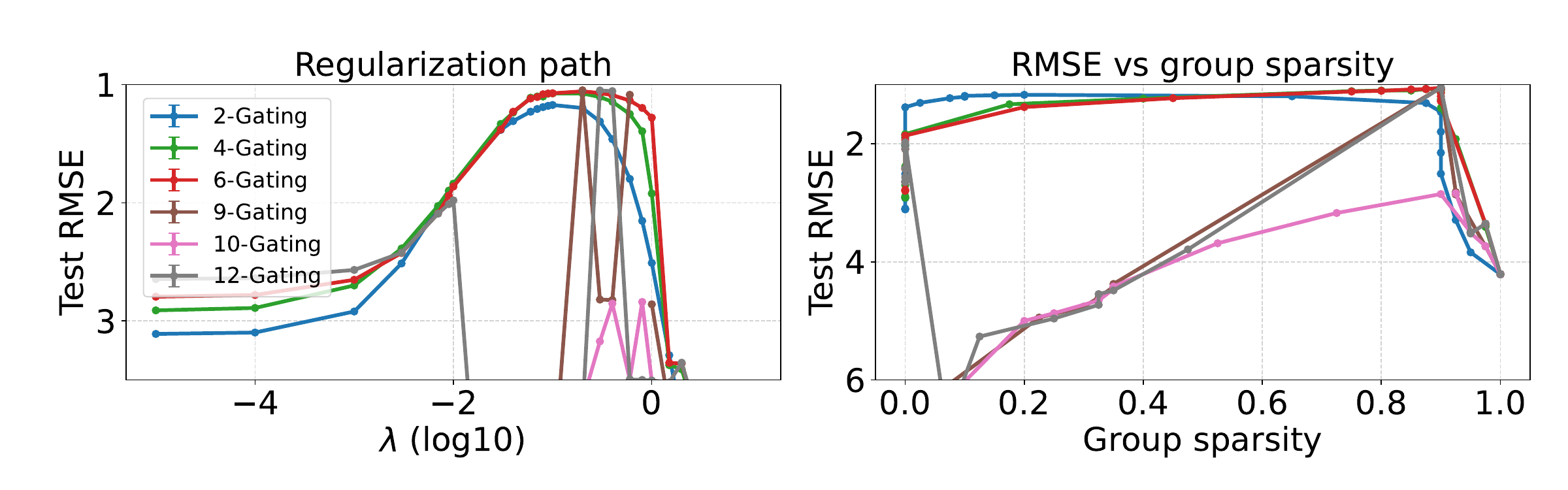}
    \caption[]{Effect of increasing the gating depth $D$ beyond $D=4$ on the group-sparse linear regression task. The left panel shows the regularization paths, and the right panel the performance-sparsity tradeoffs. The setup is the same as in \cref{fig:linmod-comb}.}
    \label{fig:depth-ablation-2}
\end{figure}

For the group sparse linear regression task shown in \cref{fig:depth-ablation-2}, the left panel illustrates the numerical instability caused by too large $D$, evidenced by the jagged regularization paths for $D>6$. The tradeoff curves in the right panel reveal that these instabilities degrade the performance at lower sparsity values for large $D$.

Taken together, the values of $q \in \{2/3, 0.5\}$, i.e., $D \in \{3,4\}$, recommended in the literature are supported by our ablation studies, where deeper gating approaches show at best equivalent performance while increasing numerical instability.

\subsection{Imbalance decay and loss convergence for SGD and Adam}\label{app:subsec-imbalance-decay-sgd-adam}

The convergence of the $D$-Gating objective $\Log$ to the non-smooth regularized objective $\Lw$, as shown in \cref{fig:imbalance_lenet300100}, is directly related to the convergence of the regularizers of both objectives, as well as the balancedness of the gating representations. To see this, note that
\begin{align}
    \Log - \Lw &= \Lnull(\ob \gating \gbodot) + \lambda (\|\ob \|_2^2+\|\gb\|_F^2)/D-\Lnull(\wb)-\lambda \| \wb \|_{2,2/D}^{2/D} \nonumber \\
    &= \lambda( (\|\ob \|_2^2+\|\gb\|_F^2)/D -   \| \wb \|_{2,2/D}^{2/D}) \nonumber \\
    &= \lambda(\mathcal{R}(\ob,\gb) - \Rw(\wb)) \nonumber \\
    &= \lambda \mathcal{M}(\ob,\gb),
\end{align}
Hence, the losses converge if the regularizers converge, which, by the AM-GM inequality, happens if and only if the gating representations are balanced. This relationship is verified in \cref{fig:decay-misalignment-sgd}, showing the same experiment as in \cref{fig:imbalance_lenet300100}, but plotting the convergence of regularizers instead of losses.
\begin{figure}[h!]
    \centering
    \includegraphics[width=\linewidth]{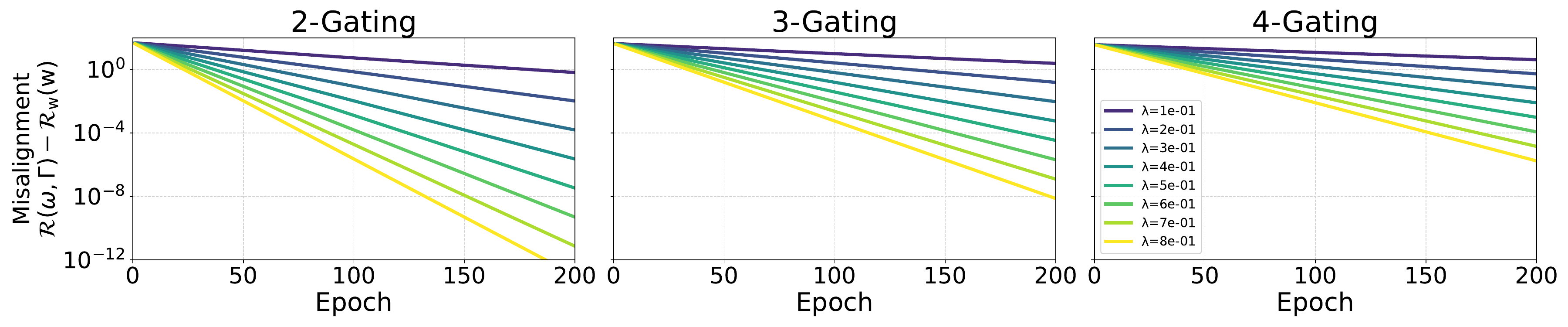}
    \caption{Convergence of regularizers, i.e., evolution of misalignment, instead of loss convergence for SGD trained on a neuron-sparse LeNet-300-100. Setting is the same as for \cref{fig:imbalance_lenet300100}.}
    \label{fig:decay-misalignment-sgd}
\end{figure}

Another interesting question is to empirically investigate the evolution of misalignment for a common optimizer, for which the gradient flow and descent results \cref{lemma:loss_conv} and \cref{lem:imbalance_evolution_sgd} do not apply, i.e., Adam \cite{kingma2014adam}. \cref{fig:decay-misalignment-adam} shows the evolution of misalignment for the same setting as the previous \cref{fig:decay-misalignment-sgd}, but optimized with Adam instead of SGD. Results show that the $D$-Gating objective also converges to the sparsity-inducing objective $\Lw$ faster than SGD overall, albeit not at a neat exponential rate. Moreover, the convergence speeds do not seem to depend as strongly on $D$ as for SGD. 
\begin{figure}[ht]
    \centering
    \includegraphics[width=\linewidth]{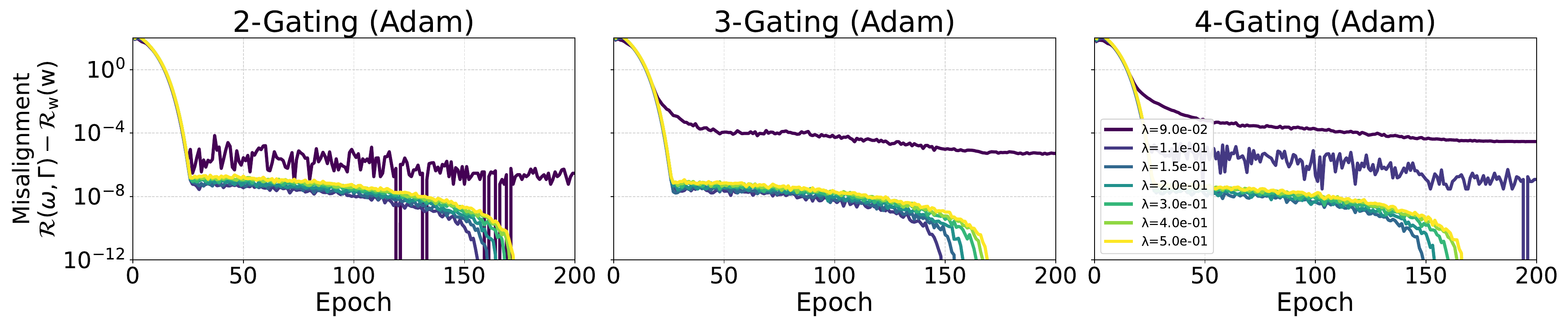}
    \caption{Convergence of regularizers with Adam instead of SGD trained on a neuron-sparse LeNet-300-100. Setting is the same as for \cref{fig:imbalance_lenet300100}.}
    \label{fig:decay-misalignment-adam}
\end{figure}

\subsection{Subnetwork selection}\label{app:subset-subnetwork}

In an additional application, we investigate whether $D$-Gating can correctly select whole data modalities in a simple late-fusion multimodal network \cite{gadzicki2020early}. In these architectures, the modalities are processed independently in separate subnetworks before their latent representations are fused in the penultimate late-fusion layer. \cref{fig:modsec-gated-schematic} visualizes such an architecture as well as our $D$-Gating proposal for differentiable selection of modalities. In the $D$-Gating variant, the final-layer weights are grouped by modality and subsequently gated, effectively selecting out whole modality subnetworks.
\begin{figure}[h!]
    \centering
    \includegraphics[width=0.5\linewidth]{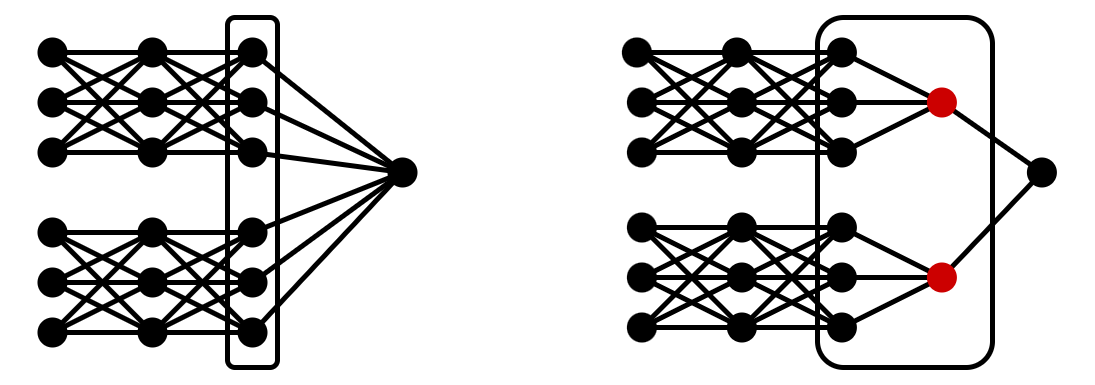}
    \caption[]{Schematic illustration of modality selection in a late-fusion multimodal network using $D$-Gating overparameterization. \textbf{Left}: original late-fusion architecture. \textbf{Right}: architecture including $D$-Gating of the late-fusion layer, where the red nodes indicate the location of the gating factors.}
    \label{fig:modsec-gated-schematic}
\end{figure}
To control the contribution of the modalities, we simulate a semi-synthetic dataset combining both tabular and image information additively in a predictor. To be precise, we simulate targets as 
$$Y = (1-\alpha) \cdot \eta^{\text{tabular}} + \alpha \cdot \eta^{\text{image}} + \varepsilon\,,\quad \varepsilon \sim \mathcal{N}(0,1)\,,\, \alpha \in \{0,0.5,1\}.$$
The image predictor is based on the MNIST dataset of handwritten images and is simply the (demeaned) face value of the digit contained in the image. The tabular predictor is given by $\eta^{\text{tabuar}} = \sum_{k=1}^{6} f_k(x_k)\,,\quad x_k \sim \mathcal{U}[-1,1]\,,$ where the $f_k(\cdot)$ are fixed smooth non-linear shape functions. Both predictors are constructed to be similarly distributed. We simulate $n=2000$ training and test observations from the above data-generating process. In the late-fusion architecture, we use a small VGG-style subnetwork with 4 convolutional layers for the MNIST images and a shallow subnetwork to model the non-linear effects $f_k(\cdot)$ of the tabular covariates $x_k$. The models are optimized for $300$ epochs using SGD with $0.8$ momentum, a batch size of $32$, and a learning rate of $5 \times 10^{-3}$.
\begin{figure*}[ht]
    \centering
    \includegraphics[width=\linewidth]{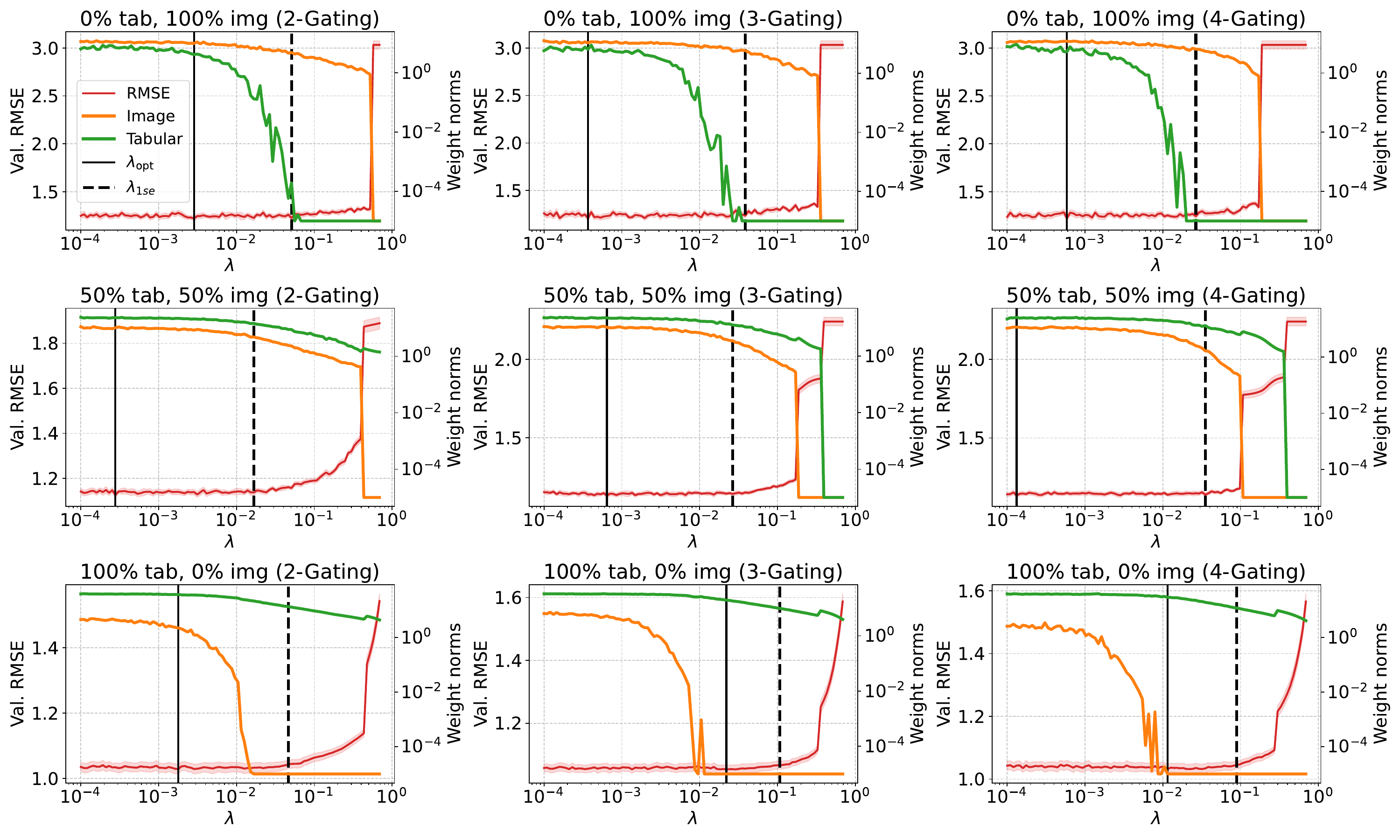}
    \caption[]{Regularization path and weight norms in multi-modal late-fusion network for different modality contributions (\textbf{rows}) and $D$-Gating depth $D \in \{2,3,4\}$ (\textbf{columns}). Norms below $10^{-5}$ are clipped. The solid vertical bar indicates the validation-optimal $\lambda_{opt}$ and the dashed vertical bar the $\lambda_{1se}$ chosen by the 1se rule, i.e., the sparsest model that performs within one standard error of the lowest validation RMSE (10 splits).}
    \label{fig:mod_sec}
\end{figure*}
\cref{fig:mod_sec} shows the results for $D \in \{ 2,3,4\}$ and $\alpha \in \{0,0.5,1\}$. As expected, using the one standard error rule to select the regularization strength $\lambda$, $D$-Gating can consistently select out the unimportant modalities, with the separation of important and unimportant modalities becoming clearer as $D$ increases. Notably, for equal modality contributions (middle row), neither the validation-optimal $\lambda_{opt}$ nor $\lambda_{1se}$ selects out any of the modalities.


\subsection{Differentiable sparse neural additive models}\label{app:subsec-sparse_nams}

In another application, we show that $D$-Gating can be seamlessly applied to achieve differentiable shape function sparsity in Neural Additive Models (NAMs) \cite{agarwal2021neural}, which we call $D$-SNAMs. In NAMs, each feature is processed independently in a separate subnetwork to learn a flexible shape function non-parametrically. Then, the learned functions are combined additively to obtain the final inherently interpretable predictor. \cref{fig:snams-gated-schematic} visualizes such an architecture as well as our $D$-Gating proposal for differentiable shape function selection in NAMs. In the $D$-Gating variant, all outgoing first-layer weights from each feature are grouped and subsequently gated, effectively selecting out shape functions altogether. A related approach we compare against, Sparse Neural Additive Models (SNAMs), imposes a non-differentiable $L_{2,1}$ group lasso penalty on the shape function weights and proceeds by direct optimization using SGD, i.e., its subgradient variant \cite{xu2023sparse}. We further compare against direct optimization using SGD with an $L_1$ or $L_2$ penalty on the same weights. 
\begin{figure}[h!]
    \centering
    \includegraphics[width=0.7\linewidth]{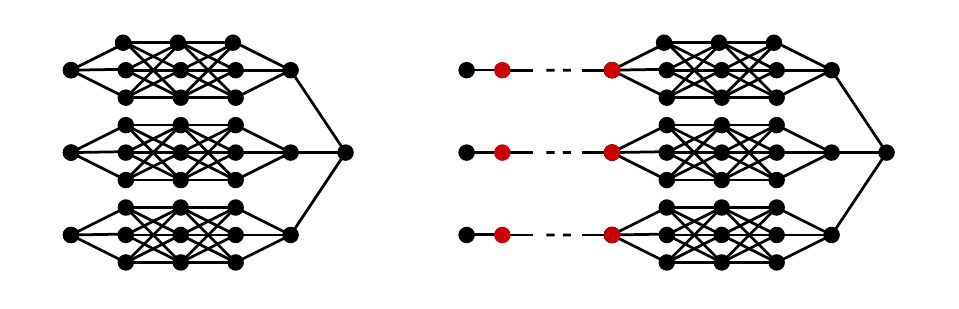}
    \caption[]{Schematic illustration of differentiable shape function selection in Neural Additive Models using $D$-Gating overparameterization. \textbf{Left}: original NAM architecture. \textbf{Right}: differentiable sparse $D$-SNAM architecture including $D$-Gating for each feature, where the red nodes indicate the locations of the gating factors.}
    \label{fig:snams-gated-schematic}
\end{figure}

We investigate the behavior of $D$-SNAM, i.e., $D$-Gating in NAMs, on synthetic data simulated as follows. The sparse non-linear additive data-generating process (DGP) for the response is given by

\begin{equation*}
   Y = \sin(x_1) + \cos(x_2) + x_3^2 - x_4 - |x_5| + \textstyle\sum_{j=6}^{20} 0 \cdot x_j + \varepsilon\,, \quad \varepsilon \sim \mathcal{N}(0,1)\,, 
\end{equation*}

with five informative features $x_1, \ldots, x_5$ and additional noise covariates $x_6, \ldots, x_{20}$, all drawn independently from $\mathcal{N}(0,1)$. Thus, only the first five features contribute a non-zero signal, while the remaining 15 are uninformative noise. We draw $n_{\text{total}}=1000$ independent samples from the DGP, and perform 5-fold cross-validation (CV) to obtain mean performance and sparsity metrics, together with standard errors. The regularization strength $\lambda$ varies on a grid between $10^{-2}$ and $10^1$. 

The NAMs are defined to have identical shape function subnetworks with layers of size $(100, 100, 64, 1)$, respectively, ReLU activations for the hidden layers, and batch normalization \cite{ioffe2015batch} after each hidden layer. Optimization of the squared loss is performed for $1000$ epochs using Adam optimizer with the default learning rate $10^{-3}$ and batch size $32$ without early stopping. Shape functions with a weight norm smaller than $10^{-3}$ are considered zero.

\begin{figure*}[ht]
    \centering
    \includegraphics[width=\linewidth]{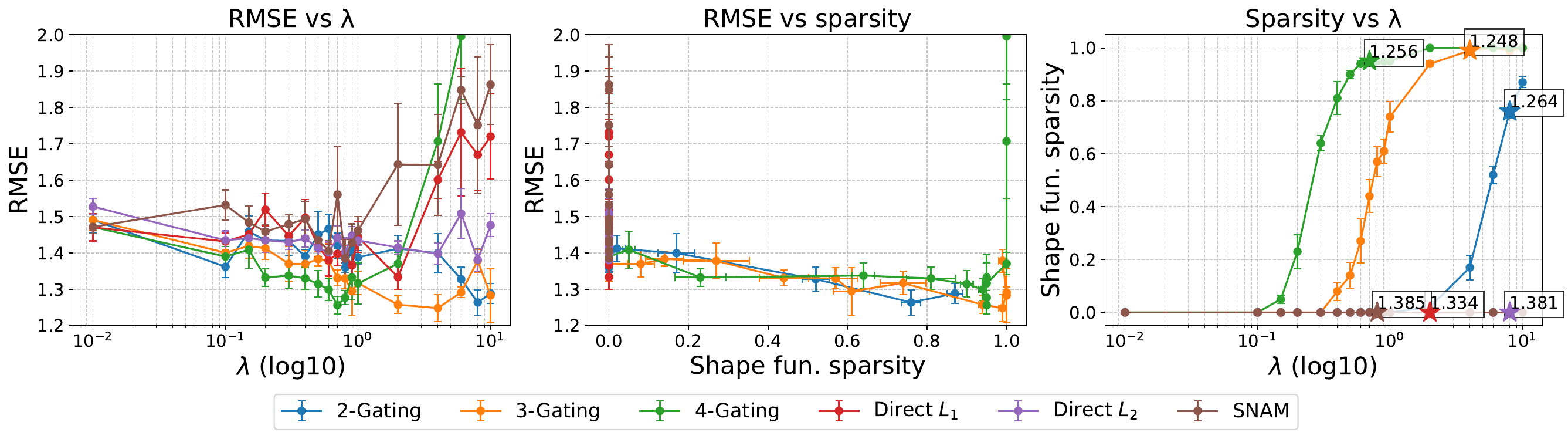}
    \caption[]{Performance of sparse NAMs with differentiable $D$-Gating for $D \in \{2,3,4\}$, compared against $L_1$, $L_2$, and $L_{2,1}$ (SNAM) baselines. Means and standard errors for 5-fold cross-validation are reported. The annotated stars in the right plot indicate the CV-optimal models for each method and their CV error.}
    \label{fig:snam_res}
\end{figure*}

\cref{fig:snam_res} shows the results for $D \in \{2,3,4\}$ as well as the comparison methods SNAM and direct $L_1$ and $L_2$ regularization. Across settings, $D$-Gating achieves differentiable shape function sparsity while SNAM is not able to shrink the weights sufficiently and demands a post-training pruning step. Further, $D$-SNAM effectively removes irrelevant shape functions while retaining informative feature effects. For non-convex induced regularization using $D>2$, the CV-optimal models are much sparser than both $D=2$ and the dense SNAM results. Moreover, compared to the direct optimization used in the baselines, $D$-Gating with differentiable group sparsity achieves a markedly lower CV error for $D \in \{3,4\}$ (left plot in \cref{fig:snam_res}).



\section*{NeurIPS Paper Checklist}

\begin{enumerate}

\item {\bf Claims}
    \item[] Question: Do the main claims made in the abstract and introduction accurately reflect the paper's contributions and scope?
    \item[] Answer: \answerYes{} 
    \item[] Justification: Yes, we clearly state our claims in the abstract and contribution subsection. The claims are then further substantiated in the main sections using formal proofs validated using experiments.
    \item[] Guidelines:
    \begin{itemize}
        \item The answer NA means that the abstract and introduction do not include the claims made in the paper.
        \item The abstract and/or introduction should clearly state the claims made, including the contributions made in the paper and important assumptions and limitations. A No or NA answer to this question will not be perceived well by the reviewers. 
        \item The claims made should match theoretical and experimental results, and reflect how much the results can be expected to generalize to other settings. 
        \item It is fine to include aspirational goals as motivation as long as it is clear that these goals are not attained by the paper. 
    \end{itemize}

\item {\bf Limitations}
    \item[] Question: Does the paper discuss the limitations of the work performed by the authors?
    \item[] Answer: \answerYes{} 
    \item[] Justification: We include a specific subsection on the limitations of our proposed method in our discussion section and also discuss limitations of our theoretical analysis and experiments in the main sections.
    \item[] Guidelines:
    \begin{itemize}
        \item The answer NA means that the paper has no limitation while the answer No means that the paper has limitations, but those are not discussed in the paper. 
        \item The authors are encouraged to create a separate "Limitations" section in their paper.
        \item The paper should point out any strong assumptions and how robust the results are to violations of these assumptions (e.g., independence assumptions, noiseless settings, model well-specification, asymptotic approximations only holding locally). The authors should reflect on how these assumptions might be violated in practice and what the implications would be.
        \item The authors should reflect on the scope of the claims made, e.g., if the approach was only tested on a few datasets or with a few runs. In general, empirical results often depend on implicit assumptions, which should be articulated.
        \item The authors should reflect on the factors that influence the performance of the approach. For example, a facial recognition algorithm may perform poorly when image resolution is low or images are taken in low lighting. Or a speech-to-text system might not be used reliably to provide closed captions for online lectures because it fails to handle technical jargon.
        \item The authors should discuss the computational efficiency of the proposed algorithms and how they scale with dataset size.
        \item If applicable, the authors should discuss possible limitations of their approach to address problems of privacy and fairness.
        \item While the authors might fear that complete honesty about limitations might be used by reviewers as grounds for rejection, a worse outcome might be that reviewers discover limitations that aren't acknowledged in the paper. The authors should use their best judgment and recognize that individual actions in favor of transparency play an important role in developing norms that preserve the integrity of the community. Reviewers will be specifically instructed to not penalize honesty concerning limitations.
    \end{itemize}

\item {\bf Theory assumptions and proofs}
    \item[] Question: For each theoretical result, does the paper provide the full set of assumptions and a complete (and correct) proof?
    \item[] Answer: \answerYes{} 
    \item[] Justification: We provide complete proofs with the full set of assumptions in \cref{app:proofs}, with the proven statements containing the full assumptions.
    \item[] Guidelines:
    \begin{itemize}
        \item The answer NA means that the paper does not include theoretical results. 
        \item All the theorems, formulas, and proofs in the paper should be numbered and cross-referenced.
        \item All assumptions should be clearly stated or referenced in the statement of any theorems.
        \item The proofs can either appear in the main paper or the supplemental material, but if they appear in the supplemental material, the authors are encouraged to provide a short proof sketch to provide intuition. 
        \item Inversely, any informal proof provided in the core of the paper should be complemented by formal proofs provided in appendix or supplemental material.
        \item Theorems and Lemmas that the proof relies upon should be properly referenced. 
    \end{itemize}

    \item {\bf Experimental result reproducibility}
    \item[] Question: Does the paper fully disclose all the information needed to reproduce the main experimental results of the paper to the extent that it affects the main claims and/or conclusions of the paper (regardless of whether the code and data are provided or not)?
    \item[] Answer: \answerYes{} 
    \item[] Justification: We provide a detailed description of our experimental set-up for all experiments, including hyperparameter choices and their justification in \cref{app:exp-details}. All our experiments are done on well-known and openly accessible benchmark datasets. We also formalize our method in \cref{alg:d-gating-train} for reproducibility.
    \item[] Guidelines:
    \begin{itemize}
        \item The answer NA means that the paper does not include experiments.
        \item If the paper includes experiments, a No answer to this question will not be perceived well by the reviewers: Making the paper reproducible is important, regardless of whether the code and data are provided or not.
        \item If the contribution is a dataset and/or model, the authors should describe the steps taken to make their results reproducible or verifiable. 
        \item Depending on the contribution, reproducibility can be accomplished in various ways. For example, if the contribution is a novel architecture, describing the architecture fully might suffice, or if the contribution is a specific model and empirical evaluation, it may be necessary to either make it possible for others to replicate the model with the same dataset, or provide access to the model. In general. releasing code and data is often one good way to accomplish this, but reproducibility can also be provided via detailed instructions for how to replicate the results, access to a hosted model (e.g., in the case of a large language model), releasing of a model checkpoint, or other means that are appropriate to the research performed.
        \item While NeurIPS does not require releasing code, the conference does require all submissions to provide some reasonable avenue for reproducibility, which may depend on the nature of the contribution. For example
        \begin{enumerate}
            \item If the contribution is primarily a new algorithm, the paper should make it clear how to reproduce that algorithm.
            \item If the contribution is primarily a new model architecture, the paper should describe the architecture clearly and fully.
            \item If the contribution is a new model (e.g., a large language model), then there should either be a way to access this model for reproducing the results or a way to reproduce the model (e.g., with an open-source dataset or instructions for how to construct the dataset).
            \item We recognize that reproducibility may be tricky in some cases, in which case authors are welcome to describe the particular way they provide for reproducibility. In the case of closed-source models, it may be that access to the model is limited in some way (e.g., to registered users), but it should be possible for other researchers to have some path to reproducing or verifying the results.
        \end{enumerate}
    \end{itemize}

\item {\bf Open access to data and code}
    \item[] Question: Does the paper provide open access to the data and code, with sufficient instructions to faithfully reproduce the main experimental results, as described in supplemental material?
    \item[] Answer: \answerYes{} 
    \item[] Justification: All used datasets are publicly available and the surce code for the experiments and baselines will be submitted in the supplementary materials. We aim to collect the code in a publicly accessible Git repository as soon as possible.
    \item[] Guidelines:
    \begin{itemize}
        \item The answer NA means that paper does not include experiments requiring code.
        \item Please see the NeurIPS code and data submission guidelines (\url{https://nips.cc/public/guides/CodeSubmissionPolicy}) for more details.
        \item While we encourage the release of code and data, we understand that this might not be possible, so “No” is an acceptable answer. Papers cannot be rejected simply for not including code, unless this is central to the contribution (e.g., for a new open-source benchmark).
        \item The instructions should contain the exact command and environment needed to run to reproduce the results. See the NeurIPS code and data submission guidelines (\url{https://nips.cc/public/guides/CodeSubmissionPolicy}) for more details.
        \item The authors should provide instructions on data access and preparation, including how to access the raw data, preprocessed data, intermediate data, and generated data, etc.
        \item The authors should provide scripts to reproduce all experimental results for the new proposed method and baselines. If only a subset of experiments are reproducible, they should state which ones are omitted from the script and why.
        \item At submission time, to preserve anonymity, the authors should release anonymized versions (if applicable).
        \item Providing as much information as possible in supplemental material (appended to the paper) is recommended, but including URLs to data and code is permitted.
    \end{itemize}

\item {\bf Experimental setting/details}
    \item[] Question: Does the paper specify all the training and test details (e.g., data splits, hyperparameters, how they were chosen, type of optimizer, etc.) necessary to understand the results?
    \item[] Answer: \answerYes{} 
    \item[] Justification: We provide a detailed description of our experimental setup, used hyperparameters, and their justification in \cref{app:exp-details}. Finer details can be found in the full code submitted in the supplementary material.
    \item[] Guidelines:
    \begin{itemize}
        \item The answer NA means that the paper does not include experiments.
        \item The experimental setting should be presented in the core of the paper to a level of detail that is necessary to appreciate the results and make sense of them.
        \item The full details can be provided either with the code, in appendix, or as supplemental material.
    \end{itemize}

\item {\bf Experiment statistical significance}
    \item[] Question: Does the paper report error bars suitably and correctly defined or other appropriate information about the statistical significance of the experiments?
    \item[] Answer: \answerYes{} 
    \item[] Justification: We provide error bars and their description (e.g., standard deviations, standard errors, or confidence intervals) for all experiments where multiple runs are not prohibitively expensive (e.g., NanoGPT). 
    \item[] Guidelines:
    \begin{itemize}
        \item The answer NA means that the paper does not include experiments.
        \item The authors should answer "Yes" if the results are accompanied by error bars, confidence intervals, or statistical significance tests, at least for the experiments that support the main claims of the paper.
        \item The factors of variability that the error bars are capturing should be clearly stated (for example, train/test split, initialization, random drawing of some parameter, or overall run with given experimental conditions).
        \item The method for calculating the error bars should be explained (closed form formula, call to a library function, bootstrap, etc.)
        \item The assumptions made should be given (e.g., Normally distributed errors).
        \item It should be clear whether the error bar is the standard deviation or the standard error of the mean.
        \item It is OK to report 1-sigma error bars, but one should state it. The authors should preferably report a 2-sigma error bar than state that they have a 96\% CI, if the hypothesis of Normality of errors is not verified.
        \item For asymmetric distributions, the authors should be careful not to show in tables or figures symmetric error bars that would yield results that are out of range (e.g. negative error rates).
        \item If error bars are reported in tables or plots, The authors should explain in the text how they were calculated and reference the corresponding figures or tables in the text.
    \end{itemize}

\item {\bf Experiments compute resources}
    \item[] Question: For each experiment, does the paper provide sufficient information on the computer resources (type of compute workers, memory, time of execution) needed to reproduce the experiments?
    \item[] Answer: \answerYes{} 
    \item[] Justification: We specify our computational environment in \cref{app:computational_environment} and provide an estimate of the total runtime of the experiments.
    \item[] Guidelines:
    \begin{itemize}
        \item The answer NA means that the paper does not include experiments.
        \item The paper should indicate the type of compute workers CPU or GPU, internal cluster, or cloud provider, including relevant memory and storage.
        \item The paper should provide the amount of compute required for each of the individual experimental runs as well as estimate the total compute. 
        \item The paper should disclose whether the full research project required more compute than the experiments reported in the paper (e.g., preliminary or failed experiments that didn't make it into the paper). 
    \end{itemize}
    
\item {\bf Code of ethics}
    \item[] Question: Does the research conducted in the paper conform, in every respect, with the NeurIPS Code of Ethics \url{https://neurips.cc/public/EthicsGuidelines}?
    \item[] Answer: \answerYes{} 
    \item[] Justification: 
    \item[] Guidelines:
    \begin{itemize}
        \item The answer NA means that the authors have not reviewed the NeurIPS Code of Ethics.
        \item If the authors answer No, they should explain the special circumstances that require a deviation from the Code of Ethics.
        \item The authors should make sure to preserve anonymity (e.g., if there is a special consideration due to laws or regulations in their jurisdiction).
    \end{itemize}

\item {\bf Broader impacts}
    \item[] Question: Does the paper discuss both potential positive societal impacts and negative societal impacts of the work performed?
    \item[] Answer: \answerNA{} 
    \item[] Justification:
    \item[] Guidelines:
    \begin{itemize}
        \item The answer NA means that there is no societal impact of the work performed.
        \item If the authors answer NA or No, they should explain why their work has no societal impact or why the paper does not address societal impact.
        \item Examples of negative societal impacts include potential malicious or unintended uses (e.g., disinformation, generating fake profiles, surveillance), fairness considerations (e.g., deployment of technologies that could make decisions that unfairly impact specific groups), privacy considerations, and security considerations.
        \item The conference expects that many papers will be foundational research and not tied to particular applications, let alone deployments. However, if there is a direct path to any negative applications, the authors should point it out. For example, it is legitimate to point out that an improvement in the quality of generative models could be used to generate deepfakes for disinformation. On the other hand, it is not needed to point out that a generic algorithm for optimizing neural networks could enable people to train models that generate Deepfakes faster.
        \item The authors should consider possible harms that could arise when the technology is being used as intended and functioning correctly, harms that could arise when the technology is being used as intended but gives incorrect results, and harms following from (intentional or unintentional) misuse of the technology.
        \item If there are negative societal impacts, the authors could also discuss possible mitigation strategies (e.g., gated release of models, providing defenses in addition to attacks, mechanisms for monitoring misuse, mechanisms to monitor how a system learns from feedback over time, improving the efficiency and accessibility of ML).
    \end{itemize}
    
\item {\bf Safeguards}
    \item[] Question: Does the paper describe safeguards that have been put in place for responsible release of data or models that have a high risk for misuse (e.g., pretrained language models, image generators, or scraped datasets)?
    \item[] Answer: \answerNA{} 
    \item[] Justification: 
    \item[] Guidelines:
    \begin{itemize}
        \item The answer NA means that the paper poses no such risks.
        \item Released models that have a high risk for misuse or dual-use should be released with necessary safeguards to allow for controlled use of the model, for example by requiring that users adhere to usage guidelines or restrictions to access the model or implementing safety filters. 
        \item Datasets that have been scraped from the Internet could pose safety risks. The authors should describe how they avoided releasing unsafe images.
        \item We recognize that providing effective safeguards is challenging, and many papers do not require this, but we encourage authors to take this into account and make a best faith effort.
    \end{itemize}

\item {\bf Licenses for existing assets}
    \item[] Question: Are the creators or original owners of assets (e.g., code, data, models), used in the paper, properly credited and are the license and terms of use explicitly mentioned and properly respected?
    \item[] Answer: \answerYes{} 
    \item[] Justification: We reference all datasets and disclose the version of the used packages in the dependencies of the submitted code.
    \item[] Guidelines:
    \begin{itemize}
        \item The answer NA means that the paper does not use existing assets.
        \item The authors should cite the original paper that produced the code package or dataset.
        \item The authors should state which version of the asset is used and, if possible, include a URL.
        \item The name of the license (e.g., CC-BY 4.0) should be included for each asset.
        \item For scraped data from a particular source (e.g., website), the copyright and terms of service of that source should be provided.
        \item If assets are released, the license, copyright information, and terms of use in the package should be provided. For popular datasets, \url{paperswithcode.com/datasets} has curated licenses for some datasets. Their licensing guide can help determine the license of a dataset.
        \item For existing datasets that are re-packaged, both the original license and the license of the derived asset (if it has changed) should be provided.
        \item If this information is not available online, the authors are encouraged to reach out to the asset's creators.
    \end{itemize}

\item {\bf New assets}
    \item[] Question: Are new assets introduced in the paper well documented and is the documentation provided alongside the assets?
    \item[] Answer: \answerNA{} 
    \item[] Justification:
    \item[] Guidelines:
    \begin{itemize}
        \item The answer NA means that the paper does not release new assets.
        \item Researchers should communicate the details of the dataset/code/model as part of their submissions via structured templates. This includes details about training, license, limitations, etc. 
        \item The paper should discuss whether and how consent was obtained from people whose asset is used.
        \item At submission time, remember to anonymize your assets (if applicable). You can either create an anonymized URL or include an anonymized zip file.
    \end{itemize}

\item {\bf Crowdsourcing and research with human subjects}
    \item[] Question: For crowdsourcing experiments and research with human subjects, does the paper include the full text of instructions given to participants and screenshots, if applicable, as well as details about compensation (if any)? 
    \item[] Answer: \answerNA{} 
    \item[] Justification:
    \item[] Guidelines:
    \begin{itemize}
        \item The answer NA means that the paper does not involve crowdsourcing nor research with human subjects.
        \item Including this information in the supplemental material is fine, but if the main contribution of the paper involves human subjects, then as much detail as possible should be included in the main paper. 
        \item According to the NeurIPS Code of Ethics, workers involved in data collection, curation, or other labor should be paid at least the minimum wage in the country of the data collector. 
    \end{itemize}

\item {\bf Institutional review board (IRB) approvals or equivalent for research with human subjects}
    \item[] Question: Does the paper describe potential risks incurred by study participants, whether such risks were disclosed to the subjects, and whether Institutional Review Board (IRB) approvals (or an equivalent approval/review based on the requirements of your country or institution) were obtained?
    \item[] Answer: \answerNA{} 
    \item[] Justification: 
    \item[] Guidelines:
    \begin{itemize}
        \item The answer NA means that the paper does not involve crowdsourcing nor research with human subjects.
        \item Depending on the country in which research is conducted, IRB approval (or equivalent) may be required for any human subjects research. If you obtained IRB approval, you should clearly state this in the paper. 
        \item We recognize that the procedures for this may vary significantly between institutions and locations, and we expect authors to adhere to the NeurIPS Code of Ethics and the guidelines for their institution. 
        \item For initial submissions, do not include any information that would break anonymity (if applicable), such as the institution conducting the review.
    \end{itemize}

\item {\bf Declaration of LLM usage}
    \item[] Question: Does the paper describe the usage of LLMs if it is an important, original, or non-standard component of the core methods in this research? Note that if the LLM is used only for writing, editing, or formatting purposes and does not impact the core methodology, scientific rigorousness, or originality of the research, declaration is not required.
    \item[] Answer: \answerNA{} 
    \item[] Justification: 
    \item[] Guidelines:
    \begin{itemize}
        \item The answer NA means that the core method development in this research does not involve LLMs as any important, original, or non-standard components.
        \item Please refer to our LLM policy (\url{https://neurips.cc/Conferences/2025/LLM}) for what should or should not be described.
    \end{itemize}

\end{enumerate}

\end{document}